\begin{document}

\title{Partial Order in Chaos: Consensus on Feature Attributions in the Rashomon Set.}

\author{
    \name $\!\!$Gabriel Laberge\normalfont{\textsuperscript{1}}\email gabriel.laberge@polymtl.ca \\
    \name Yann Pequignot\normalfont{\textsuperscript{2}} \email yann.pequignot@iid.ulaval.ca \\
    \name Alexandre Mathieu\normalfont{\textsuperscript{2}} \email alexandre.mathieu.7@ulaval.ca \\
    \name Foutse Khomh\normalfont{\textsuperscript{1}} \email foutse.khomh@polymtl.ca \\
    \name Mario Marchand\normalfont{\textsuperscript{2}} \email mario.marchand@ift.ulaval.ca \\
    \addr \normalfont{\textsuperscript{1}}Génie Informatique et Génie Logiciel, Polytechnique Montréal\\
    \normalfont{\textsuperscript{2}}Institut Intelligence et Données, Université de Laval à Québec\\
    \newline
    \textbf{Published at : \url{https://www.jmlr.org/papers/v24/23-0149.html}}
}


\maketitle

\begin{abstract}%
Post-hoc global/local feature attribution methods are progressively being employed to 
understand the decisions of complex machine learning models. Yet, because of limited amounts of data, 
it is possible to obtain a diversity of models with good empirical performance but that 
provide very different explanations for the same prediction, making it hard to derive insight from them.
In this work, instead of aiming at reducing the under-specification of model explanations, we fully 
embrace it and extract logical statements about feature attributions that are consistent across all 
models with good empirical performance (\ie all models in the Rashomon Set).
We show that $\textbf{partial}$ orders of local/global feature importance arise from this methodology 
enabling more nuanced interpretations by allowing pairs of features to be incomparable when there is no consensus on 
their relative importance. We prove that every relation among features present in these partial orders also holds in 
the rankings provided by existing approaches. Finally, we present three use cases employing hypothesis spaces with 
tractable Rashomon Sets (Additive models, Kernel Ridge, and Random Forests) and show that partial orders 
allow one to extract consistent local and global interpretations of models despite their under-specification.
\end{abstract}

\begin{keywords}
  XAI, Feature Attribution, Under-Specification, Rashomon Set, Uncertainty
\end{keywords}

\section{Introduction}\label{sec:introduction}
The Machine Learning (ML) framework has proven to be an essential tool in many data-intensive 
domains such as software engineering, medicine, and cybersecurity 
\citep{esteves2020understanding,kaieski2020application,salih2021survey}. 
However, the lack of interpretability of complex models is still an important hurdle to their applicability.
For this reason, various model-agnostic techniques such as LIME \citep{ribeiro2016should}, 
SHAP \citep{lundberg2017unified}, and Integrated/Expected Gradient (IG/EG) 
\citep{sundararajan17a,erion2021improving} have recently been developed to provide explanations of model 
decisions in the form of local feature attributions. These attributions are meant to indicate the 
contribution (positive/negative/null) of individual features toward a model prediction, and their 
magnitudes (positive/null) can be used to rank features in order of importance. As researchers and 
practitioners have started to apply these model-agnostic explanations to real-world settings, it has 
become apparent that they are subject to variability. First, given a fixed model, re-running the explainer 
can yield different local feature attributions \citep{visani2020statistical,slack2021reliable,zhou2021s}. 
Second, retraining the model can induce different local explanations for the same decisions 
\citep{fel2021good,shaikhina2021effects,schulz2021uncertainty}. This phenomenon, known as under-specification, 
arises when one employs a rich hypothesis space containing various models that all fit the data while 
having very different behaviors \citep{d2020underspecification}.

In this work, we focus on uncertainty induced by the model under-specification, while controlling the 
variability arising from the explainer. Current literature addresses this uncertainty by aggregating 
local explanations from an ensemble of independently trained models. The aggregation is either 
conducted by averaging the models \citep{shaikhina2021effects}, or averaging the local feature 
importance ranks \citep{schulz2021uncertainty}. We find that, although these methods provide a single 
local feature attribution to explain all models, it is unclear what statements practitioners 
are allowed to make with confidence using said explanation.

Our characterization of explanations uncertainty departs from the current ones by focusing on 
\emph{statements} about local/global feature attribution. Our motto in this context of model 
under-specification is: \textit{only consider statements on which all models with good performance agree}
Concretely, we are going to work with the set of all models with an empirical loss at most $\epsilon$, 
or equivalently, with all models in the Rashomon Set \citep{fisher2019all}.
At a fixed tolerance $\epsilon$, local/global feature attribution statements on which there 
is a consensus in the Rashomon Set form \textbf{partial} orders, instead of the total orders typically used 
to rank features. Partial orders have the advantage to enable safer interpretation by allowing two features to be 
incomparable, which occurs when two models in the Rashomon Set disagree on their relative importance. In such cases, we abstain 
from claiming that one feature is more important than the other and let practitioners study both 
features and decide to modify whichever feature is most actionable. Here is a brief summary of the contributions 
of this work:

\begin{enumerate}
    \item We identify local/global feature attribution \emph{statements} on which there is a perfect 
    consensus across all models with an empirical loss at most $\epsilon$ (\ie all models in the Rashomon Set). 
    These statements result in partial orders, which differ from the total orders commonly used to visualize 
    feature attributions. Our methodology currently supports the Rashomon Sets of Additive Regression, 
    Kernel Ridge Regression, and Random Forests.
    \item We prove that if feature $i$ is locally more important than feature $j$ according to our 
    partial orders, then the same relation holds in the total rankings proposed by 
    \citet{shaikhina2021effects,schulz2021uncertainty}.
    This property establishes that our approach based on partial order is conservative over these total ranking 
    approaches in the sense that it differs from them only by dismissing some relations among features that are 
    deemed uncertain. This is a desirable property given the lack of ground truth in explainability, 
    which restricts quantitative comparisons between competing techniques.
    \item We finally present empirical evidence on three open-source datasets that our 
    partial orders are indeed more cautious than total orders, while still conveying important 
    information about the predictions. Each use-case employs a different class of models to better highlight the 
    versatility of our framework.
\end{enumerate}

The rest of the paper is structured as follows: \textbf{Section \ref{sec:background}}
introduces Machine Learning notation, local/global feature attributions, and the problem of model 
under-specification, \textbf{Section \ref{sec:motivation}} presents a toy-example that serves
as the motivation behind our method, \textbf{Section \ref{sec:method}}
discusses our methodology for asserting consensus in the Rashomon Set, while 
\textbf{Sections \ref{sec:additive}, \ref{sec:kernel}, \& \ref{sec:random_forest}}
apply the methodology to Additive models, Kernel Ridge Regression, and Random Forests respectively. 
Finally, \textbf{Section \ref{sec:discussion}} discusses the results and \textbf{Section \ref{sec:conclusion}} concludes the paper.

\section{Background \& Related Work}\label{sec:background}

\subsection{Machine Learning Notation}\label{sec:background:ML}

In supervised Machine Learning (ML) settings, we work with an input space 
$\X\subseteq \R^d$, a target space $\Y$, an hypothesis space $\Hypo:\X\rightarrow \Y'$, and a 
loss function $\ell:\Y'\times\Y\rightarrow \R_+$. We shall refer to each individual 
function $h\in \Hypo$ as a model or a hypothesis. For parametric hypothesis spaces 
$\Hypo=\{h_{\bm{\theta}}: \bm{\theta} \in \R^p\}$, each realization of the parameters $\bm{\theta}$
is a different model/hypothesis. We suppose there exists a distribution $\D$ over $\X\times \Y$ 
from which examples from a dataset $S=\dataset\sim \D^N$ are sampled iid. 
The ultimate goal of supervised ML is to find a model $h^\star\in\argmin_{h\in \Hypo}\poploss{h}$, 
with minimal population loss $\poploss{h}:=\E_{(\myvec{x}, y)\sim \D}[\ell(h(\myvec{x}), y)]$.
However, since the data-generating distribution $\D$ is unknown, we cannot compute
the population loss $\poploss{h}$ and must resort to studying the empirical loss on the dataset $S$
\begin{equation}
    \emploss{S}{h}:=\frac{1}{N}\sum_{i=1}^N\ell(h(\myvec{x}^{(i)}), y^{(i)}),
    \label{eq:emploss}
\end{equation}
which can be minimized over $\Hypo$ to get an estimate $h_S\in\argmin_{h\in \Hypo}\emploss{S}{h}$ of $h^\star$.
In this work, we study the hypothesis spaces $\mathcal{H}$ of Additive Splines 
\citep[Chapter 5]{hastie2009elements}, Kernel Ridge Regression 
\citep[Chapters 6 \& 11]{mohri2018foundations} and Random Forests \citep{breiman2001random}. 
The two loss functions $\ell$ that are considered are the squared loss $\ell(y',y)=(y'-y)^2$ for a 
continuous target $\Y\subseteq \R$ and the $0\minus\!1$ loss $\ell(y',y)=\mathbbm{1}(y'\neq y)$ 
for a binary target $\Y=\{0,1\}$.

\subsection{Feature Attribution}\label{sec:background:feature_attribution}

The ML paradigm has been successful in tackling tasks where traditional programming methods fail. 
Still, the lack of transparency of some state-of-the-art models such as Random Forests and 
Multi-Layered Perceptrons prohibits their wide-spread application \citep{arrieta2020explainable}.
To meet this novel challenge, the community of eXplainable Artificial Intelligence (XAI) has recently 
been growing with the ambition of \emph{explaining} black box models. In this paper, by \emph{explaining}, 
we mean asking a \emph{contrastive question} about the model and then \emph{answering} said question.

A \emph{contrastive question} takes the form : 
why is the model output $h(\myvec{x})$ so high/low compared to a baseline value? The baseline value is commonly chosen to be the average model 
output $\E_{\myvec{z}\sim\background}[h(\myvec{z})]$ over a distribution $\background$ called the background. At the heart of any contrastive 
question is a quantity called the Gap
\begin{equation}
    G(h, \myvec{x}) := h(\myvec{x}) - \E_{\myvec{z}\sim\background}[h(\myvec{z})].
    \label{eq:gap}
\end{equation}
Therefore, asking a contrastive question amounts to measuring a Gap $G(h, \myvec{x})\neq0$ and 
wondering why it is strongly positive or negative. Examples of contrastive questions include: 
\begin{enumerate}
    \item Why is individual $\bm{x}$ predicted to have a higher-than-average risk of heart 
    disease? Here, the Gap is positive and the background $\background$ is the empirical distribution 
    over the whole dataset.
    \item Why is house $\bm{x}$ predicted to have a lower price than house $\bm{z}$? In that case, the Gap 
    is negative and the background $\background$ is the Dirac measure $\delta_{\bm{z}}$.
\end{enumerate}

Now, to \emph{answer} a contrastive question, we need mathematical tools to probe the model and 
extract information from it. Examples of such techniques are local feature attributions, which are 
vector-valued functionals $\myvec{\phi}:\Hypo\times\X \rightarrow \R^d$ whose vector output represents 
the contribution of each feature towards the Gap
\begin{equation}
    \sum_{i=1}^d\phi_i(h,\bm{x}) = G(h, \myvec{x}).
    \label{eq:additive}
\end{equation}
Since the feature attributions sum up to the Gap, a large positive attribution for feature $i$ is 
interpreted as stating that the input component $x_i$ increased the model output relative to the baseline.
The amplitude of the score $|\phi_i(h, \bm{x})|$ is called the local feature importance and it is 
often used to rank features. That is, we interpret $|\phi_i(h, \bm{x})|<|\phi_j(h, \bm{x})|$ as stating 
that feature $i$ is locally less important than feature $j$ for explaining the Gap.
Returning to the contrastive question on house prices, a negative attribution with maximal local
importance may be given to the size $x_i=\texttt{small}$ of house $\bm{x}$. This 
would mean that the small size of the house is the main factor driving its price down relative to house $\bm{z}$.
In this work, we are only going to consider local feature attributions that are linear w.r.t the model:
\begin{equation}
    \myvec{\phi}(h_1 + \alpha h_2,\bm{x}) = \myvec{\phi}(h_1,\bm{x}) + \alpha \myvec{\phi}(h_2,\bm{x}),
    \label{eq:linearity}
\end{equation}
for any hypotheses $h_1, h_2 \in \mathcal{H}$, and $\alpha\in\R$.
The principal reason for this restriction is that it will render the optimization problems described in 
\textbf{Section \ref{sec:method}} tractable. We now present two linear local feature attributions methods 
that have previously been used to answer contrastive questions about black boxes: SHAP 
\citep{lundberg2017unified}, and Expected Gradient (EG) \citep{erion2021improving}. 

\subsubsection{Shapley Values}\label{sec:background:feature_attribution:shap}

The Shapley values are a fundamental concept from cooperative game theory \citep{shapley1953value}.
Letting $[d]=\{1,2,\ldots,d\}$ be the set of all $d$ features, and given a subset $P\subseteq [d]$ 
of features, we define the replace function $\myvec{r}_{P}:\R^d\times \R^d\rightarrow \R^d$ as
\begin{equation}
    r_{P}(\myvec{z}, \myvec{x})_i = 
        \begin{cases}
        x_i & \text{if  } i \in P\\
        z_i & \text{otherwise}.
    \end{cases}
\end{equation}
Moreover, let $\pi$ be a permutation of $[d]$, $\pi(i)$ be the position of the feature $i$ in $\pi$, and 
$\pi_{:i}=\{j\in[d]\,:\,\pi(j)<\pi(i)\}$. The Shapley values, as defined in the library SHAP 
\citep{lundberg2017unified}, are the average marginal contributions of specifying the $i$th feature from 
the background distribution across all coalitions
\begin{equation}
    \phi^\text{SHAP}_i(h,\bm{x}) := \E_
    {\substack{
    \pi \sim \Omega\\
    \myvec{z}\sim \background}}\left[ h(\,\myvec{r}_{\pi_{:i}\cup \{i\}}(\myvec{z}, \myvec{x})\,) - 
    h(\,\myvec{r}_{\pi_{:i}}(\myvec{z}, \myvec{x})\,)\right],
    \label{eq:shap}
\end{equation}
where $\Omega$ is the uniform distribution over all $d!$ permutations of the $[d]$. Because they involve 
an expectation over all permutations, the Shapley values scale poorly w.r.t the number of features, 
although a method called TreeSHAP was recently developed to reduce the complexity to polynomial assuming 
the model being explained is an ensemble of decision trees \citep{lundberg2020local,laberge2022understanding}.

\subsubsection{Integrated/Expected Gradient}\label{sec:background:feature_attribution:eg}

The Integrated/Expected Gradient (IG/EG) originates from a different background: cost-sharing in economics. It 
is also known as the Aumann-Shapley value and has been previously used to compute saliency maps of Convolutional 
Neural Networks \citep{sundararajan17a,erion2021improving}. The general definition of EG is
\begin{equation}
    \phi^\text{EG}_i(h,\bm{x})
    := \E_{\substack{
    \myvec{z} \sim \background,\\
    t \sim U(0, 1)}}
    \left[(x_i - z_i) \frac{\partial h}{\partial x_i}\bigg|_
    {t\myvec{x} + (1-t)\myvec{z}}\right]
    \label{eq:ig}.
\end{equation}

The main idea of this approach is to average the gradient along linear paths between reference inputs 
sampled from the background and the input $\myvec{x}$ of interest. When the background distribution degenerates 
to a single atom at input $\bm{z}$ ($\background=\delta_{\bm{z}}$), the Expected Gradient falls back 
the so-called Integrated Gradient.

\subsubsection{Global Feature Attribution}\label{sec:background:feature_attribution:GFI}

As a complement to local feature attributions, global feature attributions are vector-valued functionals
$\bm{\Phi}:\Hypo \rightarrow \R_+^d$ that aim to highlight which features are globally most used by the model. 
Unlike local explanations, these functionals $\bm{\Phi}$ are not specific to a given input, and the values of
the attributions are restricted to be positive. Hence, we will often refer to them as global feature 
importance. A straightforward way to extract global insight from local feature 
attributions is to average their absolute value across the data
\begin{equation}
    \Phi^{[1]}_i(h) := 
    \E_{\myvec{x}\sim \D}[\,|\phi_i(h, \myvec{x})|\,]
\end{equation}
which is the by-default scheme in the Python libraries \texttt{SHAP} \citep{lundberg2017unified} 
and \texttt{InterpretML} \citep{nori2019interpretml}.
Another way to combine local attributions into global ones is to average their squared amplitude
\begin{equation}
    \Phi^{[2]}_i(h) := 
    \E_{\myvec{x}\sim \D}[\,\phi_i(h, \myvec{x})^2\,].
\end{equation}
Although this functional has not yet been proposed, it is a natural measure of 
importance for linear models $h(\myvec{x})=\omega_0 + \sum_{i=1}^d \omega_i x_i$ whose local feature 
attributions (using $\background=\D)$ are $\phi_i(h, \myvec{x}) = \omega_i(x_i - \E_{\myvec{z}\sim\D}[z_i])$ \citep{lundberg2017unified}. 
In that case, the global importance $\Phi^{[2]}_i = \omega_i^2\mathbb{V}_{\myvec{z}\sim\D}(z_i)$ 
correspond to the standardized coefficients. We will also see in \textbf{Section \ref{sec:additive}} 
that $\bm{\Phi}^{[2]}$ presents computational advantages over $\bm{\Phi}^{[1]}$ in the case of Additive Regression.

This work focuses on SHAP and EG local feature attributions and their global counterpart 
$\bm{\Phi}^{[1]}$ or $\bm{\Phi}^{[2]}$. Still, but there exist many more post-hoc methods for 
local/global feature attributions. For instance, LIME \citep{ribeiro2016should} 
computes local feature attributions by training a linear model to mimic the behavior of $h$ 
around $\myvec{x}$. This local explainer was not used because it does not respect Equation 
\ref{eq:additive}. Moreover, Permutation Feature Importance \citep{breiman2001random} 
and SAGE \citep{covert2020understanding} extract global feature importance by perturbing 
a feature and reporting the impact on model performance. Studying these two global importance
techniques is part of our future work.

\subsection{Under-Specification and Rashomon Set}\label{sec:background:feature_attribution:rashomon}

The Rashomon Effect \citep{breiman2001statistical}, also known as model 
under-specification \citep{d2020underspecification} or model multiplicity 
\citep{marx2020predictive} refers to the observation that there often exists 
a large diversity of models that fit the data well. This is especially true when
one is employing a hypothesis space with a large capacity. Formally, model under-specification 
can be characterized via the Rashomon Set \citep{fisher2019all}
\begin{definition}[Rashomon Set]
    Given a hypothesis space $\mathcal{H}$, a loss function $\ell$, a data set $S$, 
    and a tolerance threshold $\epsilon >0$, the Rashomon set is defined as
    \begin{equation}
        \Rashomon := \big\{ h \in \mathcal{H} : \emploss{S}{h} \leq \epsilon \big\},
    \end{equation}
    where we leave the dependence in $S$ and $\ell$ implicit from the context.
\end{definition}
Although Rashomon Sets have an appealing and simple interpretation, their computation
is intractable unless $|\Hypo|$ is small or unless $\Hypo$ is the set of linear hypotheses 
fitted with squared loss. Hence, in general settings, the Rashomon Sets have to be estimated, 
which can be done by sampling models and keeping the ones with satisfactory performance 
\citep{dong2019variable,semenova2022existence}. However, this method can be time-consuming 
and requires extensive memory to store thousands of models. 
Alternatively, by relaxing the notion of ``model'' to include all possible feature 
selections (\ie $\Hypo=\cup_{D\subseteq [d]}\Hypo_D$ where $\Hypo_D$ only relies on features in $D$), 
forward selection strategies can enumerate good models more efficiently \citep{kissel2021forward}.

Other approaches work implicitly with the Rashomon Set by solving optimization
problems over $\Hypo$ under the constraint that $\emploss{S}{h} \leq \epsilon$.
In doing so, one can explore the different characteristics of models in the Rashomon
Set without ever needing to represent the set explicitly. Such optimization problems have been 
studied to characterize the under-specification of model predictions 
\citep{marx2020predictive,coker2021theory,hsu2022rashomon}, and global feature importance 
\citep{fisher2019all}. However, to the best of our knowledge, this is the first work that explores 
the range of possible local feature attributions $\bm{\phi}(h,\bm{x})$ across all models 
from the Rashomon Set.

\subsection{Under-Specification of Feature Attributions.}\label{sec:background:feature_attribution:uxai}

In the words of Leo \cite{breiman2001statistical}
``\textit{The multiplicity problem and its effect on conclusions drawn from models needs serious attention}.''
Indeed, since models are under-specified, so are their interpretations via local/global
feature attributions. In practice, this translates to situations where a large set of
independently trained models all yield different local explanations for the same Gap, 
or different rankings of global feature importance. If our goal is to not just understand 
one hypothesis $h$, but also to provide interpretations that are robust to the inherent
under-specification of the ML pipeline, then contradicting explanations are problematic.
Previous work tackles this uncertainty by aggregating the feature attributions of multiple 
independently trained models. They both consider an ensemble $E=\{h_k\}_{k=1}^M$ of $M$ models trained 
independently via a stochastic learning algorithm $h_k\sim \mathcal{A}(S)$. The 
local feature attributions of each of these models are computed 
$\{\myvec{\phi}(h_k, \myvec{x})\}_{k=1}^M$ and aggregated. Uncertainty scores are provided in 
tandem with the aggregated attributions as means to convey how ``confident'' the local attributions are.

For instance, \citet{shaikhina2021effects} aggregate local feature attributions by explaining 
the average model and the uncertainty scores are the variances of feature attributions among models. 
That is, they define the average model $h_E = \frac{1}{M}\sum_{i=1}^M h_k$ and compute its corresponding 
feature attributions $\myvec{\phi}(h_E, \myvec{x})$, which the authors show to be equivalent
to averaging the local feature attributions of each individual model when attribution is a 
linear functional. The uncertainty score for the attribution of feature $i$ is the variance 
$\frac{1}{M}\sum_{k=1}^M (\,\phi_i(h_k, \myvec{x}) - \phi_i(h_E, \myvec{x})\,)^2$.

In a similar effort,  \citet{schulz2021uncertainty} obtain aggregated local explanations 
by averaging the ranks of the feature importance across models 
$\frac{1}{M}\sum_{k=1}^M \mathbf{r}[\,|\bm{\phi}(h_k, \myvec{x})|\,]$,
where $\textbf{r}:\R^d_+ \rightarrow [d]$ is the rank function that maps each component of a vector to 
its rank among the other components. For the uncertainty score for feature $i$ they suggest using the 
ordinal consensus metric, which takes values between $0$ and $1$ and measures the consistency 
between the rankings. As we shall see in \textbf{Section \ref{sec:motivation}}, both of these 
approaches share the same limitations: it is unclear what statements we can/cannot 
make with confidence when analyzing the resulting local feature attributions. Indeed, 
they both end up providing a total order of local feature importance which suggests that 
every feature is either more important or less important than any other feature, irrespective of 
the explanation uncertainty. Moreover, the uncertainty scores shown in tandem with the explanations 
do not easily translate to confidence scores about statements of the form 
\enquote{feature $i$ is locally more important than feature $j$}. Finally, they do not consider 
the whole Rashomon Set but rather employ ensembles of $M$ independently trained models, 
which may underestimate the true under-specification of the ML task.

\section{Motivation}\label{sec:motivation}

\begin{figure*}[t]
    \centering
    \begin{subfigure}[b]{0.47\textwidth}
        \includegraphics[width=\linewidth]
        {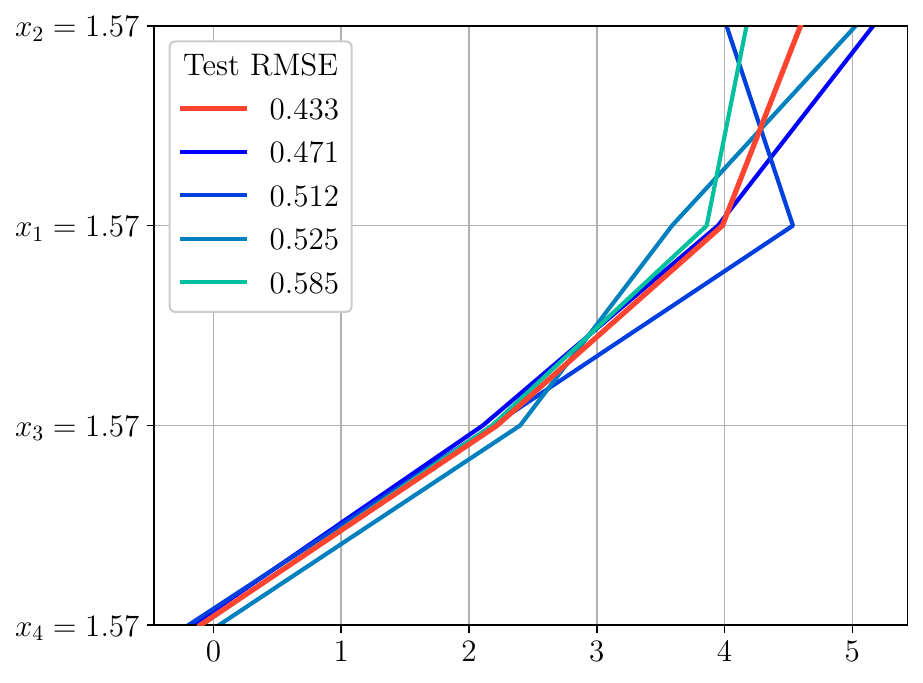}
    \end{subfigure}
    \begin{subfigure}[b]{0.34\textwidth}
        \vfill
       \includegraphics[width=\linewidth]
        {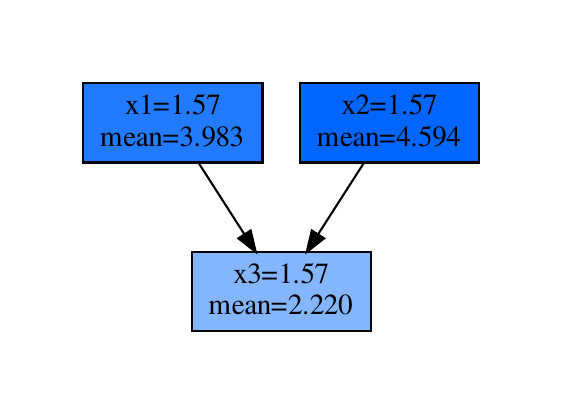}
        \vfill
        \vspace{.75cm}
     \end{subfigure}
    \caption{ Left: local feature attributions for the average model $h_E$ 
    (orange line) and each individual model (blue lines).
    Right: Partial order of local feature importance. There is a directed path from 
    feature $x_i$ to feature $x_j$ if \textbf{all good models} agree that feature $x_i$ 
    is more important than $x_j$.
    }
    \label{fig:toy_result}
\end{figure*}

We illustrate the limitations of current methods and motivate our own with a toy regression problem. 
We sampled 1000 $4$-dimensional points $\bm{x}\sim \mathcal{N}(\bm{0},\bm{\Sigma})$ where
$\bm{\Sigma}$ is identity, except for $\Sigma_{1,2}=\Sigma_{2,1}=0.75$, labelled them via 
$y:=f(\bm{x})+\Delta$, with $f(\myvec{x})=\minus 8 \cos(x_1-x_2)\cos(x_1+x_2)+1.5x_3$ 
($x_4$ is a dummy variable) and $\Delta$ is Gaussian noise with standard deviation $\sigma=0.1$.
We then independently trained five Multi-Layered Perceptrons (MLP) with layerwidths$=50, 20, 10$ and 
ReLU activations. All models ended up having test set Root-Mean-Squared-Error (RMSE) between $0.47$ 
and $0.62$, while the target had a standard deviation of $4.91$. After conducting paired
Student-$t$ tests between the model with RMSE $0.47$ and the four others, we concluded that the one with
error $0.62$ was significantly worst and should be discarded. The other three models did not have 
a significantly worst test RMSE and so we kept them.

We analyzed the predictions of the four remaining models at the input 
$\bm{x}=(\frac{\pi}{2},\frac{\pi}{2},\frac{\pi}{2},\frac{\pi}{2})$ which ranged from $9.05$ to $10.05$ 
(the ground truth being $f(\myvec{x})=0.75\pi+8 \approx 10.36$). Specifying the 
background distribution $\background$ to be the whole training set, we computed the output baselines 
$\E_{\bm{z}\sim\background}[\,h_k(\bm{z})\,]$ which ranged from $\minus1.00$ to $\minus1.07$ 
across the four models. Therefore, for all four models, the prediction Gap $G(h_k, \bm{x})$ 
was positive meaning that running SHAP or EG on all models would answer the same contrastive question: 
why is the prediction at $\bm{x}$ so much higher-than-average? To provide insight into why the Gaps at 
$\bm{x}$ are positive, Figure~\ref{fig:toy_result}~(Left) presents the SHAP local feature attributions 
for all four models as blue lines. We see that the various MLPs lead to different 
interpretations. To make sense of these, we used the two state-of-the-art methods for local feature 
attribution aggregation.

Following \citet{shaikhina2021effects}, we average the predictions of our four models, leading to 
a single predictor $h_E$ with a test RMSE of $0.43$. The resulting SHAP feature attribution is shown as 
an orange line in Figure~\ref{fig:toy_result}~(Left). The total order of local feature importance 
for this average model is represented in the first column of Table \ref{tab:comparison_methods}. 
In particular, this explanation suggests that $x_2$ is more important than $x_1$, which given our 
knowledge of the symmetry of the ground truth seems somewhat spurious. Indeed, since the underlying 
data-generating distribution, the target function $f$, and the point $\bm{x}$ to explain are all symmetric 
w.r.t $x_1$ and $x_2$, an ideal explanation would certainly not support that $x_2$ 
is more important than $x_1$. The uncertainty of the local feature attribution is characterized 
via the variance across the five models, see the second column of Table \ref{tab:comparison_methods}.
We note that variance is higher for the attributions of features $x_1$ and $x_2$,  suggesting that 
their contribution toward the output is more uncertain. Still, it is unclear what variance values are 
low/high enough to label attributions as trustworthy/untrustworthy. Moreover, despite their higher 
variance, features $x_1$ and $x_2$ are locally more important than features $x_3$ and $x_4$ for 
all models. Thus, the variance can lead to an overly pessimistic picture of the insights one can gather 
from feature attributions of multiple models


Following \citet{schulz2021uncertainty}, we averaged the ranks of the SHAP local feature 
importance, see the third column of Table \ref{tab:comparison_methods}. This method also suggests that 
$x_2$ is locally more important than $x_1$, which is again spurious. Using the Ordinal 
Consensus as an uncertainty metric (the fourth column of Table \ref{tab:comparison_methods}) 
suggests that all feature importance ranks are confident. Indeed, both $x_1$ and $x_2$ have an Ordinal 
Consensus of $0.83$ seeing that there is only a single model for which the ranks of these two features 
are switched. Nonetheless, looking at Figure~\ref{fig:toy_result}~(Left), the model that contradicts 
all others has a test RMSE of $0.512$, which is the second best of the whole ensemble. Simply put, 
this model offers a different but still valid perspective on the data. However, its opinion is 
``washed out'' by the other three models in the computation of the Ordinal Consensus. 
Hence, we argue that the Ordinal Consensus offers a view of uncertainty that is too optimistic.

\begin{table}[t]
    \centering
    \begin{tabular}{ccccc}
    \toprule
    Feature & Attribution $h_E$ & Variance & Mean rank &  Ordinal Consensus  \\
    \midrule
     $x_2=1.57$ &  4.59 &  0.50 &        2.75 &           0.83 \\
     $x_1=1.57$ &  3.98 &  0.35 &        2.25 &           0.83 \\
     $x_3=1.57$ &  2.22 &  0.10 &        1.0 &           1.00 \\
     $x_4=1.57$ & -0.10 &  0.09 &        0.0 &           1.00 \\
    \bottomrule
    \end{tabular}
    \caption{Aggregated feature attributions and uncertainty scores following previous methods.}
    \label{tab:comparison_methods}
\end{table}

As we have just highlighted, the methods of \citet{shaikhina2021effects} and 
\citet{schulz2021uncertainty} share the same limitations: 

\begin{itemize}
    \item It is unclear what statements one can/cannot make using these frameworks.
    For instance, is $x_2$ really more important than $x_1$ for explaining the gap?
    Both approaches return a total order of local feature importance, which suggests
    one statement of relative importance for every pair of features \ie feature $i$ is
    locally less/more important than feature $j$. As we have seen, the uncertainty metrics 
    provided in tandem with the total orders (Variance or Ordinal Consensus) do not help to 
    decide what statements on relative importance are trustworthy.
    \item It is unclear what is the impact of model performances on the insights provided by these 
    two methods. For instance, the second-best model in the ensemble contradicts all others 
    regarding the relative importance of $x_1$ and $x_2$. However, its opinions are diluted 
    when aggregating all explanations.
\end{itemize}

In light of those takeaways, we decide to focus our method directly on statements about relative feature 
importance, and whether or not all good models agree on them. For instance, how can we decide 
if feature $x_2$ is locally more important than $x_1$? As noted earlier, one model considers, 
contrary to the other four, that $x_1$ is more important than $x_2$. Given that this model is as good 
as any other, we can simply decide to \textbf{abstain} from claiming any relation of importance between 
$x_1$ and $x_2$. In this case, abstention seems indeed a cautious position given the symmetry of the 
ground truth. Following this logic, for every other pair of features, we check if all four models 
agree on their relative importance. For instance, all four models agree that $x_1$ is more important 
than $x_3$. We decide to record this consensus as a trustworthy statement and we represent it with an 
arrow from $x_1$ to $x_3$ in Figure~\ref{fig:toy_result}~(Right). Furthermore, we observe that 
while all four models agree that $x_1$, $x_2$ and $x_3$ have a positive attribution, this is not 
the case for $x_4$ (our dummy variable). Based on this observation, we decide to keep only the 
variables for which all models agree on the sign and exclude $x_4$ from our final explanation.

All relations of importance among pairs of features for which there is consensus among the four models 
actually form a \emph{partial order}, a generalization of total orderings which can be conveniently 
represented using a Directed Acyclic Graph called a Hasse diagram. The partial order of 
Figure~\ref{fig:toy_result}~(Right) summarizes our explanation. Note that the partial order 
suggests that the only relative importance statements we can make are that features $x_1$ 
and $x_2$ are locally more important than $x_3$. These two statements are also supported 
by the total orders of \citet{shaikhina2021effects} and \citet{schulz2021uncertainty}, a fact 
that always holds as discussed in \textbf{Section \ref{sec:method:prior_work}}.

\section{Methodology}\label{sec:method}

\subsection{Consensus on Statements about Feature Attributions}\label{sec:method:consensus}

\subsubsection{Local}\label{sec:method:consensus:local}

Having introduced a basic motivation for considering the consensus among diverse models 
with good performance, we now present a formal description of the approach. First and foremost, 
our theory focuses on statements $s:\Hypo\times \X \rightarrow \{0, 1\}$ about local 
feature attributions. Given a performance threshold $\epsilon>0$, end-users will only be presented 
statements on which there is a perfect consensus for all models in the Rashomon Set 
\begin{equation}
    \forall h\in \Rashomon\quad s(h, \myvec{x})=1.
\end{equation}
We now present various statements about local feature attributions.

\begin{definition}[\textbf{Positive (Negative) Gap}]
     We say that the gap $G(h, \myvec{x})$ is positive (resp. negative) according to $h$ if $G(h,\myvec{x})>0$ (resp. $G(h, \myvec{x})<0$). Formally, the statements take the
     form $s(h, \bm{x})=\mathbbm{1}[G(h,\myvec{x})>0]$ and $s(h, \bm{x})=\mathbbm{1}[G(h,\myvec{x})<0]$.
\end{definition}
Before running SHAP or EG, it is primordial to understand the sign of the gap 
as it is the basis behind the contrastive question we attempt to answer. There may exist instances 
$\myvec{x}^{(i)}$ in the data where there is no consensus on the sign of the gap. Therefore, we let
\begin{equation}
    \text{SG}(\epsilon):=\big\{i \in [N] : \forall h_1,h_2 \in \Rashomon \,\,\, \texttt{sign}[G(h_1, \myvec{x}^{(i)})]=\texttt{sign}[ G(h_2, \myvec{x}^{(i)})]\big\},
\end{equation}
be the sets of data instances on which a contrastive question makes sense. If two models disagree on the sign of 
the Gap, then it is useless to run SHAP or EG on them since these techniques would not end up answering the same
contrastive question. If a contrastive question has been formulated without ambiguity, we can run SHAP or 
EG and analyze the local feature attributions.
\begin{definition}[\textbf{Positive (Negative) Attribution}]
     We say that feature $i$ has positive (resp. negative) attribution according to $h$ if 
     $\phi_i(h, \myvec{x})>0$ (resp. $\phi_i(h, \myvec{x})<0$). More formally, the statements are
     $s(h, \bm{x})=\mathbbm{1}[\phi_i(h,\myvec{x})>0]$ and 
     $s(h, \bm{x})=\mathbbm{1}[\phi_i(h,\myvec{x})<0]$.
\end{definition}
We can now define the sets
\begin{equation}
    \text{SA}(\epsilon, \myvec{x}):=\big\{i \in [d] : \forall h_1,h_2 \in \Rashomon \,\,\, \texttt{sign}[\phi_i(h_1, \myvec{x})]=\texttt{sign}[ \phi_i(h_2, \myvec{x})]\big\},
\end{equation}
which store the features whose attribution has a consistent sign across all good models. After 
identifying the sign of the local feature attributions, it makes sense to order them 
according to their magnitude.
\begin{definition}[\textbf{Local Relative Importance}] We say that feature $i$ is locally 
less important than  $j$ (or equivalently $j$ is locally more important than $i$) according to 
$h$ if $|\phi_i(h, \myvec{x})| \leq |\phi_j(h, \myvec{x})|$. Formally, the statements take the form 
$s(h, \bm{x}):= \mathbbm{1}[|\phi_i(h, \myvec{x})| \leq |\phi_j(h, \myvec{x})|]$.
\label{def:partial_order}
\end{definition}
Note that model consensus on local relative importance leads to a partial order 
$\preceq_{\epsilon, \myvec{x}}$ on $ \text{SA}(\epsilon, \myvec{x})$ defined by:
\begin{equation}
    i \preceq_{\epsilon, \myvec{x}} j \iff 
    \forall\,h \in \Rashomon\  \,\,\,|\phi_i(h, \myvec{x})| \leq |\phi_j(h, \myvec{x})|\, ,
    \label{eq:po}
\end{equation}
$\forall i,j\in \text{SA}(\epsilon, \myvec{x})$. By requiring a perfect consensus on the Rashomon Set, 
we guarantee that the order relations will be transitive. Partial orders differ from the common total 
orders by allowing some pairs of features to be incomparable when there exist two models with conflicting 
evidence on relative importance.


Recall that asserting the consensus on a statement over the Rashomon Set  (\ie verifying that 
$\forall h_1,h_2\in \Rashomon, \,\,s(h_1,\bm{x})\!=\!s(h_2,\bm{x})\!=\!1$) can require checking that 
uncountably many hypotheses $h$ satisfy that statement. Fortunately, for the specific statements that 
are of interest to us, this can be rephrased as an optimization problem.

\begin{definition}[Local Feature Attribution Consensus]
    Given a tolerance level $\epsilon>0$, a Rashomon Set $\Rashomon$, and a 
    local feature attribution $\bm{\phi}:\Hypo\times \X \rightarrow \R^d$, consensus on statements are asserted via
    the following optimization problems.
    \begin{enumerate}
        \item \textbf{Positive (Negative) Gap : } There is
        consensus that the gap $G(h, \myvec{x})$ is positive (resp. negative) if $\inf_{h\in \Rashomon}G(h,\myvec{x})>0$ (resp. $\sup_{h\in \Rashomon} G(h,\myvec{x})<0$).
        \item \textbf{Positive (Negative) Attribution :} There is consensus that 
        feature $i$ has a positive (resp. negative) attribution if
        $\inf_{h\in \Rashomon}\phi_i(h, \myvec{x}) >0$ 
        (resp. $\sup_{h\in \Rashomon}\phi_i(h, \myvec{x}) <0$).
        \item \textbf{Local Relative Importance :} Let there be a consensus that the 
        attribution of features $i$ and $j$ have signs $s_i$ and $s_j$. Under this assumption, the 
        local feature importance becomes $|\phi_i(h, \myvec{x})| = s_i
        \phi_i(h, \myvec{x})$ for any $h\in \Rashomon$, and similarly for feature $j$. Consequently, 
        there is a consensus that $i$ is locally less important than $j$ if 
        $$\sup_{h\in\Rashomon}s_i
        \phi_i(h, \myvec{x})-s_j\phi_j(h, \myvec{x}) 
        \leq 0.$$
    \end{enumerate}
    \label{def:computing_local_consensus_optim}
\end{definition}
 
These optimization problems may potentially be intractable depending on the hypothesis set
$\Hypo$ and loss functions $\ell$. Nonetheless, we will see that they can be solved exactly 
and efficiently for Additive Regression, Kernel Ridge Regression, and Random Forests.

\subsubsection{Global}\label{sec:method:consensus:global}

We can also consider global model statements $s:\Hypo\rightarrow \{0, 1\}$, which are no longer
specific to any input $\myvec{x}$, and assert a consensus over them. When interpreting models
globally, there is no need to define the notions of Gap or even sign of the attribution. 
Indeed, since global feature importance are already positive, we only need to study statements 
of relative importance.
\begin{definition}[\textbf{Global Relative Importance}] 
    We say that feature $i$ is globally less important than  $j$ (or equivalently, $j$ is globally 
    more important than $i$) according to $h$ if $\Phi_i(h) \leq \Phi_j(h)$. 
    Formally, the statements take the form $s(h):= \mathbbm{1}[\Phi_i(h) \leq \Phi_j(h)]$.
    \label{def:partial_order_global}
\end{definition}
Model consensus on global relative importance defines a partial order 
$\preceq_{\epsilon}$ on $[d]$:
\begin{equation}
    i \preceq_\epsilon j \iff 
    \forall\,h \in \Rashomon\  \,\,\,\Phi_i(h) \leq \Phi_j(h).
    \label{eq:po_global}
\end{equation}
As with local feature attributions, consensus assertion over the Rashomon Set can be rephrased as
an optimization problem.

\begin{definition}[Global Feature Importance Consensus]
    Given a tolerance level $\epsilon>0$, a Rashomon Set $\Rashomon$, and a Global
    Feature Importance $\bm{\Phi}:\Hypo\rightarrow \R^d$, there is a consensus 
    that $i$ is globally less important than $j$ if and only if 
        $$\sup_{h\in\Rashomon}\Phi_i(h)-\Phi_j(h) \leq 0.$$
    \label{def:computing_global_consensus_optim}
\end{definition}


\subsection{Recommendations for Error Tolerance}\label{sec:method:epsilon}

It remains to address the specification of the error tolerance $\epsilon$. 
This is a critical choice because the tolerance controls the size of the Rashomon Set and therefore the number 
of statements on which consensus is attained. Assuming that $h_S$ is unique, when the tolerance error 
is set to its minimum value we explain a single model $h_S$ and we have total orders of local/global 
feature importance. As we increase $\epsilon$, contradicting explanations will arise and the
total orders will become partial orders. The number of statements present in these partial orders will 
diminish and eventually become null for a sufficiently high $\epsilon$. Thus, varying the error tolerance 
influences \emph{how many} statements about the empirical loss minimizer we abstain from making.

But why would we ever want to abstain from making certain statements supported by $h_S$? Isn't it 
the model that is going to be deployed anyway? The risk is that some explanations of
$h_S$ might be contradicted by another model with \enquote{slightly worst empirical loss}. 
When this occurs, we argue that the explanations of $h_S$ are not trustworthy and we advocate for abstention.
Determining the right notion of \enquote{slightly worst empirical loss} is a difficult problem. 
Here we suggest two approaches 1) one based on statistical guarantees 2) a heuristic 
based on relative error increases.

\subsubsection{Capture Bounds}\label{sec:method:epsilon:capture}

Assume we can find $\epsilon_{\text{max}}$ such that any model with a larger empirical loss 
can be shown to be suboptimal in terms of population loss $\poploss{h}$.
More precisely, with probability $1-\delta$, $\emploss{S}{h}> \epsilon_{\text{max}}$
implies that $\poploss{h}> \poploss{h^\star}$.
Then it is not relevant to set $\epsilon>\epsilon_{\text{max}}$ since the 
Rashomon Set would include models that are likely suboptimal. Assuming $h^\star$ is unique,
this $\epsilon_{\text{max}}$ is the smallest value that respects
\begin{equation}
    \probdata[\emploss{S}{h^\star}\leq\epsilon_{\text{max}}]=
    \probdata[h^\star\in \mathcal{R}(\Hypo, \epsilon_{\text{max}})] > 1-\delta.
    \label{eq:bound_h_star}
\end{equation}
We shall refer to such statistical guarantees as ``Capture Bounds'' since they guarantee 
that the Rashomon Set will ``capture'' the best-in-class model.
By setting $\epsilon=\epsilon_{\text{max}}$, with high probability, any statement on which there is a 
consensus on the Rashomon Set will also hold for the unknown $h^\star$. That is, we explain the 
best model without knowing which one it is. We now present three capture bounds

First, if $\Hypo$ is finite and small (\eg $|\Hypo|\leq 100$), we recommend using Model Set 
Selection \citep{kissel2021forward}. We define the subset $E\subseteq \Hypo$ of all models 
that are not significantly worse than the empirical risk minimizer $h_S$ according 
to a statistical test \eg paired Student-$t$ tests with significance $1-\delta$. 
Setting  $\epsilon_{\text{max}}=\max\{\emploss{S}{h}\}_{h\in E}$ guarantees that 
Equation \ref{eq:bound_h_star} holds. This capture bound was previously applied to the ensemble of 
five MLPs from \textbf{Section \ref{sec:motivation}}.

Second, if strong assumptions can be made on how the target was generated, then the
following capture bound can be used.

\begin{proposition}
    Under the assumption that the data were generated by the optimal model 
    $h^\star$ plus iid zero-mean Gaussian noise
    \begin{equation}
        y = h^\star(\myvec{x}) + \Delta,\quad\text{where }\,\,\,
        \Delta\sim\mathcal{N}(0, \sigma^2),
        \label{eq:hypo_model_noise}
    \end{equation}
    and using the squared loss $\ell(y',y)=(y' - y)^2$, we have that
    \begin{equation}
        \probdata[\emploss{S}{h^\star}> \epsilon_{\text{max}}] = 
        1-F_{\chi^2_N}\bigg(\frac{N}{\sigma^2}\epsilon_{\text{max}} \bigg),
    \end{equation}
    where $F_{\chi^2_N}$ is the CDF of a chi-2 random variable with $N$ degrees of freedom. 
    The proof is provided in \textbf{Appendix \ref{app:proofs:statistical}}.
    \label{prop:capture_bound_linear}
\end{proposition}

Solving $\delta := 1-F_{\chi^2_N}(\frac{N}{\sigma^2}\epsilon_{\text{max}} )$ for $\epsilon_{\text{max}} $
yields the desired tolerance. If the residuals $\Delta$ follow another law than Gaussian, 
one could replace the $\chi^2_N$ CDF by the CDF of the distribution of $1/N\,\sum_{i=1}^N (\Delta^{(i)})^2$.
The assumption that the data was generated by $h^\star$ plus symmetric noise is very strong, but it
is ubiquitous in Statistics and Linear Regression (See for instance \citep[Section 3.2]{hastie2009elements} 
and \citep[Section 13.5]{wasserman2004all}). Therefore, we think this capture bound 
is \emph{at-least} worth investigating in any regression problem.
\newpage
Third, we suggest this capture bound if a good reference hypothesis can be chosen
apriori \ie before seeing the dataset $S$ on which the empirical loss is computed.

\vspace{-0.25cm}
    \begin{proposition}
    Let  $\ell$ be the $0\minus1$ loss, $S\sim \mathcal{D}^N$ 
    be a dataset, $h_\text{ref}\in\Hypo$ be a reference model that is independent of $S$, and $h^\star$ be a 
    best in-class hypothesis, for any $\epsilon'\in \R^+$, we have
    \begin{equation}
        \probdata[\emploss{S}{h^\star}\geq \epsilon'+\emploss{S}{h_\text{ref}}] \leq \exp\bigg\{ -\frac{N \epsilon'^2}{2}\bigg\}.
    \end{equation}
    The proof is provided in \textbf{Appendix \ref{app:proofs:statistical}}
    \label{prop:capture_bound_classif}
\end{proposition}
Solving $\delta:=\exp\big\{ -\frac{N(\epsilon_{\text{max}} - \emploss{S}{h_\text{ref}})^2}{2}\big\}$ 
for $\epsilon_{\text{max}}$ yields the error tolerance.

\subsubsection{Relative Increase Heuristic}\label{sec:method:epsilon:heuristic}

Capture bounds rely on very strong assumptions and therefore cannot be used out-of-the-box 
for all problems. When they are inapplicable, we recommend the heuristic
\begin{equation}
    \epsilon = (1+\epsilon_\text{rel})\times\emploss{S}{h_S},
\end{equation}
for a $\epsilon_\text{rel}$ typically fixed to $5\%$ \citep{dong2019variable,coker2021theory}, although smaller
values could be used. Setting $\epsilon$ based on this heuristic does not provide any statistical guarantee.
Consequently, any alternative model $h'\in \Rashomon$ that is highlighted by the practitioner
should be compared to $h_S$ using a paired Student-$t$ test on fresh data. For example, if a model 
in the Rashomon Set is found to contradict $h_S$ on a statement of interest, then one should assert that 
the test error of this alternative model is not significantly worse than that of the empirical loss minimizer.

\subsubsection{Sensitivity Analysis}

When any experimental choice is made, it is important to conduct a sensitivity
analysis regarding this choice. Otherwise, the results could be disregarded for being 
arbitrary or manipulable. Setting the tolerance $\epsilon$ is one such critical choice and 
therefore we must provide evidence that our conclusions are not too sensitive to the 
specific value of $\epsilon$ employed. To measure such sensitivity, we propose to compute 
the normalized cardinality of the local partial orders 
\begin{equation}
    |\preceq_{\epsilon, \myvec{x}^{(i)}}| := 
    \bigg(\frac{1}{2}d(d+1)\bigg)^{-1} 
    \mathbbm{1}[\myvec{x}^{(i)}\in \text{SG}(\epsilon)]\times 
    |\{(j, k)\in \text{SA}(\epsilon, \myvec{x}^{(i)})^2 : 
    j \preceq_{\epsilon, \myvec{x}^{(i)}}k\}|.
    \label{eq:cardinality}
\end{equation}
For any example $\myvec{x}^{(i)}$, this measure goes from $0$ (when the gap does not have a 
consistent sign) to $1$ (when we have a total order among the $d$ features).
This quantity returns the ratio of statements highlighted by the local
partial order to the total number of possible statements 
$\frac{1}{2}d(d+1)=\frac{1}{2}d(d-1)\,(\text{local relative importance})\,
+d \,(\text{attribution sign}$). 
Given the cardinality measure, a sensitivity analysis on $\epsilon$
would involve asserting the stability of the histograms 
$\{\,|\preceq_{\epsilon, \myvec{x}^{(i)}}|\,\}_{i=1}^N$ for small perturbations of $\epsilon$.

\subsection{Relation To Prior Work}\label{sec:method:prior_work}

Prior methods for characterizing the effect of model uncertainty on local feature attributions 
have mainly focused on explaining an ensemble of models $E=\{h_k\}_{k=1}^M$ trained with the same stochastic 
learning algorithm $h_k\sim \mathcal{A}(S)$ \citep{shaikhina2021effects,schulz2021uncertainty}. We go a step 
further by studying the feature attributions of all models in the Rashomon Set. For this reason, it may not 
be immediately clear how our method compares to prior work. The following proposition shows that what we propose is a 
more conservative alternative to both existing methods.

\begin{proposition}
    Let $\bm{\phi}(\cdot, \bm{x})$ be a linear local feature attribution functional, 
    and $E=\{h_k\}_{k=1}^M$ be an ensemble of $M$ models from $\Hypo$ trained with the same 
    stochastic learning algorithm $h_k\sim \mathcal{A}(S)$. Said local feature attribution and 
    ensemble will be employed in the methods of \citep{shaikhina2021effects, schulz2021uncertainty}. 
    Moreover, let $\epsilon \geq \max \{\emploss{S}{h_k}\}_{k=1}^M$ be an error tolerance, and let 
    $\preceq_{\epsilon, \myvec{x}}$ be the consensus order relation on $\text{SA}(\epsilon, \myvec{x})$ 
    (cf. Equation \ref{eq:po}). If the relation $i\preceq_{\epsilon, \myvec{x}}j$ holds,
    we have that $i$ is locally less important than $j$ in the two total orders of prior work
    \citep{shaikhina2021effects, schulz2021uncertainty}.
    \label{prop:relation_to_prior}
\end{proposition}

This proposition is key as it implies that our framework will not provide users with statements that are not supported
by existing approaches. In a way, all we do is abstain from making statements whose uncertainty is highest.
We think this is an important property to have because, unlike model predictions, there are no ground truths for 
feature attributions. For example, a practitioner can apply multiple aggregation mechanisms to model predictions
(Arithmetic Mean, Geometric Mean, Majority Vote etc.) and compare the resulting test set performances using the 
target $y$ as ground truth. However, when aggregating feature attributions using different schemes, there is no 
metric for what feature importance ranking is the best, or closest to ground truth. This is one of the major 
challenges currently faced by the explainability community. Still, since our framework only highlights 
statements supported by existing approaches, we avoid the need for quantitative comparisons.

The following \textbf{Section \ref{sec:additive}, \ref{sec:kernel}, \& \ref{sec:random_forest}} each presents a 
practical application of our framework using a different hypothesis space and dataset. The code to reproduce 
these experiments is available online\footnote{\url{https://github.com/gablabc/Partial_Order_in_Chaos}}.

\section{Application to Additive Regression}\label{sec:additive}
\subsection{Rashomon Set}\label{sec:additive:rashomon}
Additive models have the form $h(\myvec{x}) := \omega_0 + \sum_{j=1}^d h_j(x_j)$ where each function 
$h_j$ only depends on the feature $x_j$. Since the output is the sum of $d$ functions $h_j$, 
the attribution of each individual feature is readily available which is why these models are 
advertised as transparent. To fit an additive model, one must choose a class of hypotheses for
each univariate function $h_j$. A first method is to represent each of the
functions non-parametrically via a sum of univariate decision trees. This scheme is
what is currently done in the \texttt{ExplainableBoostingMachine} of
the \texttt{InterpretML}  Python library \citep{nori2019interpretml} for instance.
The parametric alternative is to define a basis $\{h_{jk}\}_{k=1}^{M_j}$ along each
dimension $j$ (for example using Splines) and represent the additive model using linear combinations 
of these basis functions \citep[Chapter 5]{hastie2009elements}
\begin{equation}
    h_{\bm{\omega}}(\myvec{x}) := \omega_0 + \sum_{j=1}^d \underbrace{\sum_{k=1}^{M_j}
    \omega_{jk} h_{jk}(x_j)}_{h_j(x_j)}=\bm{\omega}^T \bm{h}(\bm{x})
    \label{eq:basis_additive}
\end{equation}
where $$\bm{\omega}:=[\omega_0, 
\underbrace{\omega_{11}, \omega_{12},\ldots,\omega_{1,M_1}}_{\text{feature 1}}, 
\underbrace{\omega_{21}, \omega_{22}, \ldots,\omega_{2,M_2}}_{\text{feature 2}}, 
\ldots, 
\underbrace{\omega_{d1}, \omega_{d2}, \ldots, \omega_{d,M_d}}
_{\text{feature $d$}}]^T,$$
and 
$$\bm{h}(\bm{x}):=[1, 
\underbrace{h_{11}(\myvec{x}), h_{12}(\myvec{x}),
\ldots,h_{1M_1}(\myvec{x})}_{\text{feature 1}}, 
\ldots,
\underbrace{h_{d1}(\myvec{x}), h_{d2}(\myvec{x}),
\ldots,h_{dM_d}(\myvec{x})}_{\text{feature d}}]^T.$$

By letting $\bm{H}$ be the $N\times(1+\sum_{j=1}^dM_j)$ matrix whose
$i$th row is $\bm{h}(\bm{x}^{(i)})^T$,
the empirical loss minimizer for the squared loss takes the familiar form
\begin{equation}
    \leastsq = (\bm{H}^T\bm{H})^{-1}\bm{H}^T\bm{y}.
    \label{eq:sol_additive}
\end{equation}

\begin{definition}[Rashomon Set for Parametric Additive Regression]
    Let $\Hypo$ be the set of Parametric Additive Regression models (cf Equation 
    \ref{eq:basis_additive}), $\ell$ be the squared loss, $S$ be a dataset of
    size $N$, and 
    $\leastsq=\argmin_{h\in \Hypo}\emploss{S}{h}$ be the least-square estimate. If one
    uses the performance threshold $\epsilon \geq \emploss{S}{\leastsq}$,
    then the Rashomon set $\Rashomon$ consists of all parameters $\bm{\omega}$ s.t.
    \begin{equation}
        (\bm{\omega}-\leastsq)^T\frac{\bm{H}^T\bm{H}}{N}(\bm{\omega}-\leastsq) 
        \leq \epsilon - \emploss{S}{\leastsq}.
    \end{equation}
    We see that the Rashomon Set is an ellipsoid in parameter space. Moreover, if we let
    $\epsilon < \emploss{S}{\leastsq}$, then the Rashomon Set is empty.
\end{definition}

This result is a simple generalization of the Rashomon Set of Ridge Regression derived in 
\cite{semenova2022existence} to Parametric Additive Regression.

\subsection{Asserting Model Consensus}\label{sec:additive:consensus}

\subsubsection{Local Feature Attribution}\label{sec:additive:consensus:LFA}

In addition to having an analytical expression of their Rashomon Set,
additive models also have a clear notion of local feature attribution. 
For instance, running SHAP and EG on an additive model while taking the whole dataset $S$ 
as the background yields the same result
\vspace{-20pt}
\begin{equation}
    \begin{aligned}
        \phi_j^\text{SHAP}(h, \myvec{x}) &= \phi_j^\text{EG}(h, \myvec{x})= h_j(x_j) - \frac{1}{N} \sum_{i=1}^N h_j(x_j^{(i)})\\
        &=\sum_{k=1}^{M_j} \omega_{jk} \bigg(h_{jk}(x_j) - 
        \frac{1}{N}\sum_{i=1}^N h_{jk}(x_j^{(i)})\bigg)
        =\sum_{k=1}^{M_j} \omega_{jk} \overline{h}_{jk}(x_j)
        =\bm{\omega}^T_j\, \overline{\bm{h}}_j(\myvec{x}) ,
        \label{eq:basis_shap_additive}
    \end{aligned}
\end{equation}
which is a linear function of the weights $\bm{\omega}$. We have
seen previously in \textbf{Definition \ref{def:computing_local_consensus_optim}} that asserting the 
consensus on local feature attribution statements amounts to optimization problems that are 
linear with respect to the attributions $\bm{\phi}$. Therefore, asserting a consensus on the 
Rashomon Set of Parametric Additive models requires maximizing/minimizing a linear function on an ellipsoid
\begin{equation}
    \begin{aligned}
        \minormax_{\bm{\omega}}\quad & \bm{a}^T \bm{\omega}\\
        \textrm{with} \quad & (\bm{\omega}-\leastsq)^T\bm{A}(\bm{\omega}-\leastsq)
        \leq \epsilon - \emploss{S}{\leastsq},
    \end{aligned}
    \label{eq:optim_linear}
\end{equation}
with $\bm{A}:=\frac{\bm{H}^T\bm{H}}{N}$ and assuming $\epsilon \geq \emploss{S}{\leastsq}$. The value of $\bm{a}$ depends 
on the type of statement and the instance $\bm{x}^{(i)}$ being explained:
\begin{itemize}
    \item \textbf{Positive (Negative) Gap}\vspace{5pt} \newline
    $\bm{a}=[0, 
    \overline{h}_{11}(\myvec{x}^{(i)}),
    \ldots,\overline{h}_{1M_1}(\myvec{x}^{(i)}), 
    \ldots,
    \overline{h}_{d1}(\myvec{x}^{(i)}),
    \ldots,\overline{h}_{dM_d}(\myvec{x}^{(i)})]$
    \item 
    \textbf{Positive (Negative) Attribution of Feature $j$}
    \vspace{5pt} \newline
    $\bm{a}=[0,\ldots, 
    \overline{h}_{j1}(\myvec{x}^{(i)}), \overline{h}_{j2}(\myvec{x}^{(i)}),
    \ldots,\overline{h}_{jM_j}(\myvec{x}^{(i)}), 
    \ldots, 0]$
    \item \textbf{Local Relative Importance of Features $i$ and $j$}
    \vspace{5pt} \newline
    $\bm{a}=[0, 
    \ldots, s_i \overline{h}_{i1}(\myvec{x}^{(i)}), \ldots, s_i \overline{h}_{iM_i}(\myvec{x}^{(i)}), 0, \ldots, 0, 
    \minus s_j \overline{h}_{j1}(\myvec{x}^{(i)}),
    \ldots,\minus s_j\overline{h}_{jM_j}(\myvec{x}^{(i)}),\ldots, 0]$
\end{itemize}

These optimization problems have an analytical solution that can be computed
rapidly using the Cholesky decomposition $\bm{A}= \bm{C}\bm{C}^T$. 
The optimal values of Equation \ref{eq:optim_linear} are 
\begin{equation}
    \pm\sqrt{\epsilon - \emploss{S}{\leastsq}}\,\,\|\bm{a}'\|+\bm{a}^T\leastsq,
    \label{eq:sol_optim_linear}
\end{equation}
where $\bm{a}'= \bm{C}^{-1} \bm{a}$ see \textbf{Appendix \ref{app:optim:ellipsoid:linear}} for more details. 
This result is a generalization of Theorem 4 from \citet{coker2021theory} to Additive models and arbitrary 
linear functionals of the weights $\bm{a}^T\bm{\omega}$. We deduce from Equation \ref{eq:sol_optim_linear} that 
the minimum and maximum values of any linear functional evaluated on the Rashomon Set are a deviation of 
$\sqrt{\epsilon - \emploss{S}{\leastsq}}\|\bm{a}'\|$ from $\bm{a}^T\leastsq$
the value of the functional evaluated on the least-square. Since the deviation is an explicit function of the 
tolerance $\epsilon$, a consensus on local feature attribution statements can be efficiently asserted at any 
tolerance level.


\subsubsection{Global Feature Importance}\label{sec:additive:consensus:GFI}

Now investigating global feature importance, we observe that the functional
$\Phi_j^{[2]}$ is a quadratic form of the weights
\begin{equation}
    \begin{aligned}
        \Phi_j^{[2]}(h) 
        &:=\frac{1}{N}\sum_{i=1}^N\,\phi_j(h, \myvec{x}^{(i)})^2
        = \frac{1}{N}\sum_{i=1}^N \bm{\omega}^T_j \,\overline{\bm{h}}_j(\myvec{x}^{(i)})\,
        \overline{\bm{h}}_j(\myvec{x}^{(i)})^T\, \bm{\omega}_j \\
        &= \bm{\omega}_j^T \bigg(\frac{1}{N}\sum_{i=1}^N 
        \overline{\bm{h}}_j(\myvec{x}^{(i)})\,
        \overline{\bm{h}}_j(\myvec{x}^{(i)})^T\bigg) \bm{\omega}_j = 
        \bm{\omega}_j^T \bm{B}_j \bm{\omega}_j.
        \label{eq:basis_gfi_additive}
    \end{aligned}
\end{equation}
Therefore, asserting a consensus on global relative importance statements in the 
Rashomon Set of Additive Regression 
(solving \textbf{Definition \ref{def:computing_global_consensus_optim}}) 
requires optimizing a quadratic form over an ellipsoid
\begin{equation}
    \begin{aligned}
        \minormax_{\bm{\omega}}\quad & 
        \bm{\omega}_i^T \bm{B}_i \bm{\omega}_i-\bm{\omega}_j^T \bm{B}_j \bm{\omega}_j\\
        \textrm{with} \quad & 
        (\bm{\omega}-\leastsq)^T\bm{A}(\bm{\omega}-\leastsq)
        \leq \epsilon - \emploss{S}{\leastsq},
    \end{aligned}
    \label{eq:optim_qpqc_centered}
\end{equation}
which is known as the Trust-Region-Subproblem (TRS). 
Impressively, by Corollary 7.2.2 of \cite[Section 7.2]{conn2000trust} this problem has 
necessary optimality conditions for the global optimum, even when the quadratic form is non-convex. 
We describe our TRS solver in \textbf{Appendix \ref{app:optim:ellipsoid:quadratic}}. 
We end by noting that \citet{fisher2019all} previously defined the Model Class Reliance
as the interval $\big[\min_{h\in\Rashomon}\Phi_i(h), \max_{h\in\Rashomon}\Phi_i(h)\big]$ which they
computed for Ridge Regression by solving a TRS. However, our framework is more general
because we can also assert a consensus on relative importance relations \ie all 
good models agree that $i$ is globally less important than $j$.

\subsection{House Price Prediction}\label{sec:additive:experiments}

The Kaggle-Houses\footnote{\footnotesize\url{https://www.kaggle.com/c/house-prices-advanced-regression-techniques}} 
dataset consists of predicting the logarithm of the selling price of 2919 houses based on 79 numerical and 
categorical features. The training set $S$ contains the first 1460 houses which are labeled, while the 
test set regroups the remaining 1459 houses whose selling prices are hidden by Kaggle. The only way to measure
test performance is to submit predictions on the Kaggle Website.

For simplicity, we only selected numerical features and removed time-related features
since we are only interested in the physical properties of the houses. Moreover, features that were 
perfectly collinear with others were ignored since they would render the matrix $\bm{H}^T\bm{H}$ singular.
We were left with 19 numerical features which were non-redundant, although some had a very high Spearman 
correlation : \texttt{GarageArea/GarageCars},
\texttt{BsmtPercFin/BsmtFullBath}, and \texttt{BedroomAbvGrd/TotRmsAbvGrd}. We decided to keep correlated features to see how 
they impact model underspecification.

\begin{figure*}[t]
    \begin{subfigure}[b]{0.52\textwidth}
       \includegraphics[width=\linewidth]
        {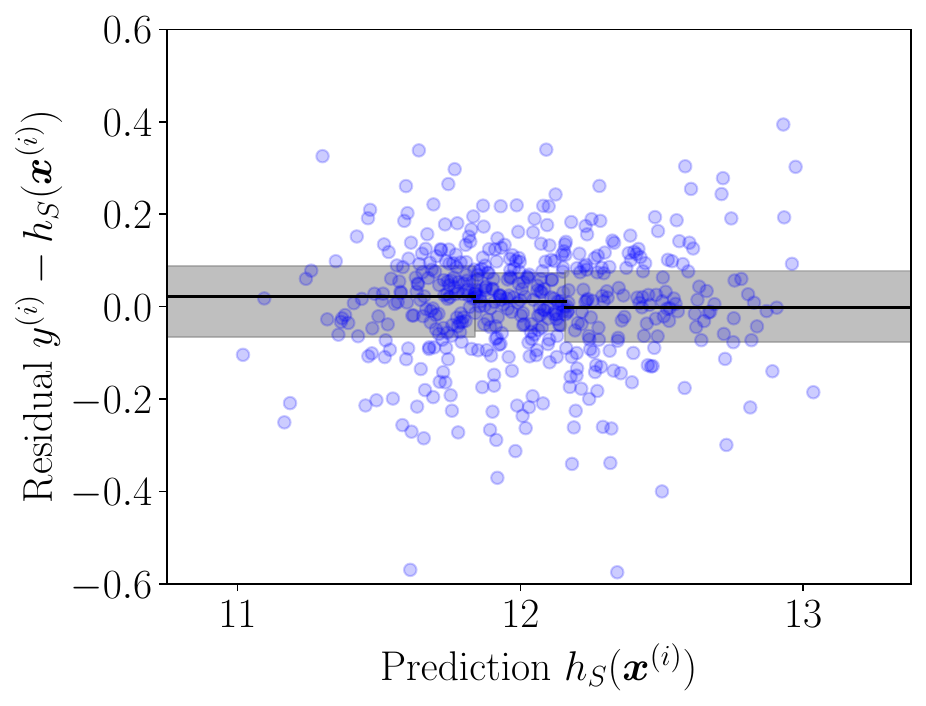}
    \end{subfigure}
    \hfill
    \begin{subfigure}[b]{0.48\textwidth}
       \includegraphics[width=\linewidth]
        {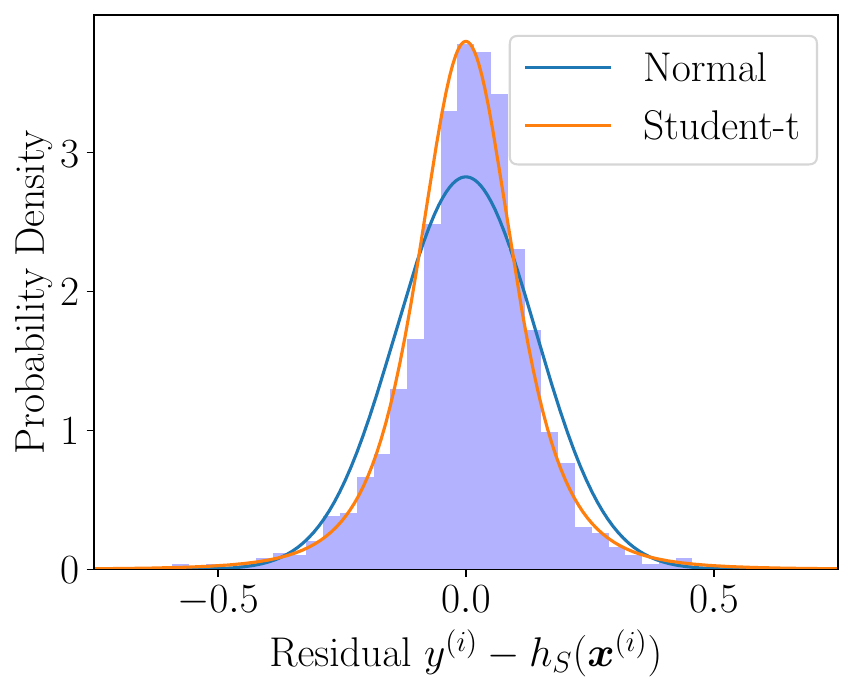}
    \end{subfigure}
    \caption{
    Residuals Analysis of $h_S$. (Left) Residual as a function of the prediction to 
    assess homogeneity. The horizontal lines represent the $25^\text{th}$, $50^\text{th}$, 
    and $75^\text{th}$ percentiles for three different prediction bins. (Right) Histogram 
    of the residuals and fitted densities.
    }
    \label{fig:kaggle_residual}
\end{figure*}


Additive Regression requires deciding which $h_j$ to parametrize with spline bases and which to parametrize
as linear functions of the input $h_j(x_j)=\omega_j x_j$. For each feature, we fitted the target with a 
depth-3 decision tree using only that feature as input and selected the $k$ features with the lowest RMSE 
for spline parametrization. We tuned the hyperparameter $k$, the polynomial degree 
of the splines, the number of knots, and their positions via five-fold cross-validation.
The resulting least-square models had a train RMSE of 0.141.
As references for test performance, predicting the average training set target yields a
RMSE of 0.426 on Kaggle while Gradient Boosting\footnote{\url{https://www.kaggle.com/code/eesuck/xgboost-regressor}} 
leads to an error of 0.127. In the case of Additive Regression, we got a test error of 0.150.

To quantify the under-specification of our hypothesis class, we computed the Rashomon Set
of all good models on the training set. We could not use the test set since labels are not available.
To fix a reasonable value of tolerance $\epsilon$, we investigated whether the assumptions 
behind the capture bound of \textbf{Proposition \ref{prop:capture_bound_linear}} were reasonable 
on this dataset. That is, could the labels have been provided by the best-in-class $h^\star$ plus iid noise
$\Delta$? We first assumed that $h_S$ and $h^\star$ make similar enough predictions on training data to 
view the residuals $\{y^{(i)} - h_S(\myvec{x}^{(i)})\}_{i=1}^N$ as noise samples
$\{\Delta^{(i)}\}_{i=1}^N$. Figure \ref{fig:kaggle_residual}~(Left) supports that the residuals are homogeneous
but Figure \ref{fig:kaggle_residual}~(Right) reveals they are not Gaussian and are better modeled with a 
Student-$t$. Supported by these observations, we modeled the noise $\Delta$ with a Student-$t$ 
distribution fitted on the residuals. Afterward, we approximated the distribution of 
$\emploss{S}{h^\star}=\frac{1}{N}\sum_{i=1}(\Delta^{(i)})^2$ with the empirical distribution resulting 
from sampling $\{\Delta^{(i)}\}_{i=1}^N\sim t_\nu^N$ a total of $2\!\times\! 10^5$ times. 
Taking the $95^\text{th}$ percentile of this empirical distribution yielded the tolerance 
$\epsilon_{\text{max}}=0.1444$. Under our assumptions, by fixing 
$\epsilon=\epsilon_{\text{max}}=0.1444$ we have an 
approximate $95\%$ chance that the Rashomon Set will include the best-in-class model.

\begin{figure*}[t]
    \begin{subfigure}[b]{0.5\textwidth}
       \includegraphics[width=\linewidth]
        {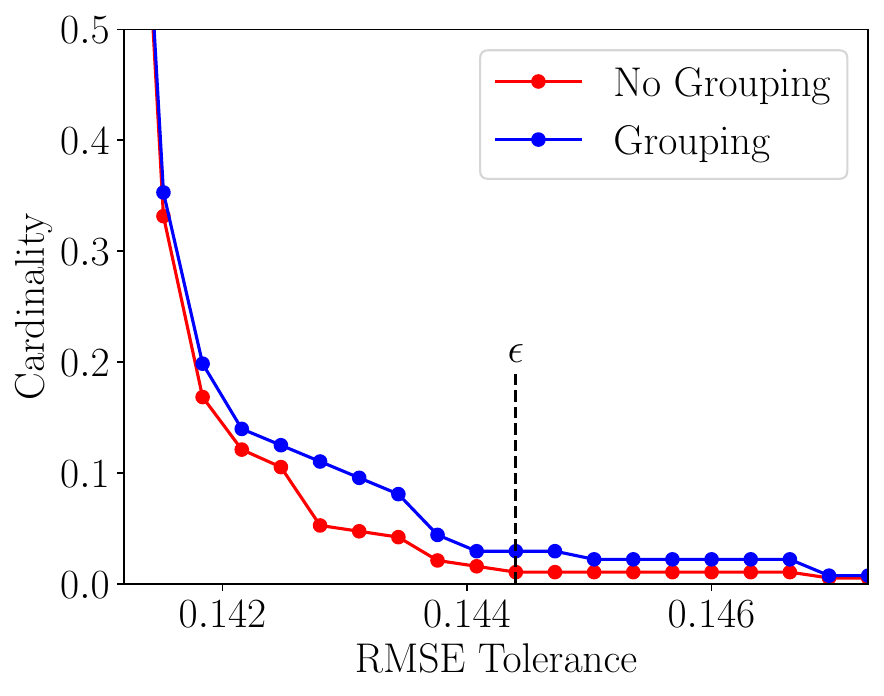}
    \end{subfigure}
    \hfill
    \begin{subfigure}[b]{0.42\textwidth}
       \includegraphics[width=\linewidth]
        {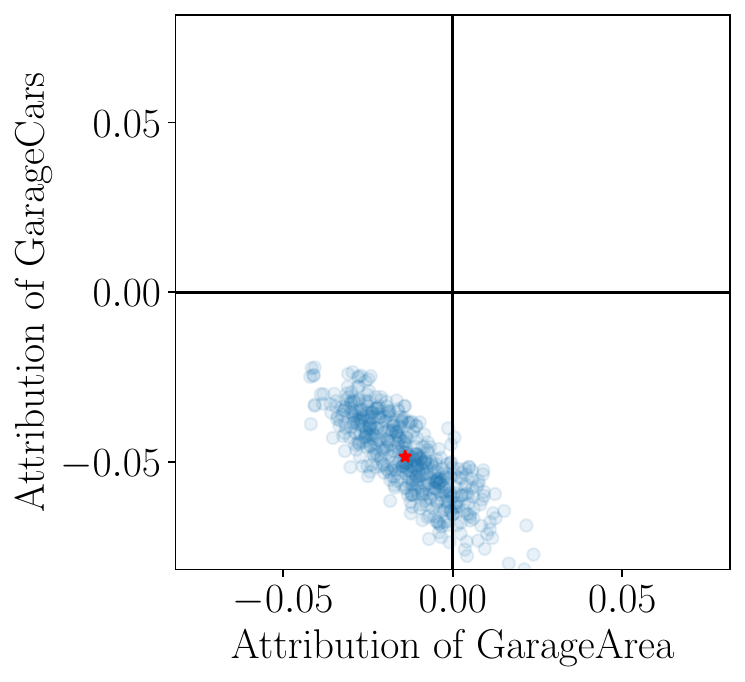}
    \end{subfigure}
    \caption{(Left) Sensitivity Analysis regarding the choice of $\epsilon$.
    The median partial-order cardinalities are shown as a function of the 
    tolerance on training RMSE. The two curves represent whether or not we group 
    correlated features together. (Right) Local Feature Attributions of models sampled
    from the Rashomon Set boundary. We observe a trade-off between local attributions of
    correlated features.}
    \label{fig:kaggle_epsilon_tune}
\end{figure*}

\subsubsection{Local Feature Attribution}

\begin{figure*}[t]
    \centering
    \begin{subfigure}[c]{0.49\textwidth}
        \includegraphics[width=\linewidth]
        {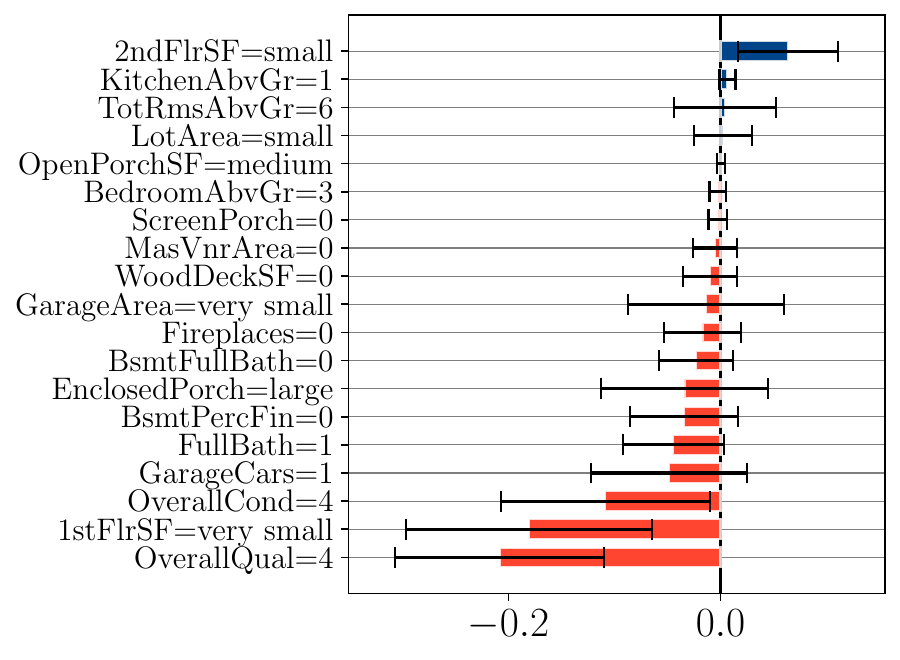}
        \label{fig:kaggle_attrib_instance_2}
    \end{subfigure}
    \hfill
    \begin{subfigure}[c]{0.49\textwidth}
        \raisebox{0.5cm}{
        \hspace{0.25cm}
            \includegraphics[width=0.87\linewidth]
            {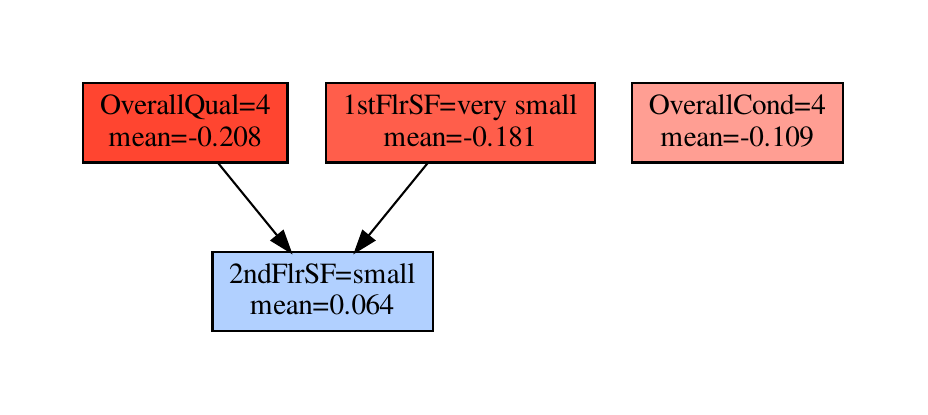}
        }
    \end{subfigure}\vspace{-0.5cm}
    \begin{subfigure}[c]{0.49\textwidth}
        \includegraphics[width=\linewidth]
        {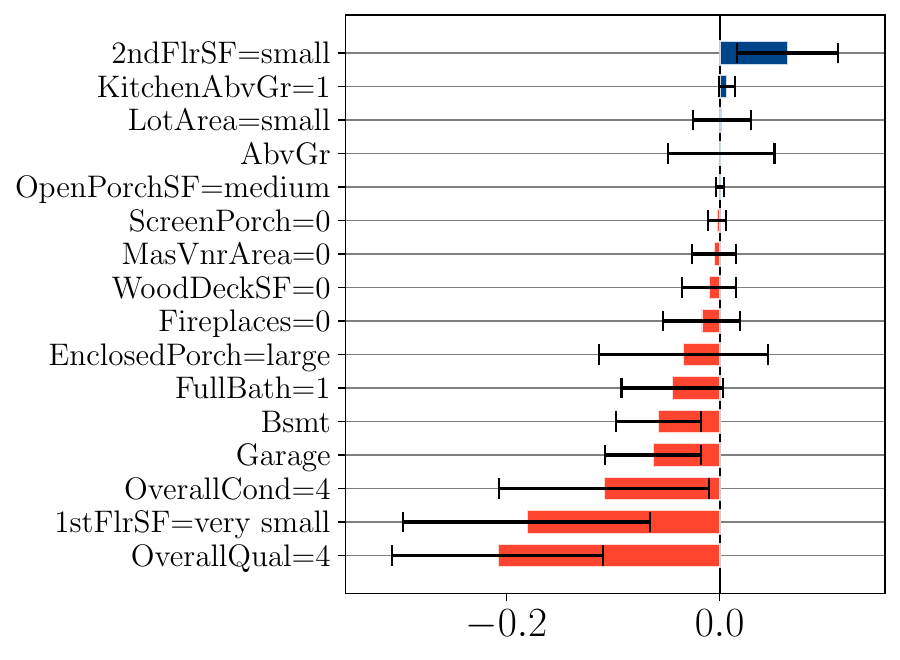}
        \label{fig:kaggle_attrib_instance_2}
    \end{subfigure}
    \begin{subfigure}[c]{0.49\textwidth}
        \raisebox{0.5cm}{
        \includegraphics[width=\linewidth]
        {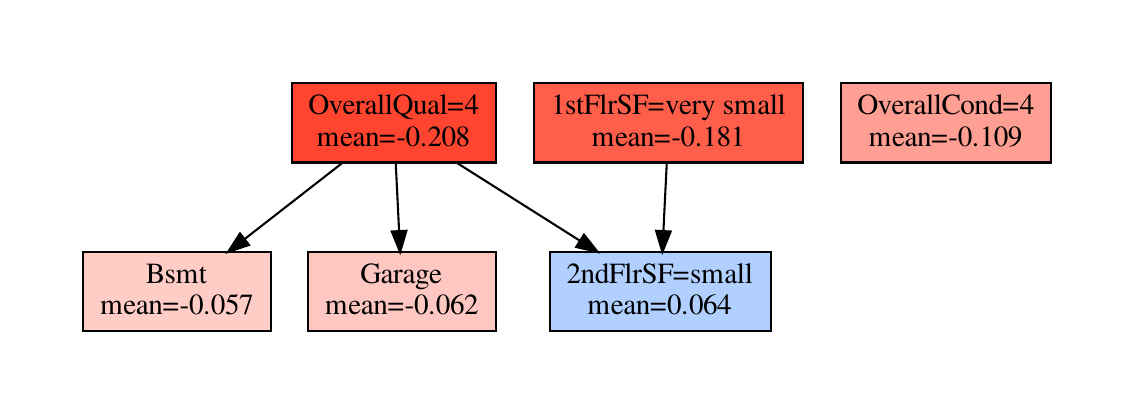}
        }
    \end{subfigure}
    \vspace{-.5cm}
    \caption{Local feature attributions of a house with a below-average price. 
    (Top) Without grouping. (Bottom) With grouping.}
    \label{fig:kaggle_explain_instances}
\end{figure*}

Local feature attributions were computed on all houses in the training set using 
Equation \ref{eq:basis_shap_additive}. To conduct a sensitivity analysis of our 
local explanations with respect to the choice of $\epsilon$, we computed the partial order 
cardinalities (cf. Equation \ref{eq:cardinality}) at several tolerance values, see 
the red curve in Figure \ref{fig:kaggle_epsilon_tune}(Left). We observe that the 
cardinalities are stable with respect to small perturbations of $\epsilon$. However, the cardinalities are 
rather small, which we suspect is partly due to feature correlations. To test this hypothesis, we sampled
models from the Rashomon Set boundary and compared their local attributions for correlated features, see
Figure \ref{fig:kaggle_epsilon_tune}(Right). We observe a trade-off: the more models rely on one feature, the
less they rely on the other. To deal with this under-specification, we propose to
\emph{group} correlated features $i$ and $j$ and consider their \emph{joint} local attribution
\begin{equation}
    \phi_{ij}(h, \bm{x}) := \phi_i(h, \bm{x}) + \phi_j(h, \bm{x}).
\end{equation}
instead of their separate local attribution. Therefore, we group \texttt{GarageArea/GarageCars} 
into \texttt{Garage}, \texttt{BsmtPercFin/BsmtFullBath} into \texttt{Bsmt} and 
\texttt{BedroomAbvGrd/TotRmsAbvGrd} into \texttt{AbvGrd}. Doing so, one obtains partial orders with 
higher cardinalities as evidenced by the red curve in Figure \ref{fig:kaggle_epsilon_tune}(Left), 
suggesting that grouping correlated features can reduce explanation under-specification. In the sequel, we will
present local/global feature attributions with and without grouping.

We explained the predictions on the house with the fifth-smallest selling price: 40K USD. Said predictions
ranged from 70K to 100K in the Rashomon Sets of both Scenarios and there was a consensus that the gap was negative. 
Figure \ref{fig:kaggle_explain_instances} shows the local feature attribution on this instance and the partial 
orders that summarize all the statements good models agree on. We observe that features \texttt{OverallQual=4}
(quality of materials and finish of the house from a scale of 1 to 10) and 
\texttt{1stFlrSF=very small} have maximal importance when explaining the drop in price relative to the 
mean. These statements are robust to the choice of model within the Rashomon Set. 
\texttt{OverallQual=4} also has maximal importance but, because it is incomparable 
to any other feature, we find it safer to simply ignore it. Moreover, we note that there are no 
garage-related and basement-related features in the Hasse diagram without Grouping.
As illustrated in Figure \ref{fig:kaggle_explain_instances}~(Top-Left), the attributions of 
highly correlated features such as \texttt{GarageArea/GarageCars} and \texttt{BsmtPercFin/BsmtFullBath} 
do not have a consistent sign. This is because competing models can rely on one feature or the other, 
which prohibits a consensus on which feature leads to a decrease in selling price. By considering the
joint local attribution of correlated features, the attributions of the groups
\texttt{Garage} and \texttt{Bsmt} become consistently negative, see Figure 
\ref{fig:kaggle_explain_instances}~(Bottom-Left). Hence, our framework allows us to get consistent
model interpretations in spite of the presence of strong feature correlations.

Finally, we note that, at 
tolerance $\epsilon=0.1444$, the sign of the gaps is well-defined for $68\%$ of the houses. 
For about one-third of houses, it does not make sense to ask the contrastive question: 
\emph{Why is this house price higher/lower than average?}.
We discuss how to deal with those houses in \textbf{Appendix \ref{app:gap}}.

\newpage
\subsubsection{Global Feature Importance}

We end this section by presenting global feature importance in Figure \ref{fig:kaggle_explain_global}. 
For simplicity, we only include in the Hasse diagrams the features whose global 
importance is non-null across the whole Rashomon set. Such features appear to 
be necessary in the sense that every model in the Rashomon Set relies on them.
As seen previously, the partial order without Grouping does not contain features related 
to the basement and the garage. We believe that this can be again 
attributed to strong feature correlations. By grouping correlated features,
we discover that the joint effects of \texttt{Garage} and \texttt{Bsmt} are important for all
good models.

As a final observation, all models agree that \texttt{1stFlrSF} is more important than 
\texttt{OverallCond} despite the fact that their min-max intervals of global importance intersect. 
This means that looking at min-max intervals of global feature importance (\ie the Model Class Reliance 
\citep{fisher2019all}) does not provide the full picture of the Rashomon Set.

\begin{figure*}[t]
     \centering
     \begin{subfigure}[c]{0.5\textwidth}
         \includegraphics[width=\linewidth]
         {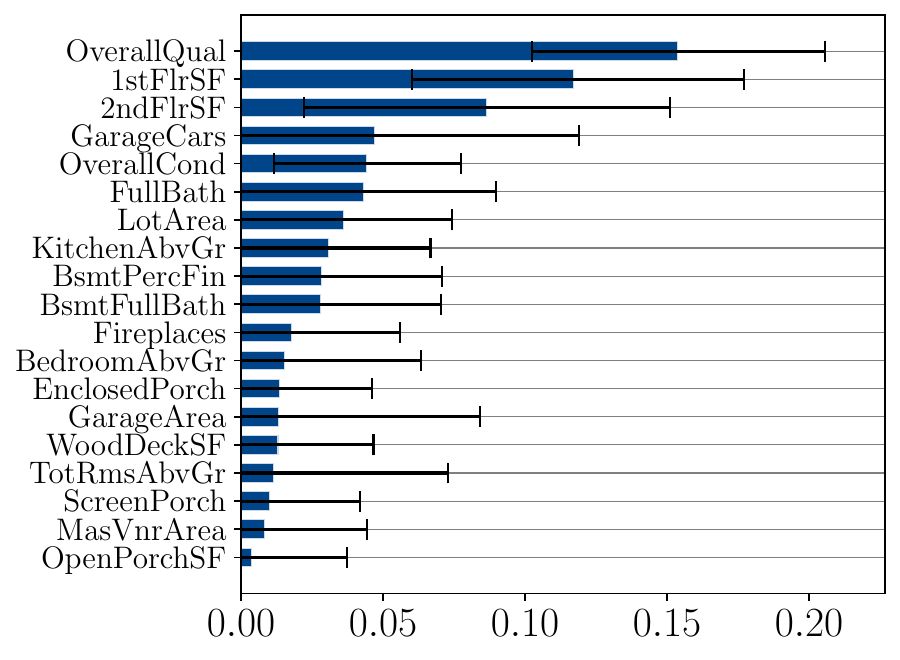}
     \end{subfigure}
     \hfill
     \begin{subfigure}[c]{0.49\textwidth}
     \raisebox{0.5cm}{
         \includegraphics[width=0.9\linewidth]
         {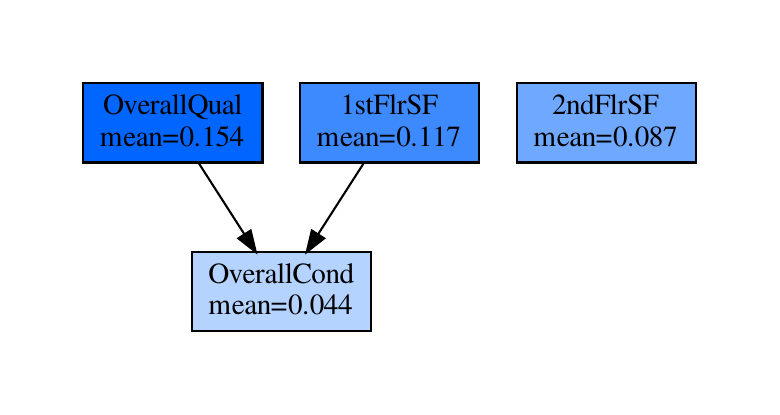}
     }
     \end{subfigure}
     \begin{subfigure}[c]{0.5\textwidth}
         \includegraphics[width=\linewidth]
         {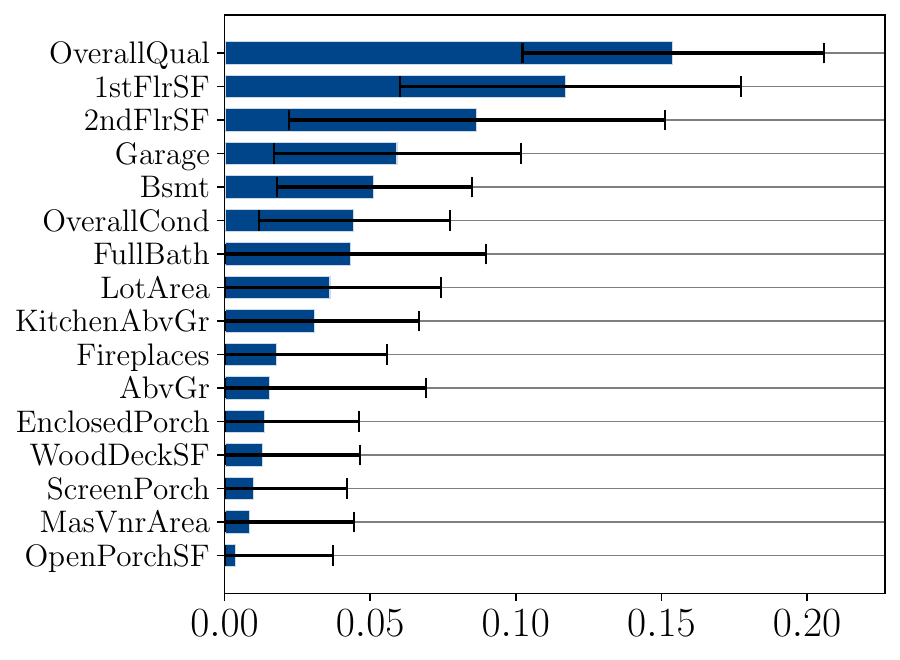}
     \end{subfigure}
     \begin{subfigure}[c]{0.49\textwidth}
     \raisebox{0.5cm}{
         \includegraphics[width=0.85\linewidth]
         {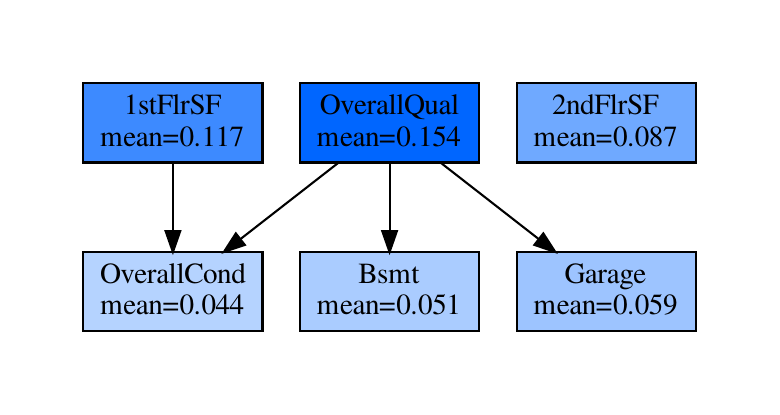}
     }
     \end{subfigure}
     \vspace{-.5cm}
     
    \caption{Global Feature Importance of the Kaggle-Houses dataset. 
    (Top) Without grouping. (Bottom) With grouping.}
    \label{fig:kaggle_explain_global}
\end{figure*}
\section{Application to Kernel Ridge Regression}\label{sec:kernel}
\subsection{Rashomon Set}
Let $k:\X\times \X\rightarrow \R_+$ be a symmetric positive definite kernel. Then such a kernel induces a 
functional space called a Reproducing-Kernel-Hilbert-Space (RKHS), which is actually the completion of the 
Pre-Hilbert space \citep{mohri2018foundations}
\begin{equation}
    \Hypo_k := \bigg\{h_{\bm{\alpha}}(\bm{x})=\sum_{j=1}^R \alpha_j k(\bm{x}, \bm{r}^{(j)}) \,\,\,\text{for}\,\,\,R\in \mathbb{N},\bm{\alpha}\in \R^R,\bm{r}^{(j)}\in \X\bigg\}
\end{equation}
endowed with the scalar product
\begin{equation}
    \langle \,k(\cdot, \bm{r}^{(i)})\,,\,k(\cdot, \bm{r}^{(j)})\,\rangle_{\Hypo_k} 
    := k(\bm{r}^{(i)}, \bm{r}^{(j)}),
\end{equation}
from which the terminology ``Reproducing-Kernel'' arises. The space $\Hypo_k$ is infinite-dimensional 
since it requires specifying any integer $R$ and any $R$ reference inputs $\bm{r}^{(j)}$. For simplicity, 
we shall fix the $R$ reference inputs in advance and store them in a dictionary $D:=\{\bm{r}^{(j)}\}_{j=1}^R$. 
We will then use the finite-dimensional approximation
\begin{equation}
    \Hypo^D_k := \bigg\{h_{\bm{\alpha}}(\bm{x})=\sum_{j=1}^R \alpha_j k(\bm{x}, \bm{r}^{(j)}) \,\,\,\text{for}\,\,\,\bm{\alpha}\in \R^R\bigg\}
\end{equation}
s.t. $\Hypo^D_k\subset \Hypo_k$ as was done in \citep{fisher2019all}. Since these spaces are still 
considerably expressive, it is common to apply regularization when learning models from them. 
From the Rashomon perspective, this implies studying the Rashomon Set
\begin{equation}
    \mathcal{R}(\Hypo^D_k,\epsilon) := 
        \bigg\{ h_{\bm{\alpha}} \in \mathcal{H}^D_k : 
       \emploss{D}{h_{\bm{\alpha}}} + \lambda \|h_{\bm{\alpha}}\|^2 \leq \epsilon \bigg\},
    \label{eq:regularized_rashomon_set}
\end{equation}
where $\lambda>0$ is a regularization hyper-parameter that is fine-tuned by cross-validation and 
$\|h_{\bm{\alpha}}\|^2:=\langle h_{\bm{\alpha}},h_{\bm{\alpha}}\rangle_{\Hypo_k}=\sum_{i,j=1}^R \alpha_i \alpha_j 
k(\bm{r}^{(i)}, \bm{r}^{(j)})$
is the functional norm induced by the scalar product on $\Hypo_k$.
We let $\bm{K}\in \R^{R\times R}$ be the symmetric positive semi-definite matrix of kernel 
evaluations on the dictionary $\bm{K}[i, j]=k(\bm{r}^{(i)}, \bm{r}^{(j)})$. The regularized least-square solution is
\begin{equation}
    \kleastsq = (\bm{K} + \lambda R \bm{I})^{-1}\bm{y}.
    \label{eq:sol_kernel}
\end{equation}

Given this notation, we can present the Rashomon Set of Kernel Ridge Regression.

\begin{definition}[Rashomon Set for Kernel Ridge Regression]
    Let $\Hypo^D_k$ be the space induced by the kernel $k$ and dictionary $D$, $\ell$ be the squared loss, $\lambda> 0$ be a regularization hyper-parameter, and $\kleastsq$ be the solution of the regularized least-square. If one
    uses the performance threshold $\epsilon \geq \emploss{D}{h_{\kleastsq}}+ \lambda \|h_{\kleastsq}\|^2$,
    then the Rashomon set $\mathcal{R}(\Hypo^D_k,\epsilon)$ consists of all models 
    $h_{\bm{\alpha}}$ s.t.
    \begin{equation}
        (\bm{\alpha}-\kleastsq)^T(\bm{K}/R + \lambda \bm{I})\bm{K}(\bm{\alpha}-\kleastsq)
        \leq \epsilon - \emploss{D}{h_{\kleastsq}}-\lambda \|h_{\kleastsq}\|^2.
    \end{equation}
    We see that the Rashomon Set is an ellipsoid in $\R^R$.
\end{definition}

The proof is mutatis mutandis like the proof for Ridge Regression in \cite{semenova2022existence} 
but with Kernel Ridge instead.

\subsection{Asserting Model Consensus}

Unlike the previous section, the model $h_{\bm{\alpha}}$ is no longer additive, and hence there is no universal 
way to assign a score $\phi_i$ to each input feature when explaining a gap in model predictions. Hence, we must 
rely on either SHAP or Integrated Gradient, which are two principled approaches for computing said scores. 
Because the exponential burden of Shapley values has not yet been solved for kernel methods, SHAP was not 
used and we instead employed the Integrated Gradient with a single baseline input $\bm{z}$. Henceforth, 
assuming the kernel is continuous and has continuous partial derivatives ($k\in \mathbb{C}^1(\X\times\X))$, 
we compute the IG as follows.
\begin{equation}
    \begin{aligned}
        \phi^\text{IG}_i(h_{\bm{\alpha}}, \bm{x}, \bm{z}) 
        &:= (x_i-z_i)\int_0^1\frac{\partial h_{\bm{\alpha}}}{\partial x_i}\bigg|_{t\myvec{x} + (1-t)\myvec{z}}dt\\
        &=\sum_{j=1}^R \alpha_j 
        \underbrace{\bigg[(x_i-z_i)\int_0^1\frac{\partial k(\cdot, \bm{r}^{(j)})}{\partial x_i}\bigg|_
        {t\myvec{x} + (1-t)\myvec{z}}dt \bigg]}_{\phi_{ij}}
        =\sum_{j=1}^R \alpha_i \phi_{ij},
    \end{aligned}
    \label{eq:kernel_IG}
\end{equation}
which is  a linear function of the coefficients $\bm{\alpha}$. Consequently, asserting a consensus on IG feature
attributions will again amount to optimizing a linear function over an ellipsoid so we can leverage results from 
the previous section. The only additional step required for Kernel Ridge is to pre-compute the path integrals 
$\phi_{ij}$ with common quadrature methods.

Similarly to Additive Models in \textbf{Section \ref{sec:additive:consensus:GFI}}, one can combine 
local feature attributions into global feature importance $\bm{\Phi}^{[2]}$ which are a quadratic form 
of the $\bm{\alpha}$ coefficients. Again, asserting a consensus over the Rashomon Set would require 
solving a TRS.

\subsection{Criminal Recidivism Prediction}\label{sec:experiments:subsec:COMPAS}

COMPAS is a proprietary model currently employed in the United States to predict the risk of recidivism from 
individuals that were recently arrested. These risks are encoded as integers going from 1 (low-risk) to 10 
(high-risk). The use of this automated tool in the justice system is driven by the promise of providing 
objective information to judges based on empirical data, thus circumventing human biases. Still, the strong 
reliance of models on historical data means they can reproduce/perpetuate past injustices. To test such claims,
ProPublica has collected several thousands of COMPAS scores from 2013-2014 in the Florida Broward County 
\citep{compas}. In the resulting article, several pairs of Caucasian and African-American defendants are 
presented along with their COMPAS scores, the former often being lower than the latter despite the Caucasian 
defendant having a longer criminal history. These examples of pairs along with the subsequent analysis from 
the article seem to imply that the proprietary model depends on race. However, the methodology of ProPublica 
has been heavily criticized alongside the claim that COMPAS depends explicitly on race \citep{rudin2018age}.
Hence, there may exist alternative explanations besides race for the discrepancy between scores, so it is 
pertinent to study the local feature attributions of the whole Rashomon Set of reasonable models 
when predicting COMPAS scores.

To analyze the dependencies of risk scores on the various features, we repeated the experiments of 
\citet{fisher2019all} where a Kernel Ridge Regression model was fitted directly on the 1-10 scores 
from the ProPublica dataset. The same features were employed while adding two additional ones related 
to juvenile misdemeanors and felonies. The dataset was split in train and test sets with ratios of 
0.8 and 0.2. The training samples were used to define the dictionary of reference inputs $D$. 
We utilized the polynomial kernel $k(\bm{x},\bm{x'}) = (\gamma \langle \bm{x},\bm{x}'\rangle + 1)^p$ 
with degree $p=3$ and the Gaussian kernel $k(\bm{x},\bm{x'}) = \text{exp}(-\gamma \|\bm{x}-\bm{x}'\|^2)$. 
The kernel scale hyper-parameter $\gamma$ and the regularization factor $\lambda$ were fine-tuned with 
5-fold cross-validation on the training set, see the results for Gaussian Kernels in 
Figure \ref{fig:COMPAS_result}~(a). Similar results were obtained with Polynomial Kernels. The test set 
RMSE of the final model was 2.11 for Gaussian Kernels and 2.12 for Polynomial Kernels. We note that 
the performances are worse than \citet{fisher2019all} because, unlike them, we predict the recidivism 
risk scores and not the risk scores for \emph{violent} recidivism, which could be easier to predict. 
In this paper, we decided to study the recidivism risk scores instead since these are the ones that were 
actually discussed in the ProPublica article.

\begin{figure*}[t]
    \centering
    \begin{subfigure}[b]{0.48\textwidth}
        \hspace{1pt}
        \includegraphics[width=\linewidth]
        {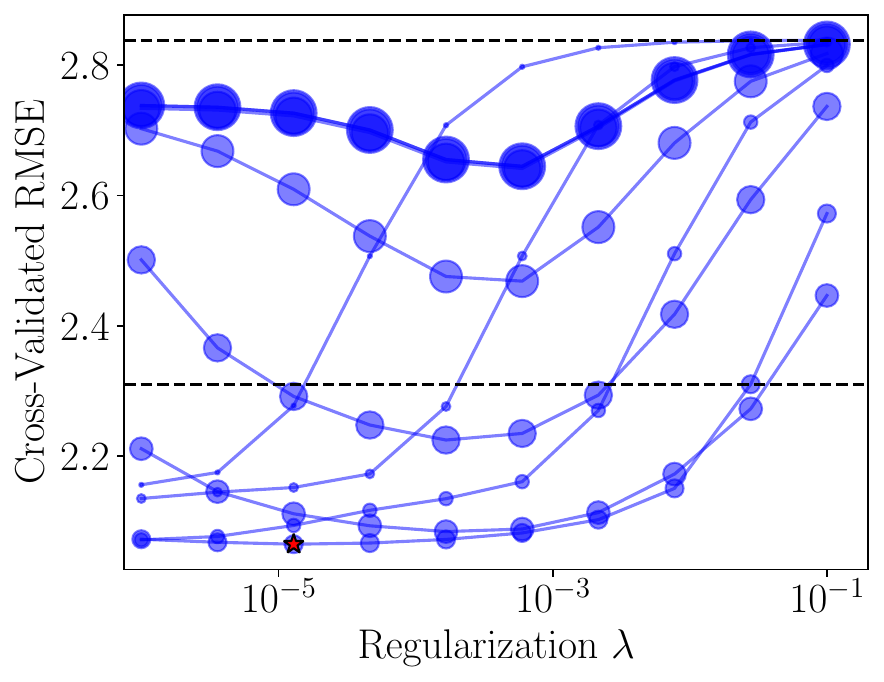}
       \caption{Tuning
       of $\gamma$ (dot sizes) and $\lambda$ with Gaussian kernels.
       The top horizontal line shows the error of the predictor returning the 
       mean, while the bottom line shows the error of a Random Forest 
       with default hyperparameters.}
    \end{subfigure}
    \hfill
    \begin{subfigure}[b]{0.505\textwidth}
    \hspace{5pt}
    \includegraphics[width=\linewidth]
    {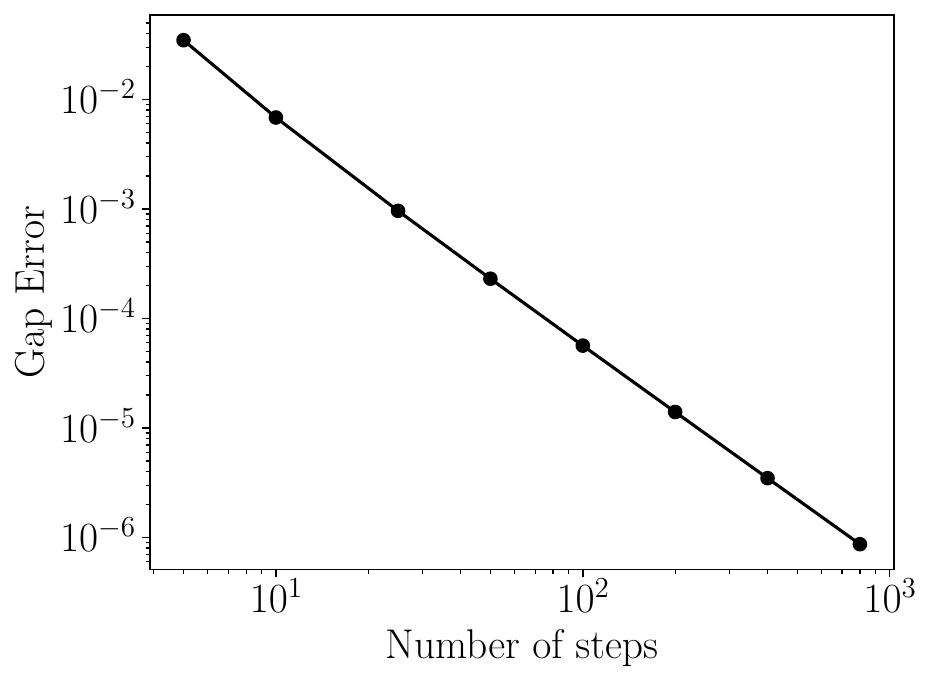}
        \caption{Convergence 
        of the Gap Error w.r.t the number of steps in the quadrature. 
        We observe a second-order convergence. Hence, augmenting the number of steps 
        by a factor 10 reduced the error by a factor 100.}
    \end{subfigure}
    \caption{}
    \label{fig:COMPAS_result}
\end{figure*}

\begin{table}[t]
    \centering
    \begin{tabular}{ccccccc}
    \toprule
    Input & Name & Score & Race & Age & Priors & Charge \\
    \midrule
     $\bm{x}$ & Robert Cannon & 6 & African-American & 22 & 0 & Misdemeanor \\
     $\bm{z}$ &  James Rivelli & 3& Caucasian & 54 & 3 & Felony \\
    \bottomrule
    \end{tabular}
    \caption{Comparison of the COMPAS scores of two individuals.}
    \label{tab:comparison_COMPAS}
\end{table}

After fitting the models, we identified a pair of Caucasian/African-American individuals who were highlighted 
in the ProPublica piece and applied our explainability framework on them. More specifically, we compared 
Robert Cannon and James Rivelli, see Table \ref{tab:comparison_COMPAS}. James Rivelli is a 54-year-old 
Caucasian man who was arrested for shoplifting. Despite having a criminal record with three priors, 
he was assigned a low COMPAS score. In contrast, Robert Cannon, a 22-year-old African-American charged 
with petit theft, was assigned a high risk of recidivism. Letting Robert be the input of $\bm{x}$ and 
James be the input $\bm{z}$, we observe that the differences in scores are also present for the 
Kernel Ridge models: $h_{\kleastsq}(\bm{x})=4.9$ and $h_{\kleastsq}(\bm{z})=2.5$ for Gaussian Kernels, 
and $h_{\kleastsq}(\bm{x})=4.9$ and $h_{\kleastsq}(\bm{z})=2.4$ for Polynomial Kernels. Therefore, 
we have a prediction gap $G(h_{\kleastsq}, \bm{x})=h_{\kleastsq}(\bm{x}) - h_{\kleastsq}(\bm{z})$ 
that is positive.

Given the historical racism in the United States, it is very tempting to look at these two individuals 
and say that Robert Cannon is predicted to have a higher risk ``because of his race''. Still, there 
may exist a diversity of alternative explanations for this discrepancy, which we can study by exploring 
the Rashomon Set of our Kernel Ridge models. The Integrated Gradient was employed using Robert as the 
input of interest $\bm{x}$ and James as the reference input $\bm{z}$ to obtain feature attributions. 
Since computing the IG feature attributions requires estimating the integrals of Equation \ref{eq:kernel_IG} 
with quadratures, we ended up with estimates $\widehat{\bm{\phi}}^\text{IG}(h_{\kleastsq}, \bm{x})$ of 
the real attributions $\bm{\phi}^\text{IG}(h_{\kleastsq}, \bm{x})$. We characterized the estimation error 
of this discretization by reporting the Gap Error
\begin{equation}
    \bigg|\sum_{i=1}^d \widehat{\phi}_i(h_{\kleastsq}, \bm{x}) 
    - G(h_{\kleastsq}, \bm{x})\bigg|,
    \label{eq:gap_error}
\end{equation}
and used it as a proxy of how well $\widehat{\bm{\phi}}(h_{\kleastsq}, \bm{x})$ approximates 
$\bm{\phi}(h_{\kleastsq}, \bm{x})$. By simplicity, the Trapezoid quadrature was implemented, 
see Figure \ref{fig:COMPAS_result}~(b) for the convergence of the Gap Error as the number of steps 
in the quadrature increases. We note that, as expected, the quadrature converges to the second order. 
For the remainder of the analysis, we have employed quadratures with 1000 steps.

Now, can we use a capture bound to set the tolerance $\epsilon$? Unfortunately, the empirical loss 
$\emploss{D}{h_{\bm{\alpha}}} + \lambda \|h_{\bm{\alpha}}\|^2$ involves regularization so we cannot 
guarantee that the Rashomon Set (cf. Equation \ref{eq:regularized_rashomon_set}) contains $h^\star$ 
unless we make a strong (unverifiable) smoothness assumption $\|h^\star\|^2\leq B$.
Without knowledge of $B$, we instead resort to a relative increase heuristic 
$\epsilon = 1.01\times \big[\emploss{D}{h_{\kleastsq}}+\lambda\|h_{\kleastsq}\|^2\big]\approx 4.23$ 
(an increase of $\epsilon_\text{rel}=1\%$ of the minimum objective value $4.19$).
Unlike \textbf{Sections \ref{sec:additive:experiments} \& \ref{sec:random_forest:experiments}}, 
we did not compute a sensitivity analysis w.r.t. changes in $\epsilon$. Rather, by setting it to 
a reasonably small value, we only wish to prove the existence of competing models that disagree 
on their explanation for the discrepancy between James and Robert.

\begin{figure}[t]
     \centering
     \begin{subfigure}[c]{0.58\textwidth}
         \includegraphics[width=\linewidth]
         {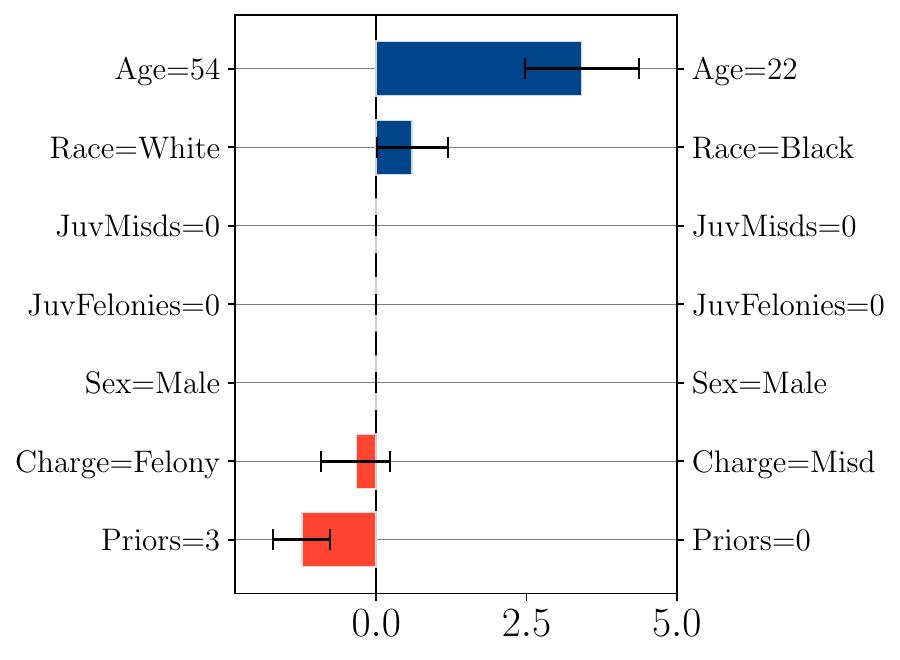}
     \end{subfigure}
     \hfill
     \begin{subfigure}[c]{0.41\textwidth}
     \raisebox{0.5cm}{
     \hspace{0.95cm}
         \includegraphics[width=0.65\linewidth]
         {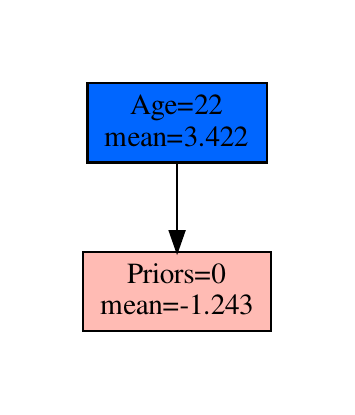}
     }
     \end{subfigure}
     \begin{subfigure}[c]{0.58\textwidth}
         \includegraphics[width=\linewidth]
         {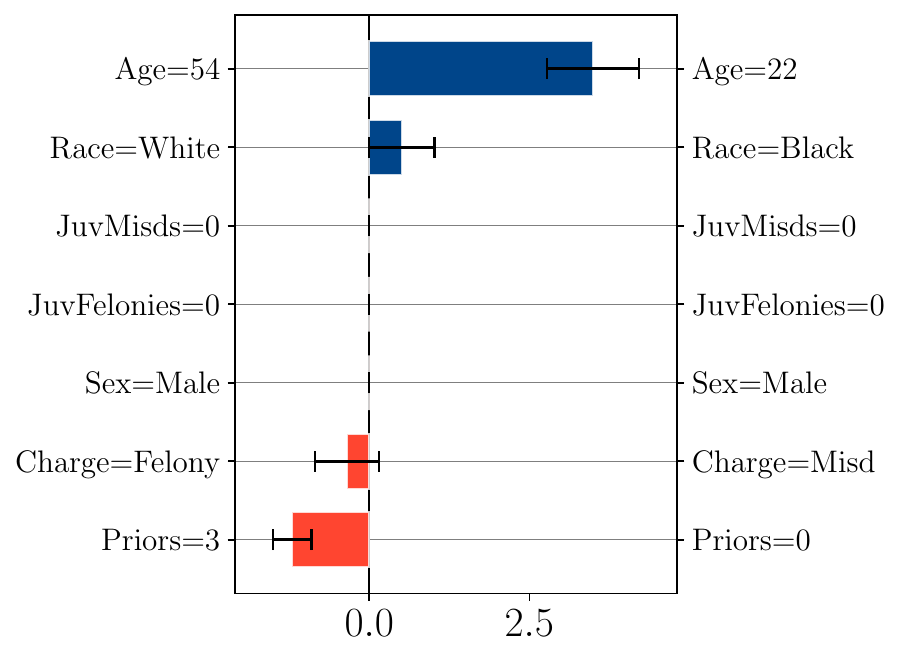}
     \end{subfigure}
     \hfill
     \begin{subfigure}[c]{0.41\textwidth}
     \raisebox{0.5cm}{
     \hspace{0.95cm}
         \includegraphics[width=0.65\linewidth]
         {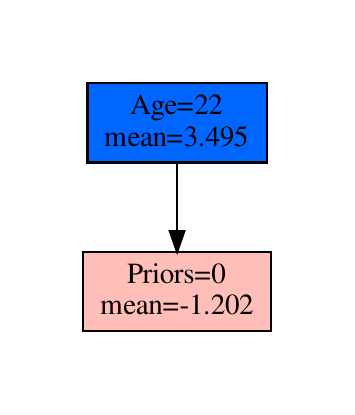}
     }
     \end{subfigure}
    \caption{
    Local feature attribution comparing Robert Cannon to James Rivelli.
    (Top) Gaussian Kernels. (Bottom) Polynomial Kernels. The features on the left of the bar 
    charts represent James while the values on the right represent Robert.}
    \label{fig:compas_explain_instances}
\end{figure}

Figure \ref{fig:compas_explain_instances} presents the local feature attributions across the Rashomon Sets
$\mathcal{R}(\Hypo^D_k,4.23)$ of Gaussian and Polynomial Kernels. Since the results are consistent 
across the two types of Kernels, we will only discuss Gaussian Kernels. Inspecting the top bar plot, 
we see that, according to the Integrated Gradient of the empirical loss minimizer, 
plotted as the blue/red bars, the features $\texttt{Age=22}$ and $\texttt{Race=Black}$ have 
positive attributions while the features $\texttt{Charge=Misdemeanor}$ and $\texttt{Prior=0}$
have negative attributions. This suggests that one of the possible explanations for the 
high risk of Robert relative to James is racial discrimination toward African-Americans. However, 
when we additionally consider the opinion of models with slightly worst performance on the training data, 
some of our previous statements on feature attribution cease to hold. Importantly, there exists a
competing model $h'\in\mathcal{R}(\Hypo^D_k,4.23)$ that yield a null attribution to the feature 
\texttt{Race=Black}, and whose test error is not significantly worse than $h_{\kleastsq}$ according 
to a paired Student-$t$ test with $\delta=0.05$. Therefore, there are reasonable explanations for 
the disparity between Robert and James, that do not rely on Robert being African-American.
Even when considering the whole Rashomon Set, there remain statements on which models reach consensus. 
Notably, the attribution of the feature \texttt{Age=22} remains positive and has maximum importance. 

These observations are concordant with previous work of \citet{rudin2018age} which hypothesizes that 
COMPAS depends strongly on age and (at most) weakly on race. Nonetheless, our local analysis on James 
and Robert must not be taken as absolute facts about the proprietary model COMPAS. This is because we 
do not have access to the model and we are surrogating it with Kernel Ridge models fitted on 7 features. 
The original COMPAS model, on the contrary, takes 137 different factors into consideration to produce a 
score \citep{rudin2018age}. Our analysis is more of a proof of concept that our explainability framework 
can make sense of the local feature attributions of competing models and that it can highlight the 
diversity of explanations for the discrepancies between two individuals.

\newpage
\section{Application to Random Forests}\label{sec:random_forest}
\subsection{Rashomon Set}

A Random Forest (RF) is an ensemble of independently trained decision trees whose predictions are 
averaged to yield the final predictions \citep{breiman2001random}. To increase diversity, each tree is 
trained on a different bootstrap sample of the original dataset and each inner split is done among a 
random subset of features. We let $s$ represent the seed encoding all pseudo-random processes in the 
training of a single tree $t_s$. If $\mathcal{S}$ is a distribution over all possible seeds on a computer, 
the theoretical definition of a RF is
\begin{equation}
    h(\myvec{x}) = \E_{s\sim \mathcal{S}}[ \tree_s(\myvec{x})].
    \label{eq:rf}
\end{equation}
Given the finite representation of numbers on a computer, we can assume that the set of possible seeds 
is finite and of size $M$. Then, a reasonable choice of distribution over seeds is the uniform over 
$M$ seeds \ie $\mathcal{S} = U(\{1, 2\,\ldots,M\})$. In practice, the expectation $\E_{s\sim \mathcal{S}}$ 
has to be approximated using Monte-Carlo sampling. Given $m < M$, we subsample $m$ seeds uniformly 
at random $S \sim \mathcal{S}^m$, and return the sample average as our estimate of the RF
\begin{equation}
    h_S(\myvec{x}) =\frac{1}{m} \sum_{s\in S} \tree_s(\myvec{x}).
\end{equation}
By the weak law of large numbers, the estimated RF should converge to the true RF 
(cf. Equation \ref{eq:rf}) as $m$ increases. Since sampling $m$ seeds out of $M$ with/without replacement 
assigns a non-zero probability to any subset of $m$ seeds, we conceptualize the space of all 
\textbf{possible} RFs as the collection of all subsets of trees.
\begin{definition}
    Given a large set $\mathcal{T}=\{\tree_s\}_{s=1}^M$ of $M$ trees trained with $M$ seeds, 
    the set of all possible RFs of $m$ trees is
    \begin{equation}
        \Hypo_m := \bigg\{\frac{1}{m}\sum_{\tree \in T} \tree \,\,:\,\, T \subseteq \mathcal{T} \,\,\,\,\text{and}\,\,\,\, |T|= m\bigg\},
    \end{equation}
    i.e. all averages of subsets of $m$ trees from $\mathcal{T}$. Moreover, we define
    $\Hypo_{m:}:= \cup_{k=m}^M \Hypo_k$ as all RFs with least $m$ trees. We interpret 
    $\Hypo_{1:}$ as the set of all possible RF that can ever appear in practice on a given 
    dataset, regardless of the choice of $m$.
\end{definition}

\begin{figure}[t]
    \centering
    \includegraphics[width=\linewidth]
    {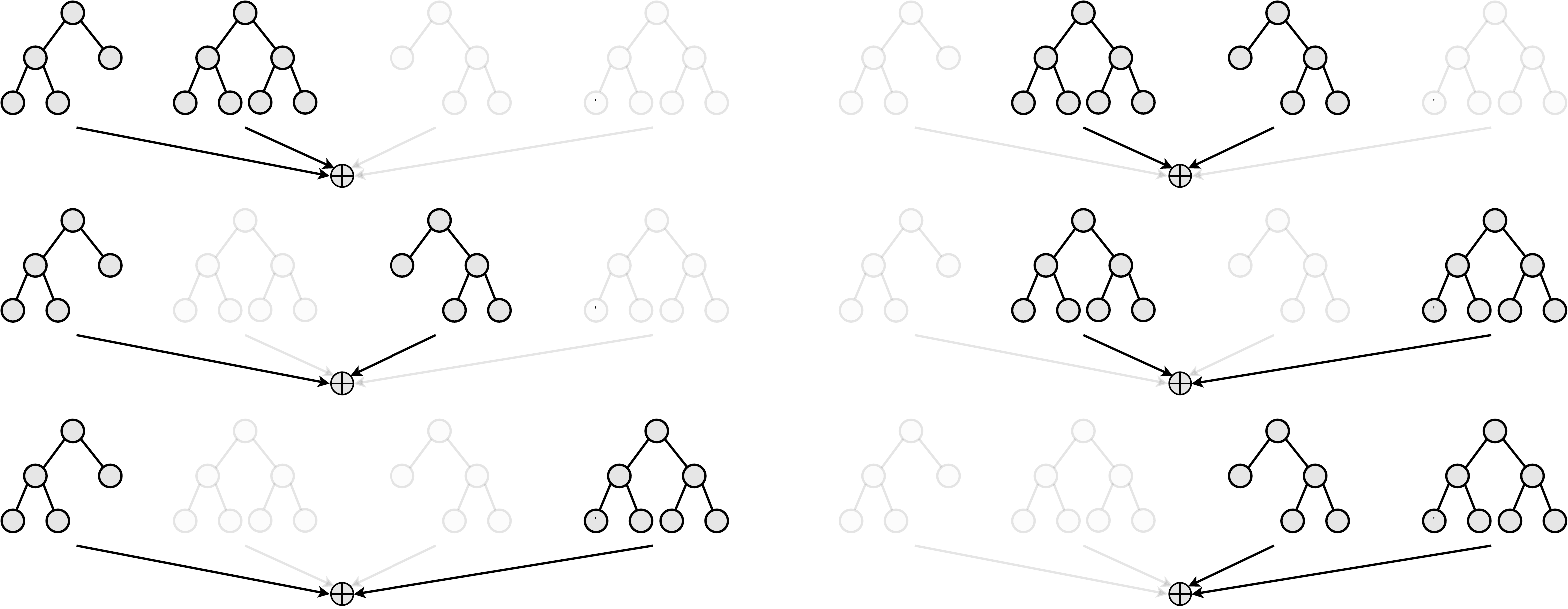}
    \caption{Example of the space $\Hypo_{2}$ representing all the 
    possible groupings of $2$ decision trees out of $M=4$.}
    \label{fig:rf_Hm}
\end{figure}

Figure \ref{fig:rf_Hm} illustrates an example of space $\Hypo_m$ which accentuates their combinatoric nature.
We also note the monotonic relation $m< m' \Rightarrow \Hypo_{m:} \supset \Hypo_{m':}\,\,$.
Since we interpret $\Hypo_{1:}$ as the set of all possible RFs that can ever appear in practice on a dataset, 
we aim to characterize its Rashomon Set $\mathcal{R}(\Hypo_{1:},\epsilon)$. Such a Rashomon Set cannot be 
explicitly represented because if its exponential size ($|\Hypo_{1:}|=2^M-1)$. Still, we will see that studying the 
space $\Hypo_{m:}$ for a carefully chosen $m$ can help us characterize a large subset of the Rashomon Set. The reason we 
want to work with hypotheses $\Hypo_{m:}$ is that they have a desirable property: optimizing a linear functional over 
them is tractable, as highlighted by the following proposition.
\begin{proposition}
    Let $\mathcal{T}:= \{\tree_s\}_{s=1}^M$ be a set of $M$ trees, $\Hypo_{m:}$ be the 
    set of all RFs with at least $m$ trees from $\mathcal{T}$, and
    $\phi:\Hypo_{m:}\rightarrow \R$ be a linear functional, then $\min_{h\in \Hypo_{m:}} \phi(h)$
    amounts to averaging the $m$ smallest values of 
    $\phi(\tree_s)$ for $s=1,2,\ldots M$.
\label{prop:rf_optim}
\end{proposition}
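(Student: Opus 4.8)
The plan is to exploit the linearity of $\phi$ to collapse the combinatorial minimization over $\Hypo_{m:}$ into an elementary statement about averages of scalars. First I would note that every $h \in \Hypo_{m:}$ has the form $h = \frac{1}{k}\sum_{\tree \in T}\tree$ for some $T \subseteq \mathcal{T}$ with $k := |T| \geq m$. Applying the linearity hypothesis on $\phi$ gives
\begin{equation}
    \phi(h) = \frac{1}{k}\sum_{\tree \in T}\phi(\tree),
\end{equation}
so $\phi(h)$ is simply the arithmetic mean of the $k$ scalars $\{\phi(\tree) : \tree \in T\}$. Minimizing $\phi$ over $\Hypo_{m:}$ is therefore the same as selecting a subset $T$ of cardinality at least $m$ whose mean of $\phi$-values is smallest.

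Next I would decompose this into an inner and an outer problem. Writing $v_{(1)} \leq v_{(2)} \leq \cdots \leq v_{(M)}$ for the values $\phi(\tree_s)$ sorted in increasing order, I observe that for a \emph{fixed} cardinality $k$ the mean is minimized exactly when its numerator $\sum_{\tree \in T}\phi(\tree)$ is minimized, which is achieved by taking the $k$ trees with the smallest $\phi$-values. Denote this best mean of size $k$ by $A_k := \frac{1}{k}\sum_{i=1}^k v_{(i)}$. It then remains only to minimize $A_k$ over $k \in \{m, m+1, \ldots, M\}$.

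The crux is to show that $A_k$ is non-decreasing in $k$, so that the outer minimum is attained at the smallest admissible value $k = m$. I would establish this through the one-step identity
\begin{equation}
    A_{k+1} - A_k = \frac{v_{(k+1)} - A_k}{k+1},
\end{equation}
which follows immediately from $A_{k+1} = \frac{k A_k + v_{(k+1)}}{k+1}$. Since $v_{(k+1)}$ dominates each of the values averaged in $A_k$, we have $v_{(k+1)} \geq A_k$, hence $A_{k+1} \geq A_k$. Monotonicity yields $\min_{m \leq k \leq M} A_k = A_m$, so the minimum of $\phi$ over $\Hypo_{m:}$ equals $A_m = \frac{1}{m}\sum_{i=1}^m v_{(i)}$, the average of the $m$ smallest values of $\phi(\tree_s)$, as claimed.

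I expect this monotonicity step to be the only genuinely substantive part: the intuition that enlarging the subset beyond size $m$ can only fold in trees carrying larger $\phi$-values, and hence can only pull the mean upward, must be made precise, and the displayed identity is the cleanest way to do so. The reduction via linearity and the inner ``take the $k$ smallest'' argument are both immediate. A minor point worth stating is that the symmetric claim for $\max$ follows by replacing smallest with largest throughout, which is what the consensus tests of Definition \ref{def:computing_consensus_optim} ultimately require.
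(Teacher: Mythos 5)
Your proof is correct and follows essentially the same route as the paper's: both exploit linearity to reduce the problem to an inner minimization at fixed cardinality $k$ (solved by taking the $k$ trees with smallest $\phi$-values, as in the paper's Part 1 exchange argument) followed by showing the resulting prefix averages are non-decreasing in $k$ so that the outer minimum sits at $k=m$. Your one-step identity $A_{k+1}-A_k=\frac{v_{(k+1)}-A_k}{k+1}$ is merely a telescoped variant of the paper's Part 2, which bounds $A_{m'}\geq A_m$ directly by lower-bounding each value indexed by $C_{m'}\setminus C_m$ with the average $A_m$.
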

The proof of this proposition is presented in \textbf{Appendix \ref{app:proofs:random_forest}}.
Examples of linear functionals $\phi:\Hypo_{m:}\rightarrow \R$ include the model prediction at fixed input $h(\bm{x})$ 
and the SHAP feature attribution which we can compute efficiently with TreeSHAP \citep{lundberg2020local}. 

At this point, we assume that the desired tolerance on error $\epsilon$ has been fixed and so we wish 
to identify a value $m(\epsilon)$ that guarantees that 
$\Hypo_{m(\epsilon):}\subseteq \mathcal{R}(\Hypo_{1:},\epsilon)$, or equivalently, that $\max_{h\in\Hypo_{m(\epsilon):}}\emploss{S}{h}\leq \epsilon$.
This value $m(\epsilon)$ should be as small as possible so that the space $\Hypo_{m(\epsilon):}$ is as 
large as possible. With this goal in mind, we restrict ourselves to losses $\ell(y', y)$ that are 
monotonically increasing w.r.t $|y'-y|$. This includes the $0\,\minus1$ loss and the squared loss for 
example. Such losses are of interest because $\max_{y'\in \Y'} \ell(y', y)=\max\{\ell(\min_{y'\in \Y'} y', y)\,,\, \ell(\max_{y'\in \Y'} y', y)\}$ 
for any set $\Y'$, meaning that the worst loss on a point must be attained by either of the two most extreme predictions at that point.
Remembering that model predictions are linear functionals of the trees, 
\textbf{Proposition \ref{prop:rf_optim}} can be used to efficiently identify the min/max predictions 
at any input. Therefore, it makes sense to define the upper bound
\begin{equation}
\begin{aligned}
    \max_{h\in\Hypo_{m:}}\emploss{S}{h}
    \leq& \frac{1}{N}\sum_{i=1}^N 
    \max_{h \in\Hypo_{m:}}\ell(h(\myvec{x}^{(i)}), y^{(i)}),\\
    =&\frac{1}{N}\sum_{i=1}^N \max\bigg\{\ell\big(\min_{h\in \Hypo_{m:}} h(\myvec{x}^{(i)}),y^{(i)}\big)\,,\, 
    \ell\big(\max_{h\in \Hypo_{m:}} h(\myvec{x}^{(i)}),y^{(i)}\big)\bigg\}:= \epsilon^+(m),
\end{aligned}
\label{eq:epsilon_+}
\end{equation} 
which can be
computed efficiently at any $m\leq M$ in time $\mathcal{O}(NM\log M)$. Because of the scalability of this 
process w.r.t $M$, the total number of tree $M$ must be reasonable, but still large enough so that 
$\mathcal{T}=\{t_s\}_{s=1}^M$ is representative of all trees that would be produced with all possible 
seeds on a computer. We will see in the experiments of \textbf{Section \ref{sec:random_forest:experiments}} that 
setting $M=1000$ can be representative of all trees fitted on real-world data.

Now, given an absolute tolerance $\epsilon$ on the empirical loss, we search for the smallest 
number of trees $m$ we can keep while ensuring that $\epsilon^+(m)\leq \epsilon$
\begin{equation}
    m(\epsilon) := \min\{m\in [M] :
    \epsilon^+(m)\leq\epsilon\}.
\label{eq:choose_m}
\end{equation}

\begin{figure}[t]
    \centering
    \begin{tikzpicture}
\tikzset{>=latex}
\draw[<->] (0,4.5) node[anchor=south east,rotate=90] {Empirical Loss}
-- (0,0) -- (9, 0) node[anchor=north west] {$m$};

\foreach \x in {1,...,8}
{
   \draw (\x, -0.1) node[anchor=north] {\x} -- (\x, 0.1);
   \node at (\x+0.25, -1.25) {$\Hypo_{\x:}\supset$};
}
 \node at (9, -1.25) {...};

\draw[red,line width=0.025cm] 
	(8,0.5) node[circle,draw,fill=red,scale=0.25] {} -- (8,0.75) -- 
	(7,0.75) node[circle,draw,fill=red,scale=0.25] {} -- (7,1) -- 
	(6,1) node[circle,draw,fill=red,scale=0.25] {} -- (6,1.5) -- 
	(5,1.5) node[circle,draw,fill=red,scale=0.25] {} -- (5,2.25) -- 
	(4,2.25) node[circle,draw,fill=red,scale=0.25] {} -- (4,3) -- 
	(3,3) node[circle,draw,fill=red,scale=0.25] {} -- (3,4.25) -- 
	(2,4.25) node[circle,draw,fill=red,scale=0.25] {} -- (2,4.5) -- 
	(1,4.5) node[circle,draw,fill=red,scale=0.25] {};
\node at (4,3.5) {$\textcolor{red}{\epsilon^+(m)}$};

\draw[dashed] (0,1.6) node[anchor=east] {$\epsilon$} -- (8,1.6);
\draw[dashed] (5,1.5) -- (5,0);
\node at (5, -0.75) {$m(\epsilon)$};

\end{tikzpicture}
    \caption{Choosing $m$ based on the error tolerance $\epsilon$.}
    \label{fig:epsilon_plus}
\end{figure}

The intuition behind the computation of $m(\epsilon)$ is presented in Figure~\ref{fig:epsilon_plus}.
Since setting $m=m(\epsilon)$ guarantees that $\max_{h\in\Hypo_{m:}}\emploss{S}{h}\leq \epsilon^+(m)\leq \epsilon$, 
we have $\Hypo_{m(\epsilon):}\subseteq \mathcal{R}(\Hypo_{1:},\epsilon)$. Hence, we are going to employ 
$\Hypo_{m(\epsilon):}$ as an under-estimate of the Rashomon Set over which we can efficiently optimize 
linear functionals such as model predictions or the SHAP local feature attributions.

We end this subsection by presenting in detail the computation of $\epsilon^+(m)$ on a toy example. 
We designed a regression task where the input follows a $\mathcal{N}(0,1)$ Gaussian and the output $y$ is a 
quadratic function $x^2$ plus some noise of amplitude $0.9$. A total of $M=1000$ different seed values were 
used to independently generate 1000 decision trees. Figure~\ref{fig:rf_toy}~(a) shows the upper bound 
$\epsilon^+(m)$ of any RF containing at least $m$ trees. Given a threshold on the RMSE of $\epsilon=1$, 
the smallest $m$ we can safely consider is $m(\epsilon)=691$. Hence, we suggest employing the set 
$\Hypo_{691:}$ as a subset of $\mathcal{R}(\Hypo_{1:}, 1)$. Figure~\ref{fig:rf_toy}~(b) presents the 
minimum and maximum predictions $\min_{h\in \Hypo_{691:}} h(x)$ and $\max_{h\in \Hypo_{691:}} h(x)$ 
at various values of $x$. We see that the min-max prediction intervals are wider in low-data density 
regions near the boundaries. This means that there is more disagreement among individual trees on these 
points. Such an observation makes sense because each tree is fitted on a bootstrap sample of the dataset 
and therefore some trees have never seen the boundary points.

\begin{figure}[t]
     \centering
     \begin{subfigure}[b]{0.515\textwidth}
        \includegraphics[width=\linewidth]
       {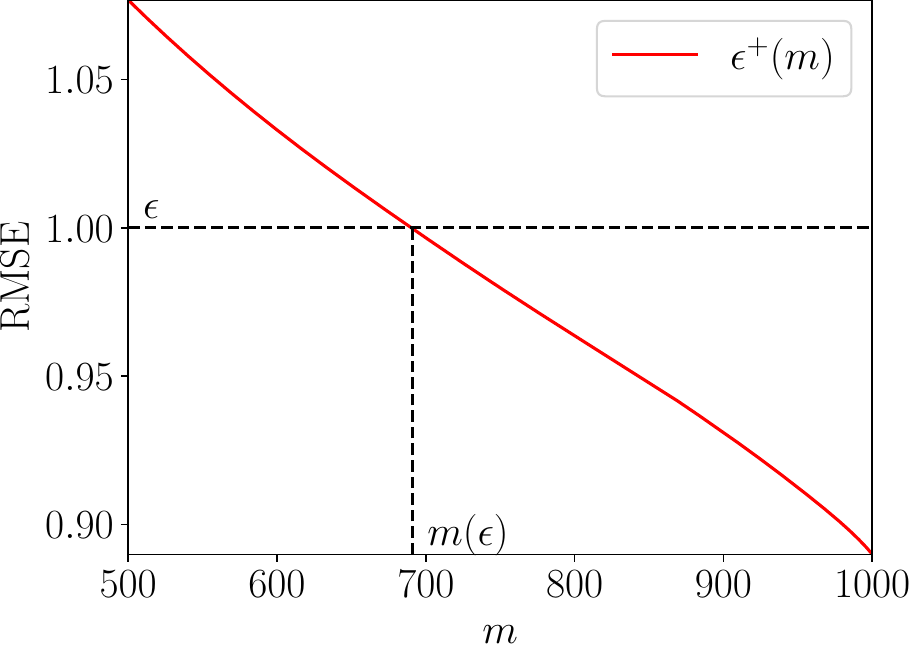}
       \caption{Performance bound $\epsilon^+(m)$. Given a RMSE tolerance of 
       $\epsilon=1$, the smallest $m$ we can safely consider is $m(\epsilon)=691$.}
    \end{subfigure}
    \hfill
    \begin{subfigure}[b]{0.475\textwidth}
        \includegraphics[width=\linewidth]
       {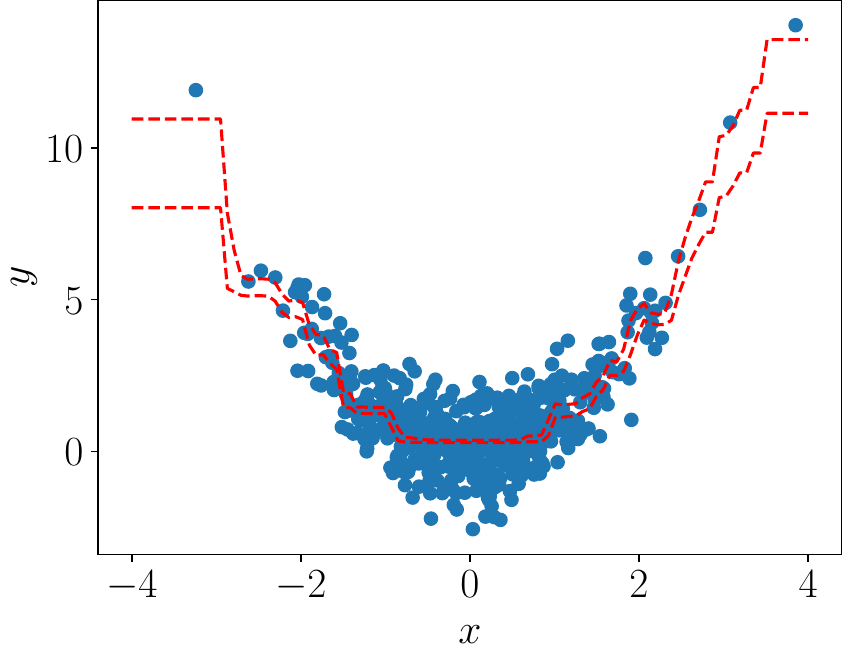}
        \caption{Toy regression data. The min-max predictions over the hypothesis 
        space $\Hypo_{691:}$ are shown as red lines.}
    \end{subfigure}
    \caption{}
    \label{fig:rf_toy}
\end{figure}

\subsection{Asserting Model Consensus}

\subsubsection{Local Feature Attribution}

We discuss how to assert model consensus on local feature attributions statements at any level of tolerance $\epsilon$.
Given an error tolerance $\epsilon$, we set $m$ to $m(\epsilon)$, and assert the consensus on $\Hypo_{m:}$ via optimization problems (cf. \textbf{Definition 
\ref{def:computing_local_consensus_optim}}) that we solve efficiently with \textbf{Proposition \ref{prop:rf_optim}}. For example, to 
compute $\min_{h\in \Hypo_{m:}} \phi_i(h, \myvec{x})$, we calculate the vector of feature attributions of all trees 
$[\phi_i(\tree_1, \myvec{x}),\phi_i(\tree_2, \myvec{x}),\ldots, \phi_i(\tree_M, \myvec{x})]^T$ with TreeSHAP, then we sort it and average 
its $m$ smallest values. The overall complexity of this procedure w.r.t $M$ is $\mathcal{O}(M \log M)$.

\subsubsection{Global Feature Importance}

Asserting model consensus on global feature importance statements is a lot more complicated 
since the functionals $\bm{\Phi}^{[1]},\bm{\Phi}^{[2]}$ are not linear w.r.t the model. 
Thus, we cannot leverage directly apply \textbf{Proposition \ref{prop:rf_optim}}. We refer to 
\textbf{Appendix \ref{app:optim:combinatorial}} for the full details of how we deal with 
global feature importance. In short, we employ the functional $\bm{\Phi}^{[1]}$ and create 
an ensemble $E$ containing
\begin{enumerate}
    \item Approximates of $\argminormax_h\Phi_i^{[1]}(h)$ for $1\leq i\leq d$.
    \item Approximates of $\argminormax_h\Phi_i^{[1]}(h)-\Phi_j^{[1]}(h)$ for $1\leq i<j\leq d$.
\end{enumerate}
After, we assert a consensus among all models in $E\subset\Hypo_{m(\epsilon):}$
leading to the partial order
\begin{equation}
 i \,\,\widehat{\preceq_\epsilon}\,\, j \iff 
    \forall\,h \in E\  \,\,\,\Phi_i(h) \leq \Phi_j(h).
\end{equation}
We consequently underestimate the diversity of our models, but the resulting partial order 
of global importance is guaranteed to be transitive.

\subsection{Income Prediction}\label{sec:random_forest:experiments}

The Adult-Income dataset available on the UCI
repository\footnote{\tiny\url{https://archive.ics.uci.edu/ml/datasets/adult}} contains the census data of 48,842 
individuals collected in 1994. It consists of a binary classification task with the goal of predicting whether or not a 
person makes more ($y=1$) or less ($y=0$) than 50k USD per year based on 14 attributes. Out of all these features, 
we removed \texttt{fnlwgt} because we do not fully understand what it represents and \texttt{native-country} because it 
is a categorical feature with very high cardinality. We were finally left with five numerical features and seven 
one-hot-encoded categorical ones. After encoding, we were left with a data matrix of 40 columns. The data was 
split into train and test sets with ratios 0.8 and 0.2 respectively. The training set was used to obtain the set 
$\mathcal{T}$ of $M$ iid trees.
For the model, we utilized Scikit-Learn's \texttt{RandomForestClassifiers} whose hyperparameters were tuned with a 100 
steps random search and 5-fold cross-validation. Then, we trained $M=1000$ trees in order to generate a set 
$\mathcal{T}$. The training was actually repeated 5 times so that we ended up with 5 distinct sets
of 1000 trees $\mathcal{T}_i$ with $i=1,2,\ldots,5$. We do not expect practitioners to fit several sets $\mathcal{T}_i$ 
when applying our methodology. This was done to verify our assumption that $\mathcal{T}$ is 
representative of all trees trained with bootstrapped data and random splits.

After obtaining large collections of trees, we estimated the Rashomon Set containing all RFs
that perform well on the test set. The loss employed was the 0-1 loss meaning the Rashomon Set 
contains all models with a Misclassification Rate below some threshold $\epsilon$. The tolerance
$\epsilon$ was set via the capture bound of \textbf{Proposition \ref{prop:capture_bound_classif}} using 
$h_\text{ref}=1/M\sum_{s=1}^M t_s$ as the reference model. This proposition is applicable since we compute the
Rashomon Set on test data that is independent of the hypothesis $h_\text{ref}$ which was fitted on 
training data. Using a confidence $\delta=1\%$, the proposition led to an error tolerance 
$\epsilon=\sqrt{\minus2\log(1\%)/N}+\emploss{S}{h_\text{ref}}\approx 3\% +\emploss{S}{h_\text{ref}}$.
By computing the upper bound $\epsilon^+(m)$ on test samples, we set the minimum number of trees
$m(\epsilon)=815$, see Figure \ref{fig:adult_experiment}~(a). 
At this tolerance level, the sign of the gap is consistent for $90.8\%$ of the individuals. 
Therefore, under-specification prohibits us from explaining one-tenth of the data.
We refer to \textbf{Appendix \ref{app:gap}} for how we deal with those unexplainable instances.

\begin{figure*}[t]
     \centering
     \begin{subfigure}[b]{0.4975\textwidth}
        \hspace{1pt}
        \includegraphics[width=\linewidth]
       {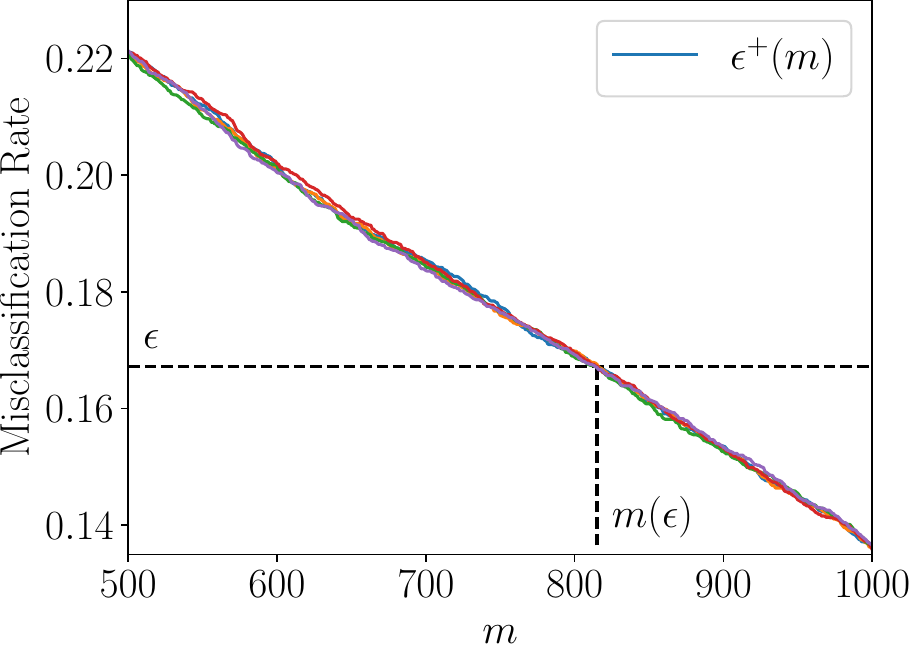}
       \caption{}
    \end{subfigure}
    \hfill
    \begin{subfigure}[b]{0.4825\textwidth}
    \hspace{5pt}
       \includegraphics[width=\linewidth]
        {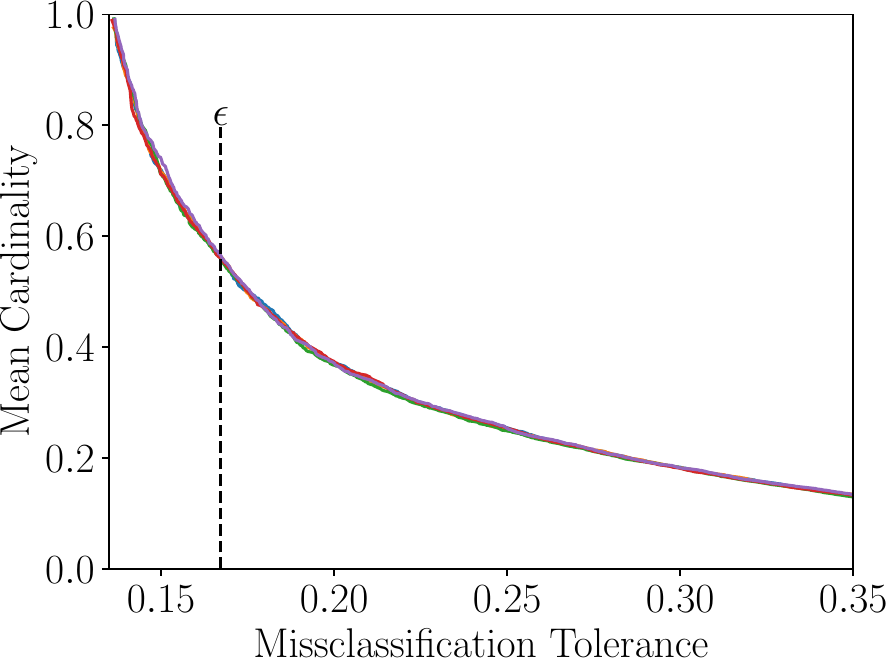}
        \caption{}
    \end{subfigure}
    \caption{Estimating the Rashomon Set of RFs on Adult-Income. 
    Each curve is associated with a different tree collection $\mathcal{T}_i$.}
    \label{fig:adult_experiment}
\end{figure*}

\subsubsection{Local Feature Attribution}

\begin{figure*}[t]
    \centering
    \begin{subfigure}[c]{0.46\textwidth}
    \includegraphics[width=\linewidth]
        {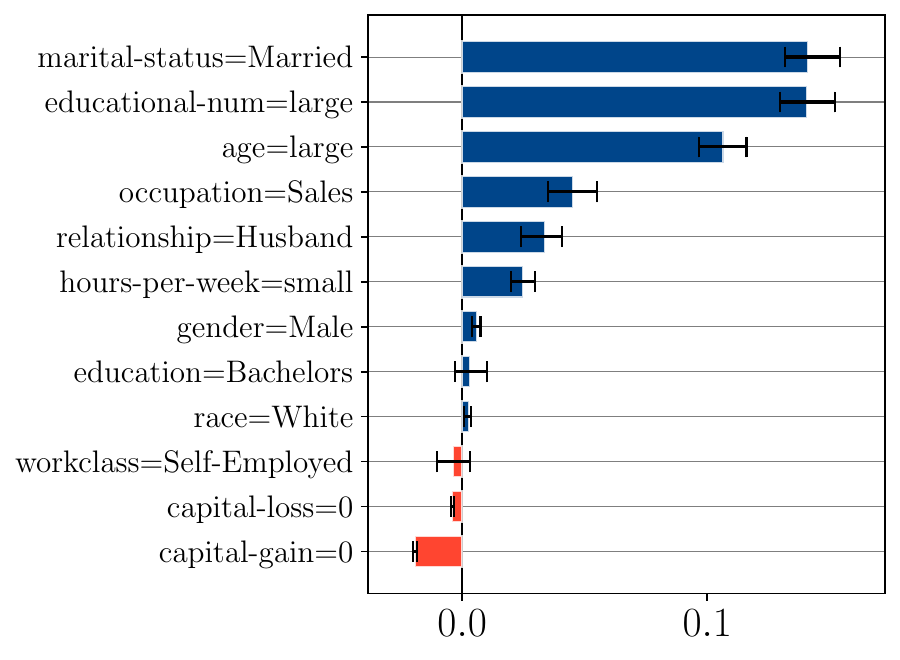}
    \end{subfigure}
    \hfill
    \begin{subfigure}[c]{0.53\textwidth}
    \hspace{1cm}
    \resizebox{0.65\textwidth}{!}{
        \includegraphics[width=\linewidth]
        {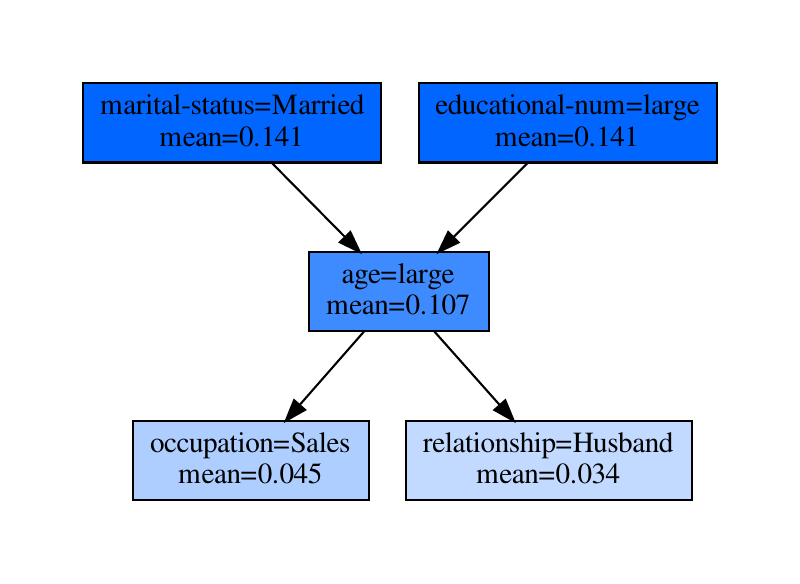}
    }
    \end{subfigure}
    \begin{subfigure}[c]{0.46\textwidth}
        \includegraphics[width=\linewidth]
        {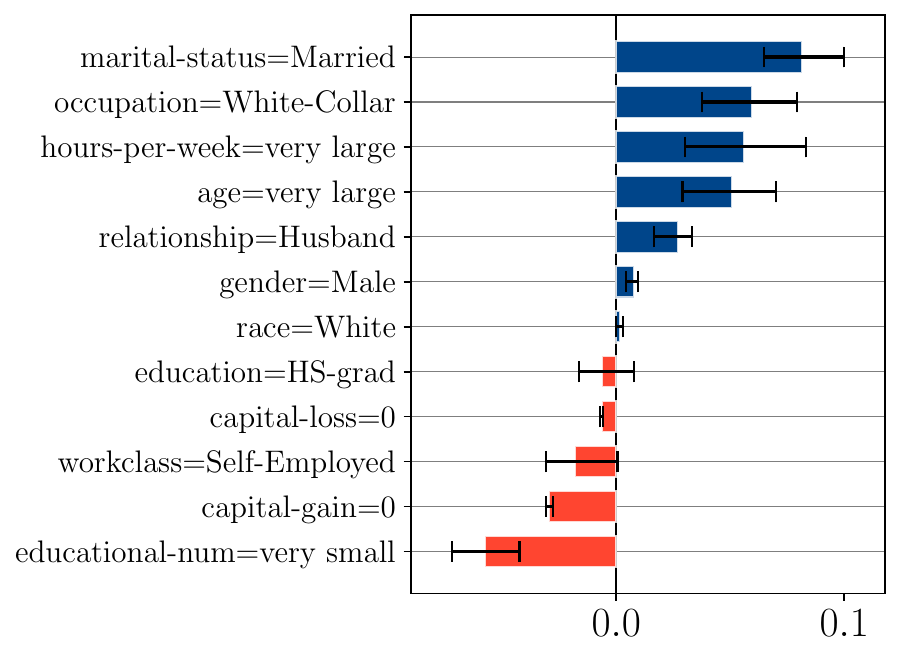}
    \end{subfigure}
    \hfill
    \begin{subfigure}[c]{0.53\textwidth}
        \includegraphics[width=\linewidth]
        {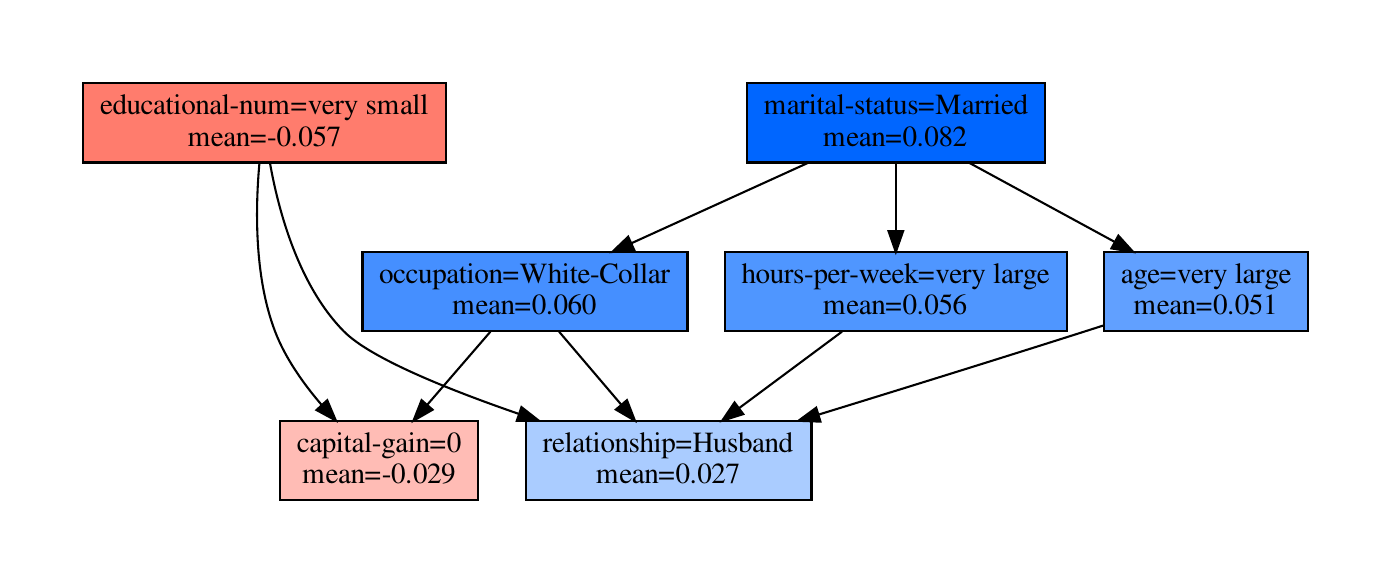}
    \end{subfigure}
    \caption{
    Local feature attributions on two individuals 
    (Top) A person with a high prediction, (Bottom) Individual near the decision 
    boundary. The Hasse Diagrams only show the first three ranks. 
    }
    \label{fig:adult_explain_instances}
\end{figure*}

The model outputs $h(\myvec{x})\in[0,1]$ must be interpreted as estimates of the conditional probabilities of 
$y$ given $\bm{x}$ and not as hard $0/1$ predictions. Therefore, local feature attributions should sum up 
to a difference in conditional probabilities. We computed local feature attributions with the efficient algorithm 
TreeSHAP. In fact, seeing that categorical features were one-hot-encoded, 
which is not supported in the TreeSHAP implementation of the \texttt{SHAP} library, we used the Partition-TreeSHAP 
algorithm described in \citep{laberge2022understanding}. The feature attribution requires a background distribution 
$\mathcal{B}$ to serve as a reference and we used the empirical distribution of the whole training set.
Still, given the considerable size of the Adult dataset, we had to subsample $B$ instances from the training set and 
use them to estimate Shapley values. So, we ended up explaining the models with estimates $\widehat{\bm{\phi}}$ rather 
than ground-truths $\bm{\phi}$. A proxy of the error made by subsampling is the Gap Error presented 
in Equation \ref{eq:gap_error}. We found that the Gap Errors would stabilize to around $0.2\%$ at $B=500$ and so
we employed $500$ background samples. This led to a ten-minute runtime for explaining $M=1000$ decision trees on $2000$ test instances. 

Figure \ref{fig:adult_experiment}~(b) presents the mean partial-order cardinality as a function
of tolerance on test error. We observe that the five curves are very similar which suggests that fitting $M=1000$ 
trees can be representative of all trees possibly generated for RFs. For error tolerances smaller than the 
$\epsilon$ employed, the mean cardinality decreases very rapidly. This means that our partial orders abstain
from making many statements supported by $h_\text{ref}=1/M\sum_{s=1}^M t_s$, but which are contradicted by 
other RFs with slightly worst test performance. We now discuss two instances that were explained with our framework.

The first instance is an individual who makes more than 50k per year and whose predictions
range from $0.69$ to $0.74$ across $\mathcal{H}_{815:}$. The average prediction on the background for all trees 
is $0.23$ so this individual has a positive gap, which we aim to explain with TreeSHAP. 
Figure \ref{fig:adult_explain_instances}~(Top) illustrates this person's local feature attribution and the 
resulting partial order that encodes the statements on which there is a consensus in $\Hypo_{815:}$.
We observe that the features \texttt{educational-num=large} and \texttt{matiral-status=Married} have 
maximal positive importance for understanding why this individual has higher-than-average predictions. 
At the second rank is the feature \texttt{age=large}, which is also important but to a lesser extent.
Looking at the bar char on the top left, we note that the feature \texttt{gender=Male} is given a small
yet consistently positive attribution across all models. It appears that all RFs with at least 815
trees exhibit a small gender bias. We will come back to this in our analysis of global feature importance.


The second instance is a person who makes more than 50k and whose predictions range from 0.30 to 0.50. 
The prediction gap is still positive in that case but it is smaller than the previous example. 
Figure \ref{fig:adult_explain_instances}~(Bottom) shows how our framework would explain the positive gaps. 
We focus on the two features \texttt{capital-gain=0} and \texttt{workclass=Self-Employed} which both have a 
negative attribution according to the average model. Looking at the error bars on the bar chart, we observe 
that the model uncertainty is higher for \texttt{workclass} than with 
\texttt{capital-gain}. This means that there is more agreement among RFs that \texttt{capital-gain=0} reduced the model 
output. For \texttt{workclass=Self-Employed}, the model uncertainty is so high that the min-max interval crosses the 
origin, which implies the existence of RFs with satisfactory performance that yield a positive attribution to this 
feature. Our framework identified this ambiguity and hence removed the feature \texttt{workclass} from the 
partial order despite it having a negative attribution according to the average model.

\subsubsection{Global Feature Importance}

\begin{wrapfigure}{r}{0.43\textwidth}
    \centering
    \vspace{-1cm}
    \includegraphics[width=0.43\textwidth]
    {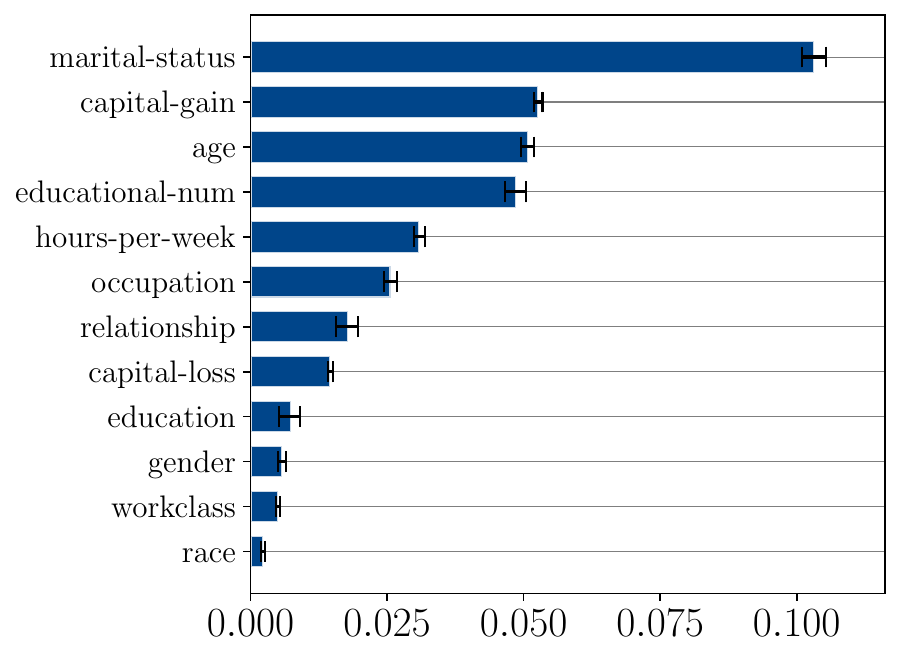}
    \caption{Global Feature Importance on Adult-Income.}
    \label{fig:adult_explain_global}
\end{wrapfigure}

Figure \ref{fig:adult_explain_global} presents the global feature importance. The associated Hasse diagram
is not shown because the feature ordering is a total order. Indeed, the rankings are consistent across 
all RFs with at least $815$ trees. Interestingly, there were more disagreements when looking at local 
feature attributions. This highlights that combining local attributions $\bm{\phi}$ into global ones 
$\bm{\Phi}$ can result in information loss. Hence, it is primordial to investigate explanations 
under-specification both globally and locally.\newline

Notice that all features have non-null importance across the Rashomon Set. This was not true
for the hypothesis class of Additive Models, see Figure \ref{fig:kaggle_explain_global}. 
We suspect that this is due to the training procedure of RFs.
Indeed, when growing trees, a random subset of candidate features is chosen at each internal node. The
optimal split is then chosen among these features. Hence, even if a feature is irrelevant for predicting $y$, 
there is a non-zero probability it will be used by some of the trees in the forest. 
This is unfortunate in the context of biases because any of our good RFs uses the 
\texttt{gender} features for prediction.
\newpage
\section{Discussion}\label{sec:discussion}

As suggested by our experiments, model under-specification has an important impact on feature 
attributions on real data, and taking into account this uncertainty seems necessary to derive reliable
insight from machine learning models. Our conservative approach only retains the information on features 
attributions on which all models agree and still succeeds in finding partial order in this chaos. This in itself 
is an important observation because one could have expected the
partial orders to be trivial and contain no interesting structure (no arrows).

The principal limitation of our approach is that we are currently restricted to Additive Regression, 
Kernel Ridge Regression, and Random Forests. It is therefore primordial to extend our work to other 
models, especially to more Classifiers. We envision using techniques from previous work to sample 
Logistic Regression models and Decision Trees \citep{dong2019variable,kissel2021forward,semenova2022existence}. 
Once an ensemble of models is available, we could apply Model Set Selection 
to choose $\epsilon$ (cf. \textbf{Section \ref{sec:method:epsilon:capture}}) and assert consensus of 
the selected models.

Still, there may also exist hypothesis spaces whose Rashomon Set is too large to be realistically estimated, for instance, Neural Networks. Moreover, the cost of training/explaining 
multiple models may be too high for practitioners to see any benefit. A potential solution to derive careful conclusions from these large models would be to employ only a few models, 
but train them in a way that ensures they are as diverse as possible. This application of our framework is left for future work as it involves unique and novel challenges regarding 
the training of Neural Networks.

The main characteristic of our approach is that we require a perfect consensus among all good models. However, when employing our methodology with a finite ensemble of models, 
one may wonder why not also consider statements on which a majority of models agree (or at least $90\%$ of the models agree). As a more extreme example, a practitioner may have 
1000 models and 999 of these models state something while a single one states the opposite. Our approach would abstain from making any statement in that case, which may seem unnecessarily 
strict. An important argument for requiring a perfect consensus is that it ensures the transitivity of the order relations. This property is crucial for the interpretability of the feature 
orderings. We note that some prior work has produced partial orders from the consensus of at least $\alpha\%$ of the models via the transitive closure and fine-tuning of $\alpha$ to avoid 
cycles \citep{cheng2010predicting}. Nonetheless, in our context of local explainability, this method has two issues. First, it would require fine-tuning $\alpha$ for each instance 
$\bm{x}^{(i)}$ and therefore the interpretation of order relations would change on an instance-by-instance basis. Second, because they rely on transitive closure, the resulting Hasse 
diagrams could be misinterpreted seeing as the existence of a directed path between two features would not imply a consensus among at least $\alpha\%$ of the models 
that one feature is more important. Our diagrams, on the other hand, remain simple to interpret: for any instance $\bm{x}^{(i)}$, a directed path between two features means that all models agree 
on the relative importance statement and the absence of such path means that at least one model disagrees on that statement. Still, we think that imperfect consensus is a pertinent 
future work direction, especially for extending our framework to Bayesian methods.

On a more philosophical level, a justification for perfect consensus is that, given that the error threshold $\epsilon$ was fixed at a value that represents a satisfactory performance, 
any single model that disagrees with the rest is still a good model, and its mere existence is enough to put into question the claim supported by the others. Going back to the 
extreme scenario of 999 models disagreeing with a single one, if this solitary model had the worst performance of the whole ensemble, slightly reducing the error tolerance would 
remove this model from the Rashomon Set and we would reach a consensus.

Speaking of tuning the error tolerance $\epsilon$, similar to prior work \citep{fisher2019all,marx2020predictive,hsu2022rashomon}, we explore a range of tolerance 
values and inspect the effect of under-specification on conclusions drawn from models. Nonetheless, it is not clear what is the right value for $\epsilon$,
however, we argue that this is a limitation shared by multiple studies on the Rashomon Set \citep{d2020underspecification,dong2019variable,semenova2022existence,coker2021theory}.
It is well understood that the $\epsilon$ parameter should be ``small enough'' to represent negligible performance differences. But, there is still no agreement on what ``small enough'' 
means depending on the ML task and hypothesis space. We think the most promising directions in tackling this limitation are Proposition 7 from \citet{fisher2019all}, Profile Likelihoods 
\citep[Appendix C.1]{coker2021theory}, Model Set Selection \citep{kissel2021forward}, and our \textbf{Propositions \ref{prop:capture_bound_linear} \& \ref{prop:capture_bound_classif}}. 
All these statistical guarantees suggest to define the ``set of all good models'' as a set that contains the best-in-class $h^\star$ with high probability. Future work should investigate 
these theoretical results jointly.

\section{Conclusion}\label{sec:conclusion}

In this work, we propose a new approach to explanations in the context of model uncertainty. Rather than considering the mean attributions or the mean rank, we identify
properties and relations of feature attributions that are consistent across a set of models with good performance. These logical statements about local/global feature 
attribution naturally lead to a partial order of feature importance, which we show can provide more nuanced explanations than the more common total orders based on mean attributions. 
As such, we believe that our work opens a new perspective on post-hoc explanations in the context of model uncertainty.

In future work, we intend to study more Classifiers (Logistic Regression, Decision Trees, Neural Networks) and other local/global post-hoc explanations 
(LIME, Permutation Importance, SAGE). Moreover, we shall apply our methodology to more practical settings, especially those where there are clear \textit{actionable} 
features on which a human subject is able to act upon. We hope that in these scenarios, the nuance introduced by partial orders will prove most beneficial.

\section*{Acknowledgements}

This work is supported by the DEEL Project CRDPJ 537462-18 funded by the National Science and Engineering Research Council of Canada (NSERC) and the Consortium for Research 
and Innovation in Aerospace in Québec (CRIAQ), together with its industrial partners Thales Canada inc, Bell Textron Canada Limited, CAE inc and Bombardier inc.\footnote{\url{https://deel.quebec}}

\newpage
\appendix
\section{Proofs}\label{app:proofs}

\subsection{Statistical Bounds}\label{app:proofs:statistical}

\begin{proposition}[\textbf{Proposition \ref{prop:capture_bound_linear}}]
    Under the assumption that the data was generated by the optimal model 
    $h^\star$ plus zero-mean Gaussian noise
    \begin{equation}
        y = h^\star(\myvec{x}) + \Delta,\quad\text{where }\,\,\,
        \Delta\sim\mathcal{N}(0, \sigma^2),
        \label{eq:hypo_model_noise}
    \end{equation}
    and using the squared loss $\ell(y',y)=(y' - y)^2$, we have that
    \begin{equation}
        \probdata[\emploss{S}{h^\star}> \epsilon_{\text{max}}] = 
        1-F_{\chi^2_N}\bigg(\frac{N}{\sigma^2}\epsilon_{\text{max}}\bigg),
    \end{equation}
    where $F_{\chi^2_N}$ is the CDF of a chi-2 random variable
    with $N$ degrees of freedom.
\end{proposition}
\begin{proof}
Under the assumption that Equation \ref{eq:hypo_model_noise} is valid, we have that 
$$\emploss{S}{h^\star}=\frac{1}{N}\sum_{i=1}^N(h^\star(\myvec{x}) - y^{(i)})^2=\frac{1}{N}\sum_{i=1}^N(\Delta^{(i)})^2,$$
where each $\Delta^{(i)}$ is sampled iid from a $\mathcal{N}(0, \sigma^2)$ Gaussian. Now we have
\begin{equation}
    \begin{aligned}
        \probdata[\emploss{S}{h^\star}>\epsilon_{\text{max}}]&=
        \Prob_{\bm{\Delta}\sim \mathcal{N}(0, \sigma^2)^N}\bigg[\frac{1}{N}\sum_{i=1}^N(\Delta^{(i)})^2>\epsilon_{\text{max}}\bigg]\\
        &=\Prob_{\bm{\Delta}\sim \mathcal{N}(0, \sigma^2)^N}\bigg[\sum_{i=1}^N\bigg(\frac{\Delta^{(i)}}{\sigma}\bigg)^2>\frac{N}{\sigma^2}\epsilon_{\text{max}}\bigg]\\
        &=\Prob_{\bm{\Delta}\sim \mathcal{N}(0, 1)^N}\bigg[\sum_{i=1}^N(\Delta^{(i)})^2>\frac{N}{\sigma^2}\epsilon_{\text{max}}\bigg]\\
        &= \Prob_{c \sim \chi^2_N}\bigg[c>\frac{N}{\sigma^2}\epsilon_{\text{max}}\bigg]\\
        &= 1-F_{\chi^2_N}\bigg(\frac{N}{\sigma^2}\epsilon_{\text{max}}\bigg).
    \end{aligned}
\end{equation}
\end{proof}

\newpage
\begin{proposition}[\textbf{Proposition \ref{prop:capture_bound_classif}}]
Let  $\ell(\widehat{y}, y)=\mathbbm{1}(\widehat{y} \neq y)$ be the $0\minus1$ loss, $S\sim \mathcal{D}^N$ 
    be a dataset, $h_\text{ref}\in\Hypo$ be a reference model that is independent of $S$, and $h^\star$ be a 
    best in-class hypothesis, for any $\epsilon'\in \R^+$, we have
    \begin{equation}
        \probdata[\emploss{S}{h^\star}\geq \epsilon'+\emploss{S}{h_\text{ref}}] \leq \exp\bigg\{ -\frac{N \epsilon'^2}{2}\bigg\}.
    \end{equation}
\end{proposition}

\begin{proof}
We assume that $\emploss{S}{h^\star}\geq \epsilon'+\emploss{S}{h_\text{ref}}$ and show that this implies the occurrence of an unlikely event. We first have
\begin{align}
    \emploss{S}{h^\star} - \emploss{S}{h_\text{ref}} &= \frac{1}{N}\sum_{i=1}^N \mathbbm{1}[h^\star(\myvec{x}^{(i)})\neq y^{(i)}] - \mathbbm{1}[h_\text{ref}(\myvec{x}^{(i)})\neq y^{(i)}]\\
    &= \frac{1}{N}\sum_{i=1}^N \Delta^{(i)},
\end{align}
where the $N$ random variables $\Delta^{(i)}:=\mathbbm{1}[h^\star(\myvec{x}^{(i)})\neq y^{(i)}] - \mathbbm{1}[h_\text{ref}(\myvec{x}^{(i)})\neq y^{(i)}]$ are iid, take values between $\minus1$ and $1$, and have the expectancy 
\begin{equation}
    \mu =\E_{S\sim \mathcal{D}^N}[\Delta^{(i)}] = \E_{(\myvec{x}^{(i)}, y^{(i)})\sim \mathcal{D}}[\Delta^{(i)}] = \poploss{h^\star} - \poploss{h_\text{ref}}.
    \label{eq:mu_delta}
\end{equation}
We accentuate that Equation \ref{eq:mu_delta} only holds if the reference 
model $h_\text{ref}$ is independent on the dataset $S$ used to assess model performance.
Now by definition of $h^\star$, we have $\mu \leq 0$. However, under our assumption that $\emploss{S}{h^\star}\geq \epsilon'+\emploss{S}{h_\text{ref}}$, 
we have that $\frac{1}{N}\sum_{i=1}^N \Delta^{(i)}\geq \epsilon'\geq 0$. Hence we have a bounded random variable $\Delta$ whose true mean is negative but 
whose empirical mean is large and positive. This event becomes highly improbable as $\epsilon'$ increases or the sample size $N$ increases, see the following Figure.

\begin{figure}[h]
    \centering
    \begin{tikzpicture}
\tikzset{>=latex}
\draw (-4,0) -- (4,0);
\draw (0,-0.2) node[anchor=north] {0} -- (0,0.2);
\draw (-4,-0.2) node[anchor=north] {-1} -- (-4,0.2);
\draw (4,-0.2) node[anchor=north] {1} -- (4,0.2);

\draw[color=blue!50!white,line width=0.15cm] (-4,0) -- (0, 0);
\node at (-2,0.5) {$\lightblue{\mu}$};

\draw[->,line width=0.05cm] (0, 0) -- node[anchor=south] {$\epsilon'$} (1.75, 0);
\draw[fill=black] (3,0) node[anchor=south,scale=0.85] {$\frac{1}{N}\sum_{i=1}^N \Delta^{(i)}$} ellipse (0.05 and 0.05);
\end{tikzpicture}
\end{figure}

Formally, using Hoeffding's inequality yields
\begin{align}
    \probdata[\emploss{S}{h^\star}\geq \epsilon' + \emploss{S}{h_\text{ref}}]&= \probdata\bigg[\frac{1}{N}\sum_{i=1}^N \Delta^{(i)}\geq \epsilon'\bigg]\\
    &\leq\probdata\bigg[\frac{1}{N}\sum_{i=1}^N \Delta^{(i)} - \mu\geq \epsilon'\bigg]\tag{Since $\mu \leq 0$}\\
    &\leq \exp\bigg\{ -\frac{N\epsilon'^2}{2}\bigg\}
    \tag{With Hoeffding's inequality},
\end{align}
concluding the proof.

\end{proof}

\newpage
\subsection{Relation to Prior Work}\label{app:proofs:prior_work}

\begin{proposition}[\textbf{Proposition \ref{prop:relation_to_prior}}]
    Let $\bm{\phi}(\cdot, \bm{x})$ be a linear feature attribution functional, and $E=\{h_k\}_{k=1}^M$ be an ensemble of $M$ models from $\Hypo$ trained with the same stochastic learning algorithm $h_k\sim \mathcal{A}(S)$. Said feature attribution and ensemble will be employed in the methods of \citep{shaikhina2021effects, schulz2021uncertainty}. Moreover let $\epsilon \geq \max \{\emploss{S}{h_k}\}_{k=1}^M$ be an error tolerance, and let $\preceq_{\epsilon, \myvec{x}}$ be the consensus order relation on $\text{SA}(\epsilon, \myvec{x})$ (cf. Equation \ref{eq:po}). If the relation $i\preceq_{\epsilon, \myvec{x}}j$ holds, 
    we have that $i$ is less important than $j$ in the two total orders of prior work
\citep{shaikhina2021effects, schulz2021uncertainty}.
\end{proposition}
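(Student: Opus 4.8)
The plan is to leverage the hypothesis $\epsilon \geq \max_k \emploss{S}{h_k}$, which places the entire ensemble inside the Rashomon Set: each $h_k$ satisfies $\emploss{S}{h_k} \leq \epsilon$, so $E = \{h_k\}_{k=1}^M \subseteq \Rashomon$. Consequently, whenever $i \preceq_{\epsilon, \myvec{x}} j$ holds, the defining universal quantification of Equation \ref{eq:po} specializes to the pointwise inequalities $|\phi_i(h_k, \myvec{x})| \leq |\phi_j(h_k, \myvec{x})|$ for every $k$. Moreover, membership $i, j \in \text{SA}(\epsilon, \myvec{x})$ guarantees fixed signs $s_i, s_j \in \{-1, +1\}$ with $\phi_i(h, \myvec{x}) = s_i |\phi_i(h, \myvec{x})|$ for every $h \in \Rashomon$, hence in particular for each $h_k$. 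These three facts form the entire engine of the argument; the remainder is to handle each prior method in turn.

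For the method of \citet{shaikhina2021effects}, which ranks features by $|\phi_i(h_E, \myvec{x})|$ with $h_E = \frac{1}{M}\sum_k h_k$, I would first invoke the linearity of $\bm{\phi}$ (Equation \ref{eq:linearity}) to write $\phi_i(h_E, \myvec{x}) = \frac{1}{M}\sum_k \phi_i(h_k, \myvec{x})$. The crucial step is to move the absolute value inside the sum: because every $\phi_i(h_k, \myvec{x})$ carries the same sign $s_i$, no cancellation occurs and $|\phi_i(h_E, \myvec{x})| = \frac{1}{M}\sum_k |\phi_i(h_k, \myvec{x})|$, with the analogous identity for $j$. Comparing termwise using the pointwise magnitude inequality then yields $|\phi_i(h_E, \myvec{x})| \leq |\phi_j(h_E, \myvec{x})|$, which is exactly the statement that $i$ is less important than $j$ in the average-model order.

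For the method of \citet{schulz2021uncertainty}, which ranks by the mean rank $\frac{1}{M}\sum_k \mathbf{r}[\,|\bm{\phi}(h_k, \myvec{x})|\,]$, I would argue at the level of each individual model: the pointwise inequality $|\phi_i(h_k, \myvec{x})| \leq |\phi_j(h_k, \myvec{x})|$ forces the rank of feature $i$ to be at most the rank of feature $j$ within model $h_k$. Since this weak inequality between per-model ranks holds for every $k$, it is preserved under averaging, so the mean rank of $i$ is at most the mean rank of $j$, i.e. $i$ is less important than $j$ in the mean-rank order as well.

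The main obstacle is the \citet{shaikhina2021effects} step where the absolute value is pulled inside the sum: this is precisely where the assumption $i, j \in \text{SA}(\epsilon, \myvec{x})$ is indispensable, since without a common sign the averaged attribution could cancel and the magnitude inequality could fail even when it holds model-by-model. A secondary technicality is the treatment of ties in the \citet{schulz2021uncertainty} rank function; I would simply remark that any standard convention (identical ranks, or averaged ranks for tied magnitudes) preserves the weak ordering $\mathbf{r}_i \leq \mathbf{r}_j$, leaving the conclusion intact.
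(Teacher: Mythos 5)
Your proof is correct and takes essentially the same route as the paper's: you embed the ensemble in the Rashomon Set via $\epsilon \geq \max_k \emploss{S}{h_k}$, use membership in $\text{SA}(\epsilon,\myvec{x})$ to fix common signs, handle the mean-rank order of \citet{schulz2021uncertainty} by noting per-model magnitude inequalities imply per-model rank inequalities which survive averaging, and handle the average-model order of \citet{shaikhina2021effects} by linearity plus the no-cancellation argument (where you pull the absolute value inside the sum, the paper equivalently works with the signed attributions $s_i\phi_i(h,\myvec{x})$ and converts to magnitudes at the end, noting $s_i = \texttt{sign}[\phi_i(h_E,\myvec{x})]$). Your explicit remark about tie-handling in the rank function is a minor point the paper leaves implicit, but otherwise the two arguments coincide.
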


\begin{proof}
We first note that, since $i,j\in \text{SA}(\epsilon, \myvec{x})$, there is a consensus across the Rashomon Set that these features attributions have sign $s_i$ and $s_j$ respectively. As a reminder, this simplifies the expression of the feature importance :  
$\forall\,h \in \Rashomon \,\,\,\,\,|\phi_i(h, \myvec{x})| = s_i\phi_i(h, \myvec{x})$.
Additionally, our assumption that
$\epsilon \geq \max \{\emploss{S}{h_k}\}_{k=1}^M$,
guarantees that $E\subseteq \Rashomon$.
We now prove that the order relation 
$i \preceq_{\epsilon, \myvec{x}} j$ is present in the two rankings from the literature.

\paragraph{\citet{shaikhina2021effects}}
compute the average model $h_E=\frac{1}{M}\sum_{k=1}^M h_k$ and rank features according to their importance for this model $|\bm{\phi}(h_E, \bm{x})|$.
For any $i,j\in \text{SA}(\epsilon,\bm{x})$, we deduce
\begin{align*}
    i \preceq_{\epsilon, \myvec{x}} j &\Rightarrow
    \forall\,h \in \Rashomon \quad|\phi_i(h, \myvec{x})| \leq |\phi_j(h, \myvec{x})|\\
    &\Rightarrow
    \forall\,h \in \Rashomon \,\,\,\,\,s_i\phi_i(h, \myvec{x}) \leq s_j\phi_j(h, \myvec{x})\\
    &\Rightarrow
    \forall\,h \in E \,\,\quad\qquad s_i\phi_i(h, \myvec{x}) \leq s_j\phi_j(h, \myvec{x})\\
    &\Rightarrow
    \frac{1}{M}\sum_{k=1}^M s_i\phi_i(h_k, \myvec{x}) \leq \frac{1}{M}\sum_{k=1}^M s_j\phi_j(h_k, \myvec{x})\\
    &\Rightarrow
    s_i\phi_i(h_E, \myvec{x}) \leq s_j\phi_j(h_E, \myvec{x})\tag{By Linearity of $\bm{\phi}$}\\
    &\Rightarrow
    |\phi_i(h_E, \myvec{x})| \leq |\phi_j(h_E, \myvec{x})|\tag{By Linearity of $\bm{\phi}$, $s_i=\texttt{sign}[\,\phi_i(h_E,\bm{x})\,]$},
\end{align*}
thus proving that the order relation is also present when explaining the average model.

\paragraph{\citet{schulz2021uncertainty}} 
order features using the mean rank $\frac{1}{M}\sum_{k=1}^M \mathbf{r}[\,|\bm{\phi}(h_k, \myvec{x})|\,]$,
where $\textbf{r}:\R^d_+ \rightarrow [d]$ is the rank function. By the definition, for any model $h$, we have
$|\phi_i(h,\bm{x})|\leq |\phi_j(h,\bm{x})|\iff r_i[\,|\bm{\phi}(h, \myvec{x})|\,] \leq r_j [\,|\bm{\phi}(h, \myvec{x})|\,]$. Therefore,
\begin{align*}
    i \preceq_{\epsilon, \myvec{x}} j &\Rightarrow
    \forall\,h \in \Rashomon \,\,\,\,\,|\phi_i(h, \myvec{x})| \leq |\phi_j(h, \myvec{x})|\\
    &\Rightarrow
    \forall\,h \in E \,\,\quad\qquad|\phi_i(h, \myvec{x})| \leq |\phi_j(h, \myvec{x})|\\
    &\Rightarrow
    \forall\,h \in E \quad\,\,\,\,\, 
    r_i[\,|\bm{\phi}(h, \myvec{x})|\,] \leq 
    r_j[\,|\bm{\phi}(h, \myvec{x})|\,]\\
    &\Rightarrow
    \frac{1}{M}\sum_{k=1}^M r_i[\,|\bm{\phi}(h_k, \myvec{x})|\,] \leq 
    \frac{1}{M}\sum_{k=1}^M r_j[\,|\bm{\phi}(h_k, \myvec{x})|\,],
\end{align*}
which implies that the order relation is also supported by the mean ranks.
\end{proof}

\newpage
\subsection{Random Forests}\label{app:proofs:random_forest}

\begin{proposition}[\textbf{Proposition \ref{prop:rf_optim}}]
Let $\mathcal{T}:= \{\tree_s\}_{s=1}^M$ be a set of $M$ trees, $\Hypo_{m:}$ be the set of all
subsets of at least $m$ trees from $\mathcal{T}$, and
$\phi:\Hypo_{m:}\rightarrow \R$ be a linear functional, then $\min_{h\in \Hypo_{m:}} \phi(h)$
amounts to averaging the $m$ smallest values of 
$\phi(\tree_s)$ for $s=1,2,\ldots M$.
\end{proposition}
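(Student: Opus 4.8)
The plan is to reduce the problem to a one-dimensional statement about sorted scalars. By linearity of $\phi$, any model $h = \frac{1}{k}\sum_{\tree\in T}\tree \in \Hypo_{m:}$ with $T\subseteq\mathcal{T}$ and $|T| = k \geq m$ satisfies $\phi(h) = \frac{1}{k}\sum_{\tree\in T}\phi(\tree)$, so the functional value depends only on the scalars $\phi(\tree_s)$ and on which subset $T$ is selected. I would therefore relabel the trees so that their scores are sorted, writing $v_s := \phi(\tree_s)$ with $v_1 \leq v_2 \leq \cdots \leq v_M$, and the task becomes minimizing $\frac{1}{k}\sum_{s\in T} v_s$ over all index sets $T$ with $|T| = k \geq m$.

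First I would fix the cardinality $k$ and argue that, among all subsets of size exactly $k$, the minimum of $\frac{1}{k}\sum_{s\in T} v_s$ is attained by taking the $k$ smallest scores, namely $T=\{1,\dots,k\}$, giving the value $\bar{v}_k := \frac{1}{k}\sum_{s=1}^k v_s$. This is a routine greedy/exchange argument: replacing any chosen index by a smaller unchosen one never increases the sum, so the sorted prefix is optimal. Consequently $\min_{h\in\Hypo_{m:}}\phi(h) = \min_{m \leq k \leq M}\bar{v}_k$, using that $\Hypo_{m:}=\cup_{k=m}^M\Hypo_k$.

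The key step is then to show that the running average $\bar{v}_k$ is non-decreasing in $k$, so that the outer minimum over $k$ is attained at the smallest admissible cardinality $k=m$. I would prove this from the recursion $\bar{v}_{k+1} = \frac{k\,\bar{v}_k + v_{k+1}}{k+1}$: since the scores are sorted, the appended term dominates the current average, $v_{k+1} \geq \bar{v}_k$, and substituting gives $\bar{v}_{k+1} \geq \bar{v}_k$. Combining the two steps yields $\min_{h\in\Hypo_{m:}}\phi(h) = \bar{v}_m$, the average of the $m$ smallest values of $\phi(\tree_s)$, which is exactly the claim.

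I expect the only genuinely subtle point to be this monotonicity of the running average, since the tension is that enlarging $T$ adds more (and larger) terms yet divides by a larger denominator, so it is not a priori obvious which effect wins. The observation that \emph{appending a value at least as large as the current mean cannot decrease the mean} resolves it cleanly, and it is precisely what pins the optimum at $k=m$ rather than at some larger cardinality.
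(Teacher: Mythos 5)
Your proof is correct and follows essentially the same route as the paper's: first an exchange argument showing that, at fixed cardinality $k$, the sorted prefix minimizes the average, then the monotonicity of the prefix averages to pin the optimum at $k=m$. The only cosmetic difference is that you establish monotonicity by a one-step induction via $\bar{v}_{k+1} = \frac{k\,\bar{v}_k + v_{k+1}}{k+1}$ with $v_{k+1}\geq \bar{v}_k$, whereas the paper compares $C_{m'}$ to $C_m$ directly for arbitrary $m'>m$ by bounding each added score below by the mean of $C_m$ --- the same underlying observation.
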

\begin{proof}
    We can compute the linear functional
    on every tree $\{\phi(\tree_s)\}_{s=1}^M$ and store the
    indices of the $m$ smallest ones in a set $C_m$ s.t.
    $|C_m|=m$ and 
    \begin{equation}
        s\in C_m\,\, \text{and}\,\, s'\notin C_m \Rightarrow \phi(\tree_s) \leq \phi(\tree_{s'}).
        \label{eq:partition}
    \end{equation}
    Now, to prove to proposition, we must show that
    $\phi(\frac{1}{m}\sum_{s\in C_m} \tree_s) \leq \phi(h)\,\,\forall h\in \Hypo_{m:}$. Since
    $
    \min_{h\in \Hypo_{m:}} \phi(h) = \min_{k=m,\ldots,M}\min_{h\in \Hypo_k} \phi(h),
    $
    the proof can be done in two parts:
    first for a fixed $k$ we prove that 
    $\phi(\frac{1}{k}\sum_{s\in C_k} \tree_s) \leq \phi(h)\,\,\forall h\in \Hypo_k$ and secondly prove that 
    $\argmin_{k=m,\ldots,M} \phi(\frac{1}{k}\sum_{s\in C_k} \tree_s) =m$.
    \paragraph{Part 1}
    By linearity
    $\phi(\frac{1}{k}\sum_{s\in C_k} \tree_r) =
    \frac{1}{k}\sum_{s\in C_k} \phi(\tree_r)$. Also, remember that any model $h\in\mathcal{H}_k$ is associated to a subset $C'_k$ of $k$ seeds \textit{i.e.} $h = \frac{1}{k}\sum_{s\in C'_k}\tree_r$. Importantly, since $C_k$ and $C'_k$ have the same size, 
    the two sets $C_k\setminus C'_k$ and $C'_k\setminus C_k$
    have a one-to-one correspondence. We get
    \begin{align*}
        \frac{1}{k}\sum_{s\in C_k} \phi(\tree_s)&=
        \frac{1}{k}\bigg(\sum_{s\in C_k\cap C'_k}\phi(\tree_s)+
        \sum_{s\in C_k\setminus C'_k}\phi(\tree_s)\bigg)\\
        &\leq \frac{1}{k}\bigg(\sum_{s\in C_k\cap C'_k}\phi(\tree_s)+
        \sum_{s'\in C'_k\setminus C_k}\phi(\tree_{s'})\bigg) \tag{cf. Equation \ref{eq:partition}}\\
        &= \frac{1}{k}\sum_{s\in C'_k}\phi(\tree_s)=
        \phi\bigg( \frac{1}{k}\sum_{s\in C'_k} \tree_s\bigg)=\phi(h).
    \end{align*}
    
    \paragraph{Part 2}
    We now prove that $\argmin_{k=m,\ldots,M} \phi(\frac{1}{k}\sum_{s\in C_k} \tree_s) =m$. The key insight is that given
    $m'>m$, the set $C_m$ contains the $m$ smallest
    elements of $C_{m'}$. We get
    \begin{align*}
        \frac{1}{m'}\sum_{s\in C_{m'}} \phi(\tree_s)&=
        \frac{1}{m'}\bigg(\sum_{s\in C_m} \phi(\tree_s)
        +\sum_{s'\in C_{m'}\setminus C_m} \phi(\tree_{s'})\bigg)\\
        &\geq
        \frac{1}{m'}\bigg(\sum_{s\in C_{m}} \phi(\tree_s)
        +\sum_{s'\in C_{m'}\setminus C_m} \bigg[\frac{1}{m}\sum_{s\in C_m} \phi(\tree_s)\bigg]\,\bigg)\\
        &=
        \frac{1}{m'}\bigg(\sum_{s\in C_{m}} \phi(\tree_s)
        +\frac{m'-m}{m}\sum_{s\in C_m} \phi(\tree_s)\bigg)\\
        &=
        \frac{1}{m'}\frac{m'}{m}\sum_{s\in C_m} \phi(\tree_s)
        = \frac{1}{m}\sum_{s\in C_m} \phi(\tree_s),
    \end{align*}
    which ends the proof.
\end{proof}

\section{Optimization}\label{app:optim}
\subsection{Optimization over a Ellipsoid}\label{app:optim:ellipsoid}
\subsubsection{Linear Objective}\label{app:optim:ellipsoid:linear}

We study the optimization of a linear
function over an ellipsoid
\begin{equation}
    \begin{aligned}
        \max_{\bm{\omega}}\quad & \bm{a}^T \bm{\omega} \\
        \textrm{s.t.} \quad & (\bm{\omega}-\leastsq)^T\bm{A}(\bm{\omega}-\leastsq)
        \leq  \epsilon - \emploss{S}{\leastsq},
    \end{aligned}
\label{eq:linear_opt_ellipsoid}
\end{equation}
which is necessary to compute the local feature attribution consensus on the Rashomon
Set of Additive Regression and Kernel Ridge Regression. To lighten the notation, we will introduce the
variable $\epsilon':=\epsilon - \emploss{S}{\leastsq}$. Solving Equation \ref{eq:linear_opt_ellipsoid}
can be done efficiently with a Cholesky decomposition of $\bm{A}=\bm{C} \bm{C}^T$,
which we know exists since $\bm{A}$ is symmetric positive definite. We also have
$\bm{A}^{-1} = (\bm{C}^{-1})^T \bm{C}^{-1}$. Now, it is always possible to map an ellipsoid back to 
a sphere by defining a new variable
\begin{equation}
    \bm{z} :=  \bm{C}^T(\bm{\omega}-\leastsq),
    \label{eq:change_of_variable}
\end{equation}
see Figure \ref{fig:ellipsoid_to_sphere}. Applying the inverse change of
variable to $\bm{\omega}$ in Equation \ref{eq:linear_opt_ellipsoid}, we get

\begin{figure}[t]
    \centering
    \begin{tikzpicture}
    \tikzset{>=latex}
    \draw[fill=white!95!black]  (-7.5,0) ellipse (1 and 1);
    \node at (-7.5,-3) {$\bm{z}^T\bm{z}\leq \epsilon'$};
    
    \draw[rotate around={30:(1.5,1)},fill=white!95!black]  
    (1.5,1) ellipse (1.25 and 0.5);
    \node at (1.5,1) {$\leastsq$};
    \node at (2.5,-3) {$(\bm{\omega} - \leastsq)^T\bm{A}(\bm{\omega} - \leastsq)\leq \epsilon'$};

    \draw[->]  (-10,0) -- (-5, 0) node[anchor=north] {$z_1$};
    \draw[->]  (-7.5,-1.5) -- (-7.5, 2.5) node[anchor=east] {$z_2$};
    
    \draw[->]  (0,0) -- (5, 0) node[anchor=north] {$\omega_1$};
    \draw[->]  (2.5,-1.5) -- (2.5, 2.5) node[anchor=east] {$\omega_2$};
    
    \draw[<-]  plot[smooth, tension=.7] coordinates {(-4.5,0.5) (-2.5,1) (-0.5,0.5)};
    \draw[->]  plot[smooth, tension=.7] coordinates {(-4.5,-0.5) (-2.5,-1) (-0.5,-0.5)};
    \node[->] at (-2.5,1.5) {$\bm{z} = \bm{C}^T(\bm{\omega} - \leastsq)$};
    \node[->] at (-2.5,-1.5) {$\bm{\omega} = (\bm{C}^{-1})^T\bm{z} +\leastsq$};

\end{tikzpicture}
    \caption{Mapping an ellipsoid to the unit sphere.}
    \label{fig:ellipsoid_to_sphere}
\end{figure}


\begin{equation}
    \begin{aligned}
        \bm{a}^T \bm{\omega} &= 
        \bm{a}^T\big(\,(\bm{C}^{-1})^T\bm{z} + \leastsq\,\big)\\
        &= \underbrace{\bm{a}^T(\bm{C}^{-1})^T}_{\bm{a}'^T}\bm{z} + 
        \bm{a}^T\leastsq,
    \end{aligned}
\end{equation}
leading to the optimization problem
\begin{equation}
    \begin{aligned}
        \max_{\bm{z}}\quad & \bm{a}'^T \bm{z} + \bm{a}^T\leastsq\\
        \textrm{s.t.} \quad & \bm{z}^T\bm{z}
        \leq \epsilon.
    \end{aligned}
\label{eq:linear_opt_ellipsoid_v2}
\end{equation}
Importantly, the optimization problems of Equations \ref{eq:linear_opt_ellipsoid} and
\ref{eq:linear_opt_ellipsoid_v2} both reach the same optimal values. Since the objective $\bm{a}'^T \bm{z}$ is a 
scalar product, it reaches its maximum objective value $\sqrt{\epsilon'}\|\bm{a}'\|$ when the vector $\bm{z}$ points 
in the same direction as $\bm{a}'$. The minimum and maximum values of the objective are therefore
$\pm\sqrt{\epsilon-\emploss{S}{\leastsq}}\|\bm{a}'\| + \bm{a}^T\leastsq$. 


\subsubsection{Quadratic Objective}\label{app:optim:ellipsoid:quadratic}

We now investigate the optimization of a quadratic form over an ellipsoid
\begin{equation}
    \begin{aligned}
        \min_{\bm{\omega}}\quad & 
        \bm{\omega}_i^T \bm{B}_i \bm{\omega}_i-\bm{\omega}_j^T \bm{B}_j \bm{\omega}_j\\
        \textrm{s.t.} \quad & 
        (\bm{\omega}-\leastsq)^T\bm{A}(\bm{\omega}-\leastsq)
        \leq \epsilon'.
    \end{aligned}
\end{equation}
Letting $\bm{\omega}_{ij}\in \R^{M_i+M_j}$ be the concatenation of $\bm{\omega}_i$ and $\bm{\omega}_j$, 
and relabelling the least-square $\widehat{\bm{\omega}}:=\leastsq$, we express the optimization problem as
\begin{equation}
    \begin{aligned}
        \min_{\bm{\omega}_{ij}}\quad & 
        \bm{\omega}_{ij}^T \,\bm{B}_{ij}\, \bm{\omega}_{ij}\\
        \textrm{s.t.} \quad & 
        (\bm{\omega}_{ij}-\widehat{\bm{\omega}}_{ij})^T\bm{A}_{ij}(\bm{\omega}_{ij}-\widehat{\bm{\omega}}_{ij})
        \leq \epsilon',
    \end{aligned}
\end{equation}
where $\bm{B}_{ij}$ is a block-diagonal matrix containing $\bm{B}_i$ and $-\bm{B}_j$, 
and $\bm{A}_{ij}$ is the Schur complement of $\bm{A}$. The Schur complement is computed because we must 
project the Rashomon Set (which is an ellipsoid in $\R^{1+\sum_j M_j}$) onto the subspace $\R^{M_i+M_j}$ in which 
$\bm{\omega}_{ij}$ resides. Importantly, the projection of an ellipsoid on a subspace is still an 
ellipsoid whose covariance matrix is the Schur complement. Taking the Cholesky decomposition
$\bm{A}_{ij}=\bm{C}\bm{C}^T$ and using the change of variable in Equation \ref{eq:change_of_variable}, we get
\begin{equation}
    \bm{\omega}_{ij}^T \,\bm{B}_{ij}\, \bm{\omega}_{ij} = 
        (\bm{z}_{ij} - \widehat{\bm{z}}_{ij})^T \bm{B}_{ij}'
        (\bm{z}_{ij} - \widehat{\bm{z}}_{ij}),
\end{equation}
with $\bm{B}_{ij}'= \bm{C}^{-1} \bm{B}_{ij} (\bm{C}^{-1})^T$ and 
$\widehat{\bm{z}}_{ij}:=-\bm{C}^T\widehat{\bm{\omega}}_{ij}$. Thus, we can express the optimization in standard
TRS form
\begin{equation}
    \begin{aligned}
        \min_{\bm{z}_{ij}}\quad & 
         (\bm{z}_{ij} - \widehat{\bm{z}}_{ij})^T \bm{B}_{ij}'
        (\bm{z}_{ij} - \widehat{\bm{z}}_{ij})\\
        \textrm{s.t.} \quad & 
        \bm{z}_{ij}^T\bm{z}_{ij}
        \leq \epsilon'
    \end{aligned}
    \label{eq:TRS_standard}
\end{equation}
and solve the following necessary optimality conditions adapted from 
Corollary 7.2.2 in \cite[Section 7.2]{conn2000trust}.
\begin{corollary}[TRS Necessary Optimality Condition]
    Letting $\{\sigma_k\}_k$ be the eigenvalues of the matrix $\bm{B}_{ij}'$, any global
    minimizer $\bm{z}_{ij}$ of the TRS (Equation \ref{eq:TRS_standard}) must satisfy
    \begin{align}
        \bm{B}_{ij}'(\bm{z}_{ij} - \widehat{\bm{z}}_{ij}) &= \lambda \bm{z}_{ij}\label{eq:KKT_1}\\
        \lambda(\bm{z}_{ij}^T\bm{z}_{ij}-\epsilon')&=0\label{eq:KKT_2},
    \end{align}
    for some $\lambda \geq \max\{0\}\!\cup \!\{\minus\sigma_k\}_k$. 
    If $\lambda > \max\{\minus\sigma_k\}_k$ then $\bm{z}_{ij}$ is the \textbf{unique} global minimizer.
\end{corollary}
To solve these conditions, we diagonalize $\bm{B}_{ij}'=\bm{V}\bm{D}\bm{V}^T$,
define $\bm{\alpha}=\bm{V}^T \bm{z}_{ij}$ and $\widehat{\bm{\alpha}}=\bm{\bm{V}}^T \widehat{\bm{z}}_{ij}$.
Then, assuming $\lambda > \max\{-\sigma_k\}_k$, we rewrite Equation \ref{eq:KKT_1} as
\begin{equation}
    \bm{\alpha} = (\bm{D}+\lambda \bm{I})^{-1}\bm{D}\widehat{\bm{\alpha}}.
    \label{eq:solve_alpha}
\end{equation}
Also assuming $\lambda > 0$, Equation \ref{eq:KKT_2} becomes
$\bm{\alpha}^T\bm{\alpha}=\epsilon'$, which combined with Equation \ref{eq:solve_alpha} yields
\begin{equation}
    q(\lambda):=\sum_k \frac{\sigma_k^2}{(\sigma_k+\lambda)^2}\widehat{\alpha}_k = \epsilon'.
\label{eq:non_linear_lambda}
\end{equation}
We finally solve the non-linear Equation $q(\lambda)=\epsilon'$ for 
$\lambda > \max\{0\}\!\cup \!\{\minus\sigma_k\}_k$ with the bissection algorithm. From the resulting $\lambda$ we can determine the TRS solution $\bm{z}_{ij}$.
\newline

If we do not assume $\lambda > \max\{0\}\!\cup \!\{\minus\sigma_k\}_k$, 
there are two additional cases to consider:
\begin{enumerate}
    \item The solution is inside the ball ($\lambda=0$).
    \item The so-called ``Hard Case'' where $\lambda=\max\{\minus\sigma_k\}_k$ 
    and $(\bm{D}+\lambda \bm{I})$ becomes singular.
\end{enumerate}
For simplicity, we do not address them in this Appendix.
We instead refer to \citep[Section 7.3]{conn2000trust} for discussion on these technicalities.

\subsection{Combinatorial Optimization and Relaxations}\label{app:optim:combinatorial}

\subsubsection{Min/Max of Global Importance}

In this section we discuss the combinatorial optimization problems
that occur when computing the global feature importance over the Rashomon
Set of Random Forests. As a reminder, we have defined
\begin{equation}
    \Hypo_m := \bigg\{\frac{1}{m}\sum_{\tree \in T} \tree \,\,:\,\, 
    T \subseteq \mathcal{T} \,\,\,\,\text{and}\,\,\,\, |T|=m\bigg\},
\end{equation}
as the set of RFs containing $m$ trees. An alternative way to represent such a set is to introduce
binary variables $\bm{z}\in \{0, 1\}^M$ with $\sum_{s=1}^Mz_s =m$ 
and view all RFs from $\Hypo_m$ as $\frac{1}{m}\,\sum_{s=1}^M z_s t_s$ for some $\bm{z}$.

Now letting $\phi_j$ be the SHAP local feature attribution of feature $j$, we wish to find the minimum and 
maximum values of the global feature importance 
$\Phi^{[1]}_j(h):=\frac{1}{N}\sum_{i=1}^N |\,\phi_j(h, \myvec{x}^{(i)})\,|$ across all RFs with $m$ trees
\begin{equation}
    \begin{aligned}
        \minormax_{\bm{z}}\quad & 
        \frac{1}{Nm}\sum_{i=1}^N\bigg| \,\sum_{s=1}^M z_s\,\phi_j(t_s, \bm{x}^{(i)})\,\bigg|\\
        \textrm{s.t.} \quad & 
        \bm{z}\in \{0, 1\}^M \text{ and } \sum_{s=1}^Mz_s =m.
    \end{aligned}
    \label{eq:combinatoric_optim}
\end{equation}
These are non-linear combinatorial problems that are extremely hard to solve.
For that reason, we will provide quick approximate solutions based on a Linear relaxation of 
Equation \ref{eq:combinatoric_optim}. The first step of the relaxation is to enlarge the 
domain of $\bm{z}$ to allow fractional values. 
\begin{equation}
    \begin{aligned}
        \minormax_{\bm{z}}\quad & 
        \sum_{i=1}^N\bigg| \,\sum_{s=1}^M z_s\,\phi_j(t_s, \bm{x}^{(i)})\,\bigg|\\
        \textrm{s.t.} \quad & 
        \bm{z}\in [0, 1]^M \text{ and } \sum_{s=1}^M z_s =m.
    \end{aligned}
    \label{eq:combinatoric_optim_relax}
\end{equation}
The corresponding domain is a polytope and so it is compatible with Linear Programs. 
The second step of the Linear relaxation is to rephrase the absolute value function $|\cdot|$ as a 
Linear Program\newline

\noindent\begin{minipage}{.5\linewidth}
    \begin{equation}
        \begin{aligned}
            |\alpha| = \min_{\beta} \quad & \beta\\
            \textrm{s.t.} 
            \quad & \alpha \leq \beta\\
            \quad & \!\!\!\minus\alpha \leq \beta\\
        \end{aligned}
        \label{eq:dual_1}
    \end{equation}
    \end{minipage}%
\begin{minipage}{.5\linewidth}
    \begin{equation}
        \begin{aligned}
            |\alpha| = \max_{\beta} \quad & \beta \alpha\\
            \textrm{s.t.} 
            \quad & \!\!\!\raisebox{1pt}{\minus} 1 \leq \beta\\
            \quad & \beta \leq 1\\
        \end{aligned}
        \label{eq:dual_2}
    \end{equation}
\end{minipage}
\vspace{0.5cm}

After we get a solution to the relaxation of Equation \ref{eq:combinatoric_optim_relax}, 
we project $\bm{z}$ back on $\{0, 1\}^M$ using the following 
heuristic: if there are $o$ components with $z=1$, we select $M-o$ fractional values in decreasing order 
and set them to one. The other fractional values are set to zero. 
For example, if we have $M=3$ and find a solution $\bm{z} = [0, 1, 1, 0.75, 0.25]$ to the relaxation, 
we would discretize the solution to get $\bm{z} = [0, 1, 1, 1, 0]$. This heuristic may be 
sub-optimal but our goal is to provide quick approximate solutions.

\paragraph{Maximize}
By leveraging Equation \ref{eq:dual_2}, we can reformulate Equation 
\ref{eq:combinatoric_optim_relax} as
\begin{equation}
    \begin{aligned}
        \max_{\bm{z}}\, \sum_{i=1}^N\bigg| \,\sum_{s=1}^M z_s\,\phi_j(t_s, \bm{x}^{(i)})\,\bigg|
        &=\max_{\bm{z}}\, \sum_{i=1}^N\max_{\beta_i\in [-1, 1]} 
        \,\beta_i\sum_{s=1}^M \,z_s\phi_j(t_s, \bm{x}^{(i)})\\
        &=\max_{\bm{z}, \bm{\beta}}\,\sum_{i=1}^N\sum_{s=1}^M z_s\,\beta_i\phi_j(t_s, \bm{x}^{(i)})\\
        &= \max_{\bm{z}, \bm{\beta}} \bm{\beta}^T \bm{B} \bm{z},
    \end{aligned}
    \label{eq:Bilinear}
\end{equation}
where $\bm{z}$ and $\bm{\beta}$ are each restricted to a separate polytope and 
$B_{is} \equiv \phi_j(t_s, \myvec{x}^{(i)})$. Equation \ref{eq:Bilinear} is known as a 
Bilinear Program which is a non-convex optimization problem that 
can be solved to local optima via the coordinate ascent algorithm \citep{nahapetyan2009bilinear}. 
In our setting, the output of the coordinate ascent algorithm will already respect $\bm{z}\in \{0, 1\}^M$ since 
$\max_{\bm{z}} \bm{\beta}^T \bm{B} \bm{z}$ under the constraints on $\bm{z}$ yields $z_s=1$ for the 
$m$ smallest values of $\sum_{i=1}^N \beta_i \phi_j(t_s, \myvec{x}^{(i)})$ and $z_s=0$ for the others.

\paragraph{Minimize}

By leveraging Equation \ref{eq:dual_1}, we can reformulate Equation \ref{eq:combinatoric_optim_relax}
as
\begin{equation}
    \begin{aligned}
        \min_{\bm{z}, \bm{\beta}}& \quad \sum_{i=1}^N\beta_i\\
        \text{s.t.}&\quad  \bm{z}\in [0, 1]^M \text{ and } \sum_{s=1}^M z_s =m\\
        &\quad  \sum_{s=1}^M z_s\,\phi_j(t_s, \bm{x}^{(i)}) \leq \beta_i\\
        &\quad  \!\!\!\minus\sum_{s=1}^M z_s\,\phi_j(t_s, \bm{x}^{(i)}) \leq \beta_i,\\
    \end{aligned}
    \label{eq:Linear}
\end{equation}
which is a Linear Program with $N+M$ variables and $2(N+M)+1$ constraints that we can solve efficiently
if $N$ and $M$ are not too large. However, the solution of this LP can have fractional values so we must 
use the discretization heuristic to get the final solution $\bm{z}\in\{0,1\}^M$. In our experiments on 
Adult-Income, about $0.25\%$ of the non-null components of $\bm{z}$ would be fractional so we suspect our
discretization heuristic provided good solutions in that setting.

Now that we have discussed approximate schemes to get the min/max global feature importance across $\Hypo_m$,
we are left with addressing \emph{relative} importance relations between features.

\subsubsection{Global Relative Importance}

To assert a consensus on global relative importance (cf. \textbf{Definitions 
\ref{def:partial_order_global} \& \ref{def:computing_global_consensus_optim}})
we must solve $\minormax_h \Phi^{[1]}_j(h) - \Phi^{[1]}_k(h)$.
However, as previously discussed, we cannot guarantee to minimize/maximize $\Phi^{[1]}_j(h)$ 
to optimality for Random Forests. Consequently, we cannot guarantee to solve 
$\minormax_h\Phi^{[1]}_j(h) - \Phi^{[1]}_k(h)$ to optimality either. This is a critical 
issue because the resulting partial order may not be transitive.
Our solution is to create an ensemble $E$ containing
\begin{enumerate}
    \item Approximates of $\argminormax_h\Phi_j^{[1]}(h)$ for $1\leq j\leq d$.
    \item Approximates of $\argminormax_h\Phi_j^{[1]}(h)-\Phi_k^{[1]}(h)$ for $1\leq j<k\leq d$.
\end{enumerate}
After, we assert a consensus among all models in $E\subset\Hypo_{m(\epsilon):}$
leading to the partial order
\begin{equation}
 j \,\,\widehat{\preceq_\epsilon}\,\, k \iff 
    \forall\,h \in E\  \,\,\,\Phi_j(h) \leq \Phi_k(h).
\end{equation}
We underestimate the diversity of our models but the resulting partial order 
of global importance is guaranteed to be transitive. To approximate
$\argminormax_h\Phi^{[1]}_j(h) - \Phi^{[1]}_k(h)$, we propose to define the set
\begin{equation}
S_{jk} := \{i\in [N] : \forall h\in \Hypo_m\,\,\texttt{sign}[\phi_j(h, \bm{x}^{(i)})]=s_{ij} \text{ and }
\texttt{sign}[\phi_k(h, \bm{x}^{(i)})]=s_{ik}\}
\end{equation}
representing all data instances whose local attributions for features $j$ and $k$ has a 
consistent sign across the $\Hypo_m$. Then we solve
\begin{equation}
    \begin{aligned}
        &\argminormax_{\bm{z}} \sum_{i\in S_{jk}} 
        \bigg| \,\sum_{s=1}^M z_s\,\phi_j(t_s, \bm{x}^{(i)})\,\bigg|-
        \bigg| \,\sum_{s=1}^M z_s\,\phi_k(t_s, \bm{x}^{(i)})\,\bigg|\\
        &=
        \argminormax_{\bm{z}} \sum_{i\in S_{jk}} 
        s_{ij} \,\sum_{s=1}^M z_s\,\phi_j(t_s, \bm{x}^{(i)})-
        s_{ik}\,\sum_{s=1}^M z_s\,\phi_k(t_s, \bm{x}^{(i)})\\
        &=
        \argminormax_{\bm{z}} \sum_{s=1}^M z_s 
        \bigg(\sum_{i\in S_{jk}} 
        s_{ij} \phi_j(t_s, \bm{x}^{(i)})-
        s_{ik}\,\phi_k(t_s, \bm{x}^{(i)}) \bigg)\\
        &=
        \argminormax_{\bm{z}} \sum_{s=1}^M z_s a_{jks}
    \end{aligned}
\end{equation}
which is a linear function of $\bm{z}$ thus we can leverage \textbf{Proposition \ref{prop:rf_optim}}.

\newpage
\section{Ill-Defined Gaps}\label{app:gap}

In this appendix, we investigate instances $\bm{x}^{(i)}$ whose gap is ill-defined given the underspecification 
of the ML task. That is, there exists two models $h_1,h_2\in \Rashomon$ which assign gaps 
$G(h_1, \myvec{x}^{(i)})<0$ and $G(h_2, \myvec{x}^{(i)})>0$. When this occurs, it does not make sense to compute 
local feature attributions at $\myvec{x}^{(i)}$ since the different models end up answering different 
contrastive questions. We now present instances with an ill-defined gap and show that redefining the 
background $\background$ can help make these points explainable.

\subsection{Kaggle-Houses}\label{app:gap:kaggle}

As a reminder, the background $\background$ employed on Kaggle-Houses was the empirical distribution over the
training data. Figure \ref{fig:kaggle_gaps} shows the distributions of predictions for instance whose gap 
is well-defined or ill-defined across the Rashomon Set. We note that instances whose gap does not have a consistent sign
tend to have predictions $h_S(\bm{x}^{(i)})$ near the baseline $\E_{\bm{z}\sim\background}[h_S(\bm{z})]$ so that the Gap 
$G(h_S, \myvec{x}^{(i)})$ is very small. This could explain why models with similar performance can assign 
different signs to the gap. Importantly, model underspecification warns us that the contrastive question is 
not well-posed on these houses and it would be better to use another background $\background'$ when explaining 
them. We redefined $\background'$ to be the empirical distribution over all houses with a predicted price below 
the first quartile. Consequently, the prediction gaps increased and 97\% of the houses that were previously unexplainable 
suddenly became explainable.

\begin{figure*}[t]
    \centering
    \includegraphics[width=0.5\linewidth]
    {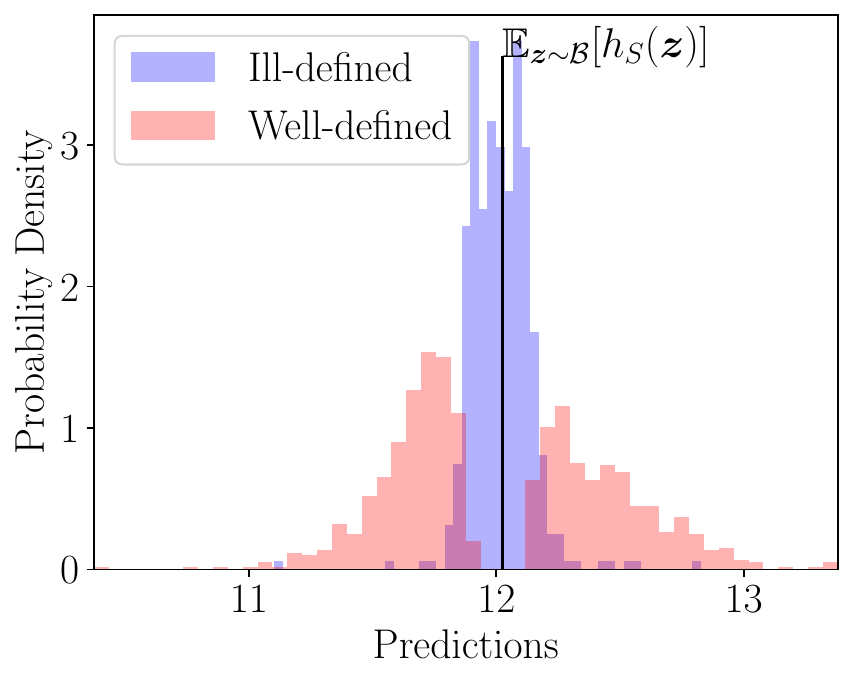}
    \caption{
    Distributions of predictions for houses with ill-defined and 
    well-defined gaps across the Rashomon Set of Kaggle-Houses. 
    The background $\background$ is the empirical distribution over the whole
    training data.}
    \label{fig:kaggle_gaps}
\end{figure*}

\newpage
\subsection{Adult-Income}\label{app:gap:adult}

As a reminder, the background $\background$ employed on Adult-Income was 500 instances sampled uniformly
at random from the training data. Figure \ref{fig:adult_gaps} shows the distributions of predictions for 
individuals whose gap is well-defined or ill-defined across the Rashomon Set. Again, individuals whose 
gap is ill-defined tend to have predictions $h_\text{ref}(\bm{x}^{(i)})$ near 
the baseline $\E_{\bm{z}\sim\background}[h_\text{ref}(\bm{z})]$ so that the Gap 
$G(h_\text{ref}, \myvec{x}^{(i)})$ is small. Once more, we replace the background and
re-explain those inputs. Letting $\background'$ be 500 uniformly-chosen adults who were predicted to make
more than 50K (\ie $h_\text{ref}(\bm{x}^{(i)})>0.5$), the gaps became highly negative 
and 100\% of the previously unexplainable individuals were now explainable.

\begin{figure*}[t]
    \centering
    \includegraphics[width=0.53\linewidth]
    {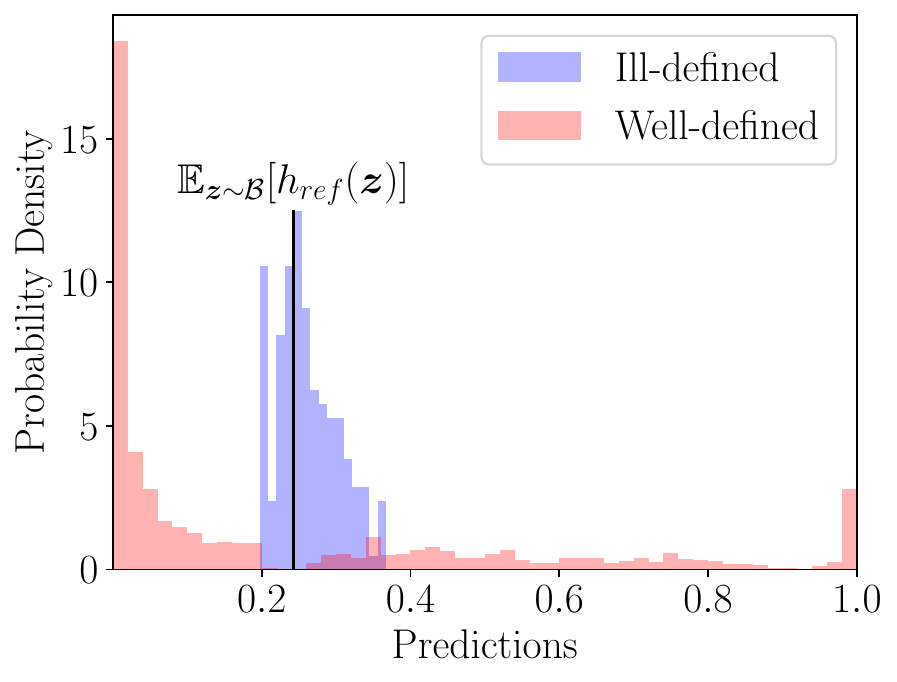}
    \caption{
    Distributions of predictions for instance with ill-defined and 
    well-defined gaps across the Rashomon Set for Adult-Income.
    The background $\background$ is the empirical distribution over 500 uniform samples
    from the training data.
    }
    \label{fig:adult_gaps}
\end{figure*}

\newpage
\vskip 0.2in
\bibliography{biblio}
\newpage
\end{document}